\documentclass{article}


\def\arxivVersion{} 
\newif\ifchecklist
\checklistfalse

\ifdefined\arxivVersion
    \usepackage{iclr2026_conference,times}
\else
    \usepackage{neurips_2025} 
\fi



\usepackage[utf8]{inputenc} 
\usepackage[T1]{fontenc}    
\usepackage{hyperref}       
\usepackage{url}            
\usepackage{booktabs}       
\usepackage{amsfonts}       
\usepackage{amsmath}
\usepackage{amssymb}
\usepackage{nicefrac}       
\usepackage{microtype}      
\usepackage{xcolor}         

\usepackage[english,american]{babel}

\usepackage{url}
\usepackage{dsfont}
\usepackage{algorithm}
\usepackage{algorithmic}

\usepackage{xcolor}
\usepackage{transparent}
\usepackage{graphicx}
\graphicspath{{media/}}
\usepackage{tikz}
\usetikzlibrary{shapes,arrows}
\usetikzlibrary{positioning}

\usepackage{epstopdf}
\usepackage{units}
\usepackage{pgfplots}
\usepackage{subcaption}

\usepackage{pstool}

\graphicspath{{media/}}
\newcommand{\executeiffilenewer}[3]{%
 \ifnum\pdfstrcmp{\pdffilemoddate{#1}}%
 {\pdffilemoddate{#2}}>0%
 {\immediate\write18{#3}}\fi%
}

\newcommand{%
 \input{.pdf_tex}%
}[1]{%
 \input{#1.pdf_tex}%
}



\usepackage{enumitem}


\newtheorem{theorem}{Theorem}[section]
\newtheorem{lemma}[theorem]{Lemma}
\newtheorem{proposition}[theorem]{Proposition}
\newtheorem{corollary}[theorem]{Corollary}
\newtheorem{assumption}{Assumption}

\newenvironment{proof}[1][Proof]{\begin{trivlist}
\item[\hskip \labelsep {\bfseries #1}]}{\end{trivlist}}
\newcommand*{\qed}[1][$\square$]{%
\leavevmode\unskip\penalty9999 \hbox{}\nobreak\hfill
    \quad\hbox{#1}%
}  

\pgfplotsset{every tick label/.append style={font=\footnotesize}}
\pgfdeclarelayer{background}
\pgfsetlayers{background,main}



\newcommand{\argmin}{\mathop{\mathrm{argmin}}}

\newcommand{\T}{^\top}
\newcommand{\diid}{\sim}

\newcommand{\zhrev}[1]{#1}
\newcommand{\hzyrev}[1]{#1}
\newcommand{\mmrev}[1]{#1}

\newcommand{\revision}[1]{#1}

\newcommand{\zhrevision}[1]{#1}

\title{The Sample Complexity of Online Reinforcement Learning:
A Multi-model Perspective}

%

\author{%
  Michael~Muehlebach\\
  Max Planck Institute for Intelligent Systems\\
  Tuebingen, Germany \\
  \texttt{michaelm@tuebingen.mpg.de} \\
    \And
  Zhiyu~He \\
  Max Planck Institute for Intelligent Systems\\
  Tuebingen, Germany \\
  \texttt{zhiyu.he@tuebingen.mpg.de} \\
   \And
  Michael I. Jordan \\
  Inria Paris, France\\
  University of California, Berkeley, USA\\
  \texttt{jordan@cs.berkeley.edu} \\
}

\iclrfinalcopy
\begin{document}

\maketitle

\begin{abstract}
  We study the sample complexity of online reinforcement learning in the general \hzyrev{non-episodic} setting of nonlinear dynamical systems with continuous state and action spaces. Our analysis accommodates a large class of dynamical systems ranging from a finite set of nonlinear candidate models to models with bounded and Lipschitz continuous dynamics, to systems that are parametrized by a compact and real-valued set of parameters. In the most general setting, our algorithm achieves a policy regret of $\mathcal{O}(N \epsilon^2 + d_\mathrm{u}\mathrm{ln}(m(\epsilon))/\epsilon^2)$, where $N$ is the time horizon, $\epsilon$ is a user-specified discretization width, $d_\mathrm{u}$ the input dimension, and $m(\epsilon)$ measures the complexity of the function class under consideration via its packing number. In the special case where the dynamics are parametrized by a compact and real-valued set of parameters (such as neural networks, transformers, etc.), we prove a policy regret of $\mathcal{O}(\sqrt{d_\mathrm{u}N p})$, where $p$ denotes the number of parameters, recovering earlier sample-complexity results that were derived for \emph{linear} \emph{time-invariant} dynamical systems. While this article focuses on characterizing sample complexity, the proposed algorithms are likely to be useful in practice, due to their simplicity, their ability to incorporate prior knowledge, and their benign transient behaviors.
\end{abstract}

\section{Introduction}
Reinforcement learning describes the situation where a decision-maker chooses actions to control a dynamical system, which is unknown a priori, to optimize a performance measure. At the core of reinforcement learning is the fundamental dilemma between choosing actions that reveal information about the dynamics and choosing actions that optimize performance.  These are typically conflicting goals. We consider an online \zhrev{non-episodic} setting, where the decision-maker is required to learn continuously and is unable to reset the state of the dynamical system. This further introduces the challenge that the information received by the learner is correlated over time and hence, standard statistical tools cannot be applied directly. Despite these important challenges, we provide a suite of online reinforcement learning algorithms \mmrev{that build on Hedge-type updates \citep{cesaBianchi} and posterior sampling reinforcement learning \citep{osband2013more}}, \mmrev{while being straightforward to analyze, practically and theoretically relevant, and offering strong non-episodic, nonasymptotic, frequentist policy-regret guarantees that apply to continuous state-action systems.} The algorithms sample from a posterior over the different model candidates (or an approximation thereof), apply the corresponding ``certainty-equivalent'' policies \citep{mania2019certainty}, while carefully introducing enough excitation to ensure that the posterior distribution over models converges sufficiently rapidly. 

We consider three different settings. In the first setting, the decision-maker has access to a finite set of nonlinear candidate models that potentially describe the system dynamics (continuous state and action spaces). This setting is relevant for many practical engineering applications, where the choice of candidate models provides a natural way to incorporate prior knowledge. In this setting our online algorithm achieves a sample complexity of $\mathcal{O}(d_\mathrm{u}(\mathrm{ln}(N)+\mathrm{ln}(m))/\Delta)$ in terms of policy regret, where $N$ denotes the time horizon, $m$ the number of candidate models, $d_\mathrm{u}$ the input dimension, and $\hzyrev{\Delta > 0}$ a constant that characterizes the separation between models. In the second setting, we allow for any class of dynamical system, where the dynamics are given by a bounded set in a normed vector space. This includes, e.g., all bounded Lipschitz continuous functions with the supremum norm, or a bounded set of square integrable functions. By applying packing and covering arguments, we relate the second setting to the first one and derive corresponding policy-regret guarantees that take the form $\mathcal{O}(N \epsilon^2+ d_\mathrm{u}\mathrm{ln}(m(\epsilon))/\epsilon^2)$, where $\epsilon$ describes the discretization width and $m(\epsilon)$ the packing number, which measures the complexity of the function class \citep{wainwright}. In the third setting, we consider systems that are parametrized by a compact and real-valued set of parameters. This includes the situation where the dynamics are parametrized by neural networks, transformers, or other parametric function approximators, and we obtain a policy regret of $\mathcal{O}(\sqrt{d_\mathrm{u}Np})$, where $p$ describes the number of parameters. We further note that in the common situation where our function class is given by a linear combination of nonlinear feature vectors, which also encompasses linear dynamics as a special case, our algorithm is straightforward to implement, as it only requires sampling from a (truncated) Gaussian distribution at every iteration \citep{truncatedGaussian}. 

Our main contributions are summarized as follows:
\vspace*{-9pt}\begin{itemize}[leftmargin=10pt]
\itemsep0pt 
    \item We provide a suite of algorithms with nonasymptotic policy-regret guarantees for online reinforcement learning over continuous state and action spaces with nonlinear dynamics. Numerical results highlight that transients are benign and that our algorithms are likely to be useful in practice.
    \item Compared to earlier works on posterior sampling reinforcement learning \citep{osband2013more, abbasi2015bayesian, xu2022posterior, ouyang2017learning}, our algorithm introduces additional exploration and achieves \emph{frequentist} policy-regret guarantees. Our analysis operates under standard identifiability and persistence of excitation assumptions \citep{LjungIdentification,slotine} suitable for continuous state-action systems even near the stability boundary, which contrasts resampling strategies \citep{ouyang2017learning} or mixing assumptions \citep{xu2022posterior}. 
    \item  
    Results from earlier works on the linear-quadratic regulator \citep[see, e.g.,][]{dean2018regret,simchowitz2020naive} are recovered up to constants when specializing to linear dynamics. In contrast to the maximum-a-posteriori identification principle in these works, we rely on posterior sampling, resulting in a Hedge-type algorithm that is general and tractable for analysis.  
    \item Compared to earlier work in the adaptive control community \citep[see, e.g.][]{anderson2000multiple,hespanha2001multiple}, which focuses on asymptotic stability, boundedness, and deterministic systems, we consider stochastic systems and characterize \emph{nonasymptotic} \emph{performance}. We provide a nonasymptotic bound on the second moment of state trajectories and show that our estimation converges in finite time almost surely. 
    \item The work provides a powerful \emph{separation} principle that applies to nonlinear dynamics. Our algorithms separate optimal model identification and \emph{certainty-equivalent} control, simplifying policy evaluation, e.g., through predictive control or proximal policy optimization with a simulator. 
\end{itemize}
The decision-making problem considered here is central to machine learning and related disciplines, and there has been a great deal of prior work. We provide a short review of recent prior work that is closely related in the following paragraphs; a more detailed literature review can be found in App.~\ref{App:relWork}.

\hzyrev{The fundamental exploration-exploitation dilemma of reinforcement learning has been elegantly addressed from many angles.} \mmrev{Posterior sampling reinforcement learning is an algorithmic paradigm introduced by \cite{osband2013more,osband2014model,osband2016generalization,osband2017posterior}, which draws analogies to stochastic multi-armed bandits and Thompson sampling \citep{russo2018tutorial}. While these works initially focused on tabular Markov decision processes and episodic settings, extensions to the non-episodic setting have been achieved by \cite{abbasi2015bayesian,ouyang2017learning,theocharous2018scalar,xu2022posterior}. Our algorithmic idea broadly falls into the category of posterior sampling reinforcement learning, with the distinction of introducing additional excitation and bridging insights from statistics, online learning (Hedge), and control (dissipativity) to provide \emph{frequentist} policy-regret guarantees, whereas these works provide Bayesian guarantees. We further rely on persistence of excitation assumptions, which are common in system identification and statistics \citep{LjungIdentification,slotine}. These are weaker than the mixing assumptions of earlier works \citep[see, e.g.,][]{xu2022posterior}.}  

\hzyrev{Optimism in the face of uncertainty is another important principle to balance exploration and exploitation \citep{TorLattimore}. Early works by \cite{auer2006logarithmic,bartlett2009regal,jaksch2010near} focus on the tabular setting, while a plethora of recent works \citep{abbasi2011regret,cohen2019learning,abeille2020efficient,croissant2024near,zanette2021} demonstrate the usefulness of this principle in handling continuous state-action systems. These approaches maintain confidence sets of parametric models and sample the most optimistic model and policy for exploration. The iterative computation of confidence sets and optimistic policies can be challenging and computationally intensive. Moreover, the resulting complexity measures affecting policy regret, such as the eluder dimension \citep{foster2023foundations}, can be difficult to bound beyond linear Markov decision processes. In contrast, we decouple online best model identification and certainty-equivalent control. This separation not only simplifies policy evaluation (e.g., offline or through model predictive control \citep{Borrelli_Bemporad_Morari_2017}), but also facilitates explicit characterization of policy regret via packing numbers, a standard and well-studied complexity measure in statistics.
}

\hzyrev{The research frontier in the domain of reinforcement learning with continuous states and actions \citep{recht,meyn,hu2023toward} moves from linear \citep{hazan2022introduction,tsiamis2023statistical} to nonlinear dynamics. Prior work \citep[see, e.g.,][]{kakade2020information,boffi2021regret,lin2024online} shows that sublinear $\mathcal{O}(\sqrt{N})$ regret can be achieved under structural assumptions on the dynamics, e.g., contraction or linear representations of nonlinear features, which are stronger than what is assumed herein.
Our approach is also related to multi-model adaptive control, where the goal is asymptotic stabilization \citep{anderson2008challenges}, and online switching control \citep{li2023online,kim2024online}. In contrast, we provide nonasymptotic policy-regret guarantees that go beyond stabilization and scale with $\mathcal{O}(\text{ln}(m))$ compared to $\mathcal{O}(m^{\!1/3})$ obtained with switching control.} \revision{Another important family of methods for learning with continuous states and actions arises from online optimization over an appropriate policy class, such as disturbance-action policies \citep{agarwal2019online,hazan2020nonstochastic,simchowitz2020improper,li2021online,chen2021black,zhao2023non} that are motivated from robust control for linear systems \citep{gartskaWets1974,Loefberg2003,Goulart2006}. Another major approach is to directly optimize over policies, e.g., parametrized via linear functions, by applying zeroth-order or first-order optimization techniques \citep[see, e.g., ][]{fazel2018global,haoma2024,hu2023toward}. Compared to the algorithms and analysis presented herein, the benchmarks in these works are typically different, and focus, for example, on the set of linear disturbance feedback policies or various policy parametrizations that are optimized subject to a fixed system model. The system dynamics are known a priori to the decision-maker, and a tradeoff between exploration and exploitation is absent.}

\zhrevision{The sample complexity of general interactive decision-making is captured by the Decision-Estimation Coefficient \citep{foster2021statistical,foster2023tight,foster2023foundations}, a statistical measure constituting regret lower bounds and supporting an estimation-to-decisions meta algorithm that achieves matching regret upper bounds. 
This measure generalizes previous frameworks, for instance Bellman rank \citep{jiang2017contextual}, bilinear classes \citep{du2021bilinear}, and Bellman eluder dimension \citep{jin2021bellman}, and highlights the interplay among model estimation, model class complexity, and sample efficiency.
While the Decision-Estimation Coefficient offers abstract characterizations applicable to general classes of problems and models, we focus on online reinforcement learning problems with continuous states and actions in a non-episodic regime. The challenges herein, such as dependent states, actions, and losses and stability, are not directly handled in the existing Decision-Estimation Coefficient framework. Moreover, nonlinear dynamics may further complicate the specifications of Bellman rank and related variants, which hinge on (linear) function approximation. To address these issues, we leverage tailored tools (e.g., Hedge in online learning and dissipativity in control) and highlight the role of separating best model identification and certainty-equivalent control for sample-efficient reinforcement learning with various model classes (e.g., finite, infinite, and parametric families).}

\revision{All the aforementioned works provide a basic treatment of online decision-making. However, achieving sublinear policy regret in an online non-episodic setting remains a critical challenge. Our work addresses this challenge in the setting of nonlinear dynamics and smooth non-quadratic stage costs, while recovering earlier $\mathcal{O}(\sqrt{d_\mathrm{u}Np})$ regret bounds that are tailored to linear quadratic regulators \citep[e.g., ][]{simchowitz2020naive}, avoiding structural assumptions \citep{kakade2020information,boffi2021regret,lin2024online}, and greatly improving on the rates from switching control ($\mathcal{O}(\text{ln}(m))$ compared to $\mathcal{O}(m^{1/3})$ or worse).  We also differ from previous work on posterior-sampling-based reinforcement learning \citep{osband2013more} in that we  provide \emph{frequentist} policy-regret guarantees and closed-loop stability in an online non-episodic setting. Our approach can further be directly integrated into nonlinear model predictive control techniques \citep{rawlings2017model,Borrelli_Bemporad_Morari_2017} as also highlighted in App.~\ref{Sec:Numerics} and avoids the computation of optimistic policies or confidence regions \citep{abeille2020efficient,croissant2024near}.}

The article is structured as follows: Sec.~\ref{Sec:ProbForm} discusses the problem formulation and presents the main results. Sec.~\ref{Sec:Analysis} illustrates our analysis, where we focus on the first setting (finite set of models)---the other two settings are similar at a high level and we provide a detailed presentation in the appendix. Sec.~\ref{Sec:Conclusion} provides a short conclusion, while proofs, numerical experiments, and further discussion is included in the appendix.

\section{Problem formulation and summary}\label{Sec:ProbForm}
We consider a reinforcement learning problem where a decision-maker chooses actions $u_k\in \mathbb{R}^{d_\mathrm{u}}$ to control a dynamical system $x_{k+1}=f(x_k,u_k)+n_k$, where $x_k\in \mathbb{R}^{d_\mathrm{x}}$ denotes the state, $f: \mathbb{R}^{d_\mathrm{x}}\times \mathbb{R}^{d_\mathrm{u}} \rightarrow \mathbb{R}^{d_\mathrm{x}}$ the dynamics (unknown to the decision-maker), and $n_k \sim \mathcal{N}(0,\sigma^2 I)$ the process noise (independent across time; can be non-zero-mean or sub-Gaussian, see below). 
Without loss of generality, we set $x_1=0$. We further denote the Lipschitz constant of $f$ in $(x,u)$ by $L$.

The decision-maker aims at minimizing the expected loss,
$\text{E}[ \sum_{k=1}^N l(x_k,u_k) ]$,
where $l: \mathbb{R}^{d_\text{x}}\times \mathbb{R}^{d_\text{u}} \rightarrow \mathbb{R}_{\geq 0}$ captures the stage cost,
by learning and applying an appropriate and possibly random feedback policy $u_k=\mu_k(x_k)$.

We consider three different settings. In the first setting (\texttt{S1}) the decision-maker has access to $m$ (nonlinear) candidate models $F:=\{f^1,\dots,f^m\}$, $f^i: \mathbb{R}^{d_\mathrm{x}}\times \mathbb{R}^{d_\mathrm{u}} \rightarrow \mathbb{R}^{d_\mathrm{x}}$, $i=1,\dots,m$ that describe potential system dynamics. Each $f^i$ is $L$-Lipschitz in $(x,u)$. 
In the second setting (\texttt{S2}), we allow for any class of functions $F$ that is given by a bounded set in a normed vector space, which is therefore much broader and includes, for example, all bounded $L$-Lipschitz functions with the usual supremum norm. In the third setting (\texttt{S3}), the dynamics are parametrized by the parameter $\theta$, i.e., $F=\{f_\theta(x,u)~|~\theta \in \Omega\}$, where $\Omega$ is a compact real-valued set. Without loss of generality, we assume that $\Omega$ is contained in a unit ball by scaling the parameters accordingly. This captures the setting where the functions $f_\theta$ are represented by neural or transformer architectures, or when $f_\theta$ are given by linear combinations of (nonlinear) feature vectors $f_\theta(x,u)=\theta\T \phi(x,u)$. This also encompasses linear dynamics as a special case. We further assume that the system dynamics $f$ are contained in the set of candidate models, i.e., $f\in F$ for each setting. \revision{This realizability assumption can be easily relaxed as discussed in App.~\ref{app:real}, provided that there is a single candidate model that is closest to $f$ on the entire state-action space.}


\begin{figure}
\vspace*{-15pt}
\begin{minipage}{.5\textwidth}
\begin{algorithm}[H]
    \footnotesize
\caption{Reinforcement learning (\texttt{S1})}
\label{Alg:S1}
\begin{algorithmic}
    \REQUIRE $F:=\{f^1,\dots,f^m\}$, $\eta$, $M$, $\{\sigma_{\mathrm{u}k}^2\}_{k=1}^{\infty}$\\[4pt]
    \STATE compute $\{\mu^1,\dots,\mu^m\}$ \hfill {\small{// e.g. by d. p.}}\\
    {\small{/* can be approximated by MPC, PPO on simulator*/}}\\[4pt]
    \FOR{$k=1,\dots$}\vspace{4pt}
    \STATE {\small{// every $M$th step}}
    \IF{$\mathrm{mod}(k-1,M)=0$}
    \STATE $s_k^i \!\gets  \!\sum_{j=1}^{k-1} \frac{|x_{j+1}-f^i(x_j,u_j)|^2}{1+|(x_j,u_j)|^2/b^2}$, $\forall i\!\in\!\!\{1,\!\dots\!,\!m\}$
    \STATE $i_k \sim \exp(-\eta s_k^i)/Z$ 
    \ELSE
    \STATE $i_k=i_{k-1}$ \hfill{\small{//stay with $i_{k-1}$}}
    \ENDIF\vspace{4pt}
    \STATE {\small{//follow policy $i_k$ and add excitation}}
    \STATE $u_k=\mu^{i_k}(x_k)+n_{\mathrm{u}k},\quad n_{\mathrm{u}k}\diid \mathcal{N}(0,\sigma_{\mathrm{u}k}^2 I)$
    \ENDFOR
\end{algorithmic}
\end{algorithm}
\end{minipage}%
\hfill %
\begin{minipage}{.48\textwidth}
\setcounter{algorithm}{2}
\begin{algorithm}[H] %
\footnotesize %
\caption{Reinforcement learning (\texttt{S3})}
\label{Alg:S3}
\begin{algorithmic}
    \REQUIRE $F=\{f_\theta~|~\theta \in \Omega~\}$, $\eta$, $M$, $\{\sigma_{\mathrm{u}k}^2\}_{k=1}^{\infty}$\vspace{4pt}
    \FOR{$k=1,\dots$}\vspace{4pt}
    \STATE {\small{// every $M$th step}}
    \IF{$\mathrm{mod}(k-1,M)=0$}
    \STATE $s_k(\theta)= \sum_{j=1}^{k-1} \frac{|x_{j+1}-f_\theta(x_j,u_j)|^2}{1+|(x_j,u_j)|^2/b^2}$\vspace{4pt}
    \STATE $\theta_k\sim \exp(-\eta s_k(\theta))~\mathds{1}_{\theta\in \Omega}/Z$\\[4pt]
    \STATE compute $\mu_\theta$ corr. to $f_\theta\in F$ \hfill{\small{// e.g. by d. p.}}\\
    {\small{/* can be approximated by MPC, PPO step*/}}
    \ELSE
    \STATE $\theta_k=\theta_{k-1}$ \hfill{\small{//stay with $\theta_{k-1}$}}
    \ENDIF\vspace{4pt}
    \STATE {\small{//follow policy $\theta_k$ and add excitation}}
    \STATE $u_k=\mu^{\theta_k}(x_k)+n_{\mathrm{u}k},\quad n_{\mathrm{u}k}\diid \mathcal{N}(0,\sigma_{\mathrm{u}k}^2 I)$
    \ENDFOR
\end{algorithmic}
\end{algorithm}
\end{minipage}
\vspace*{-12pt}
\end{figure}

This article analyzes the decision-making strategy listed in Alg.~\ref{Alg:S1}, which can be easily adapted to the settings \texttt{S2}/\texttt{S3} (see Alg.~\ref{Alg:S2} in App.~\ref{app:infinite} and Alg.~\ref{Alg:S3}). The algorithm keeps track of the one-step prediction error \mmrev{\citep{LjungIdentification,Chua,Janner}}, that is,
\begin{equation}
s_k^i=\sum_{j=1}^{k-1} \frac{|x_{j+1}-f^i(x_j,u_j)|^2}{1+|(x_j,u_j)|^2/b^2}, \quad f^i\in F, \label{eq:defsk}
\end{equation}
where $b>0$ is a sufficiently large constant, and $|(x_j,u_j)|$ denotes the $\ell_2$-norm of a vector stacking $x_j$ and $u_j$. The normalization with $1+|(x_k,u_k)|^2/b^2$ ensures that the variables $s_k^i$ remain bounded even when $x_k,u_k$ become arbitrarily large, while for small $x_k,u_k$ the normalization is close to the identity. This will simplify the subsequent analysis and the resulting statement of the policy-regret bounds. Our analysis also carries over to \hzyrev{the limiting case where $b$ is unbounded (i.e., $b\rightarrow \infty$)}, and the same regret bounds apply, as is discussed in App.~\ref{App:relax}; however, the constants in the resulting regret guarantees and algorithm parameters become more complex. The sum of squared distances $|x_{j+1}-f^i(x_j,u_j)|^2$ can be interpreted as the negative log-likelihood of model $i$ given the past trajectory $\{x_j,u_j\}_{j=1}^k$, due to the Gaussian process noise. Hence, from a Bayesian perspective, the distribution $\exp(-s_k^i)$ represents the probability that model $f^i$ corresponds to $f$ given the past trajectory. The scaling with $\eta$ implements a softmax (for $\eta$ large we greedily pick the model that maximizes the posterior, for $\eta\approx1$ we directly sample from the posterior). \mmrev{The update rule has close connections to Hedge or multiplicative weights, which is a common decision-making strategy in online learning \citep{cesaBianchi,MWMeta,HedgeStochastic}.} \hzyrev{Nonetheless, our setting is mathematically distinct from Hedge due to dynamics that couple states, actions, and rewards across time, and the need for exploration. As such, existing analysis techniques do not apply.} Furthermore, our analysis extends to non-zero-mean $n_k$, since this can be captured by modifying $f$ accordingly, and generalizes to sub-Gaussian process noises \hzyrev{thanks to the corresponding bounds on moment-generating functions}.

The algorithm chooses control actions $u_k$ as 
\begin{equation*}
u_k=\mu^{i_k}(x_k)+n_{\mathrm{u}k},
\end{equation*}
where $n_{\mathrm{u}k}\diid \mathcal{N}(0,\sigma_{\mathrm{u}k}^2 I)$, and $i_k$ is a random variable that is defined in the following way: If $\operatorname{mod}(k-1, M)=0$, $i_k$ takes the value $i_k=i$ with probability density $p_k^i \diid \exp(-\eta s_k^i)/Z$ (conditional on the past), where $Z$ denotes a normalization constant. If $\operatorname{mod}(k-1, M)\neq0$, $i_k$ remains fixed, i.e., $i_k=i_{k-1}$. The random variable switches only every $M$th step, which ensures that the excitation with $n_{\mathrm{u}k}$ is rich enough, as specified precisely in Ass.~\ref{ass:persistence1} below. The feedback policy $\mu^i$ describes any policy associated with candidate model $f^i$, i.e., a policy that achieves the performance
\begin{equation}
\limsup_{N\rightarrow \infty} \frac{1}{N} \mathrm{E}\big[\sum_{k=1}^N l(x_k^i,\mu^i(x_k^i))\big] = \gamma^i, \label{eq:perf}
\end{equation}
on the candidate model $f^i$, where $x_{k+1}^i:=f^i(x_k^i,\mu^i(x_k^i))+n_k$ with $x^i_1=0$. The policy $\mu^i$ can be optimal for model $f^i$, but this does not necessarily need to be the case. In practice, such a policy can be obtained by solving a Bellman equation through (approximate) dynamic programming \citep{bertsekas2017dynamic}, or by applying proximal policy optimization in conjunction with an offline simulator. \zhrevision{We assume $\gamma^i$ (i.e., the steady-state performance of $\mu^i$ associated with $f^i$) to be finite for all $i$. If $\gamma^i$ is infinite, then the corresponding model $f^i$ should be excluded. }
We will consider policy regret as our performance objective, where the policy $\mu$ corresponding to the dynamics $f$ represents the benchmark performance.

The reinforcement learning strategy has a natural interpretation: The strategy selects, at each $M$th iteration, the feedback policy $\mu^{i_k}$, \mmrev{where the index $i_k$ is sampled from a distribution following a softmax function of $s_k^i$.} The system is further excited by adding the random perturbation $n_{\mathrm{u}k}$ to the feedback policy. If persistence of excitation is guaranteed, the estimation will converge at a rate at least $\mathcal{O}(1/k^2)$, which yields a policy regret (compared to the strategy $\mu$ corresponding to the dynamics $f$) that scales logarithmically in the horizon $N$ and the number of candidates $m$.

We emphasize that our analysis technique translates in straightforward ways to more general situations than the ones described herein. For example, while this article focuses on time-invariant policies, it would be straightforward to also incorporate time-varying policies $\mu_k^i$, and a corresponding finite-horizon benchmark. More precisely, we focus on steady-state performance, where the benchmark is given by the steady-state performance of policy $\mu$ (corresponding to $f$). However, finite-horizon objectives can be easily accommodated by measuring regret with respect to the optimal finite-horizon policy $\mu_k$; the same nonasymptotic regret bounds would apply. The article also focuses on ``naive'' excitation signals $n_{\mathrm{u}k}$, sampled from a normal distribution. 
However, our analysis principle is flexible enough to also incorporate more general type of excitation strategies (e.g., relying on domain-specific knowledge), as long as the excitation has finite second moments and guarantees a persistence condition similar to Ass.~\ref{ass:persistence1}.

Our results are summarized as follows:
\begin{theorem} \label{thm:one} \texttt{(S1)} Let the cost-to-go function corresponding to $f$ and the stage cost $l$ be smooth (see Ass.~\ref{ass:bellman1}~and~\ref{ass:smoothness1}), the feedback policies $\mu^i$ be Lipschitz continuous, and let a persistence of excitation condition be satisfied (see Ass.~\ref{ass:persistence1}). Then, for a constant learning rate $\eta$ and $\sigma_{\mathrm{u}k}^2 \sim 1/(\Delta k)+\mathrm{ln}(m)/(\Delta k^2)$ the policy regret of Alg.~\ref{Alg:S1} is bounded by
\begin{equation*}
\mathrm{E}[\sum_{k=1}^N l(x_k,u_k)]- N \gamma \leq c_\mathrm{r1} d_\mathrm{u}\mathrm{ln}(N)/\Delta+ c_\mathrm{r2} d_\mathrm{u}\mathrm{ln}(m)/\Delta+c_\mathrm{r3} \sigma^2 d_\mathrm{x} + c_\mathrm{r1}d_\mathrm{u}/\Delta,
\end{equation*}
for all $N\geq 2M$, where $c_\mathrm{r1},c_\mathrm{r2},c_\mathrm{r3}$ are constant, \zhrevision{$\gamma$ corresponds to the steady-state performance related to $f$ (see \eqref{eq:perf})}, and $\Delta$ characterizes the discrepancy between models. The precise constants are listed in Thm.~\ref{thm:small1}.
\end{theorem}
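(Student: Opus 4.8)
The plan is to combine a Bellman/telescoping decomposition of the policy regret with a concentration analysis of the posterior weights $p_k^i\propto\exp(-\eta s_k^i)$, using persistence of excitation to show that wrong models are discarded at rate $\mathcal{O}(1/k^2)$. First I would invoke Ass.~\ref{ass:bellman1}: there is a constant $\gamma$ (the $\mathcal{H}_2$ gain, cf.~\eqref{eq:perf}) and a relative value function $V\ge 0$, smooth by Ass.~\ref{ass:smoothness1}, solving the average-cost Bellman equation $V(x)+\gamma=\min_u\{l(x,u)+\mathrm{E}_n[V(f(x,u)+n)]\}$ with minimizer $u=\mu(x)$. Setting the suboptimality $A(x,u):=l(x,u)+\mathrm{E}_n[V(f(x,u)+n)]-V(x)-\gamma\ge 0$ and using $\mathrm{E}[V(x_{k+1})\mid\mathcal{F}_k]=\mathrm{E}_n[V(f(x_k,u_k)+n)]$ along the closed loop, one telescopes to obtain
\[
\mathrm{E}\Big[\sum_{k=1}^N l(x_k,u_k)\Big]-N\gamma\;\le\;V(0)+\mathrm{E}\Big[\sum_{k=1}^N A(x_k,u_k)\Big].
\]
Next I would split $A(x_k,u_k)=A(x_k,\mu^{i_k}(x_k)+n_{\mathrm{u}k})$; since $n_{\mathrm{u}k}$ is zero-mean and independent of $(x_k,i_k)$, a second-order Taylor expansion of $u\mapsto A(x,u)$ (smoothness, Ass.~\ref{ass:smoothness1}) kills the first-order term in expectation and leaves $\mathrm{E}[A(x_k,u_k)\mid\mathcal{F}_k,i_k]\le A(x_k,\mu^{i_k}(x_k))+\tfrac12 L_A d_{\mathrm{u}}\sigma_{\mathrm{u}k}^2$, where a growing-in-$|x_k|$ curvature factor is absorbed via the state second-moment bound below. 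Summing and using $\sum_{k=1}^N\sigma_{\mathrm{u}k}^2=d_{\mathrm{u}}^{-1}\ln N+\mathcal{O}(d_{\mathrm{u}}^{-1}\ln m)$ yields the $c_{\mathrm{r}1}\ln N$ term and part of the $c_{\mathrm{r}2}\ln m$ term, so what remains is $\mathrm{E}[\sum_k A(x_k,\mu^{i_k}(x_k))]$.

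To control that term, note that for a true index $i^\star$ the policy $\mu^{i^\star}$ achieves the benchmark $\gamma$ on $f$, so $A(x,\mu^{i^\star}(x))=0$ (on the recurrent set), while for $i\neq i^\star$ smoothness and quadratic growth give $A(x,\mu^i(x))\le C(1+|x|^2)$; hence it suffices to bound $\sum_k\mathrm{E}[(1+|x_k|^2)\mathds{1}\{i_k\neq i^\star\}]$. Writing $w_j:=(1+|(x_j,u_j)|^2/b^2)^{-1}\in(0,1]$, $\Delta_j^i:=f(x_j,u_j)-f^i(x_j,u_j)$, and using $x_{j+1}-f^i(x_j,u_j)=\Delta_j^i+n_j$, the $|n_j|^2$ terms cancel in $s_k^i-s_k^{i^\star}$, so $p_k^i\le\exp(-\eta S_k^i-2\eta M_k^i)$ with signal $S_k^i:=\sum_{j<k}w_j|\Delta_j^i|^2$ and martingale $M_k^i:=\sum_{j<k}w_j\langle\Delta_j^i,n_j\rangle$, $\langle M^i\rangle_k\le\sigma^2 S_k^i$. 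Persistence of excitation (Ass.~\ref{ass:persistence1}, the reason $i_k$ is frozen over $M$-step blocks so that the fresh perturbations within each block excite the dynamics) lower-bounds $\mathrm{E}[S_k^i\mid\cdot]\ge (c_{\mathrm{pe}}/d_{\mathrm{u}})\ln(k/M)$ for $i\neq i^\star$, while a sub-Gaussian/supermartingale bound makes $M_k^i$ a lower-order correction; choosing $\eta$ a sufficiently large constant and union-bounding over the $\le m$ wrong indices gives $\mathrm{E}[\sum_{i\neq i^\star}p_k^i]\le C_1/k^2$ for $k\ge 2M$. Combined with a fourth-moment bound $\mathrm{E}[|x_k|^4]\le C_2$ on the state — here the normalization by $1+|(x_j,u_j)|^2/b^2$ is essential, keeping $s_k^i$ and the posterior well-behaved even while a possibly destabilizing wrong policy is briefly in force — and Cauchy--Schwarz, one gets $\mathrm{E}[(1+|x_k|^2)\mathds{1}\{i_k\neq i^\star\}]\le C_3/k$, so the sum is $\mathcal{O}(\ln N)$. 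The burn-in of $\mathcal{O}(1)$ steps before $S_k^i$ has grown, during which the state has had only constantly many steps to move off $x_1=0$ and hence $\mathrm{E}|x_k|^2=\mathcal{O}(\sigma^2 d_{\mathrm{x}})$, together with $V(0)$ produces the $c_{\mathrm{r}3}\sigma^2 d_{\mathrm{x}}$ term; tracking all constants gives the precise statement of Thm.~\ref{thm:small1}.

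The main obstacle is the posterior-concentration step under temporally correlated, non-resettable data: the martingale $M_k^i$ must be controlled against a signal $S_k^i$ that grows only like $\ln k$ — because the excitation variance $\sigma_{\mathrm{u}k}^2$ has to decay like $1/k$ to keep the excitation cost at $\mathcal{O}(\ln N)$ — and, simultaneously, the state must be prevented from blowing up during transient wrong-model episodes, which is precisely what the normalization inside $s_k^i$ and the nonasymptotic state second-moment bound are for. A secondary delicacy is the joint calibration of $\eta$, $M$, and $\{\sigma_{\mathrm{u}k}^2\}$ so that the excitation cost, the $\mathcal{O}(1/k^2)$ concentration rate, and the union bound over the $m$ models all come out at the claimed orders; obtaining the $\ln m$ (rather than $m$) dependence rests on this calibration and is the qualitative payoff of the model-based, posterior-sampling viewpoint.
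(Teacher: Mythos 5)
Your overall architecture matches the paper's: posterior concentration at rate $\mathcal{O}(1/k^2)$ driven by persistence of excitation, a Bellman/Lyapunov telescoping of the regret, and an excitation-cost sum of order $\mathrm{ln}(N)+\mathrm{ln}(m)$. However, two steps as written would fail.

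First, your control of the wrong-model episodes rests on uniform-in-$k$ moment bounds, $\mathrm{E}[|x_k|^4]\leq C_2$ (and a second-moment bound to absorb the curvature of $A$ in $u$), which you invoke but do not establish and which do not follow from the hypotheses of the theorem. When $i_k\neq i^*$ the closed loop may be expansive, so these moments can grow during transients; the paper only proves a uniform second-moment bound under the \emph{additional} assumption $l(x,u)\geq \underline{L}_\mathrm{l}|x|^2/2$ (Cor.~\ref{Cor:Boundedness}), which is not part of Thm.~\ref{thm:one}, and even then you would need a fourth moment for your Cauchy--Schwarz step. There is also a circularity: the moment bound itself would need the concentration result. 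The paper sidesteps all of this structurally: the bad-episode term enters the Lyapunov recursion as a \emph{multiplicative} factor $(1+c_2\,\mathrm{Pr}(i_k\neq i^*))$ on $\mathrm{E}[V(x_k)]$, and since $\sum_k \mathrm{Pr}(i_k\neq i^*)\leq 3M$ the accumulated blow-up is the bounded constant $c_\alpha=e^{3c_2M}$ (Lemma~\ref{lemma:seq}); no uniform state-moment bound is ever needed. Replacing your additive bound $\sum_k \mathrm{E}[(1+|x_k|^2)\mathds{1}\{i_k\neq i^*\}]$ by this multiplicative telescoping is the missing idea.

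Second, in the concentration step the condition ``$\eta$ sufficiently large'' is backwards. Bounding $\mathrm{E}[\exp(-\eta S_k^i-2\eta M_k^i)]$ via the conditional Gaussian MGF produces $\exp(2\eta^2\sigma^2 S_k^i-\eta S_k^i)$, so one needs $\eta\leq 1/(4M\sigma^2)$ for the signal to dominate the martingale fluctuation; a large $\eta$ destroys the bound. Moreover, you still end up with $\mathrm{E}[\exp(-\eta S_k^i/2)]$ and need to turn a \emph{lower} bound on $\mathrm{E}[S_k^i]$ (from Ass.~\ref{ass:persistence1}) into an \emph{upper} bound on this expectation of an exponential — Jensen goes the wrong way. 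The paper does this via a Poissonian inequality, which requires the normalized increments $|f-f^i|^2/(1+|(x,u)|^2/b^2)\leq 4L^2b^2$ to be bounded; this is the actual role of the $b$-normalization and the origin of the second condition $\eta\leq 1/(2ML^2b^2)$ in Thm.~\ref{thm:small1}. Your sketch does not address this reverse-Jensen step, and without it the claimed $1/k^2$ decay of $\mathrm{Pr}(i_k\neq i^*)$ is not obtained.
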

\begin{theorem} \label{thm:two}\texttt{(S2)}
Let the set of candidate models $F$ be a bounded set in a normed vector space. Let the cost-to-go function corresponding to $f$ and the stage cost $l$ be smooth (see Ass.~\ref{ass:smoothness1} and \ref{ass:cont2}), the feedback policies $\mu^{\bar{f}}$ corresponding to an $\bar{f}\in F$ be Lipschitz continuous, and let a persistence of excitation condition be satisfied (see Ass.~\ref{ass:persistence3}). Then, for all $N\geq 2M$, any $\epsilon>0$, for a constant learning rate $\eta$, and $\sigma_{\mathrm{u}k}^2\sim 1/(\epsilon^2  k)+\mathrm{ln}(m(\epsilon))/(\epsilon^2 k^2)$, the policy regret of Alg.~\ref{Alg:S2} (see App.~\ref{app:infinite}) is bounded by
\begin{equation*}
\mathrm{E}[\sum_{k=1}^N l(x_k,u_k)]- N\gamma\leq c_\mathrm{r0} N \epsilon^2 + c_\mathrm{r1} d_\mathrm{u}\mathrm{ln}(N)/\epsilon^2 \\
+ c_\mathrm{r2} d_\mathrm{u}\mathrm{ln}(m(\epsilon))/\epsilon^2+ c_\mathrm{r3} \sigma^2 d_\mathrm{x},
\end{equation*}
where $m(\epsilon)$ denotes the packing number of the set $F$. The precise constants are listed in Thm.~\ref{thm:infinite}.
\end{theorem}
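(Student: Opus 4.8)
The plan is to reduce setting \texttt{S2} to setting \texttt{S1} by discretizing the function class and then to account for the two extra error sources introduced by the discretization. Fix $\epsilon>0$ and take a maximal $\epsilon$-packing of $F$, which is simultaneously an $\epsilon$-cover of cardinality $m(\epsilon)$ \citep{wainwright}; this yields a finite family $\{f^1,\dots,f^{m(\epsilon)}\}\subseteq F$ and, since $f\in F$, an index $i^\star$ with $\|f-f^{i^\star}\|\le\epsilon$. Algorithm~\ref{Alg:S2} is precisely Algorithm~\ref{Alg:S1} executed on this net, so it remains to (i) rerun the \texttt{S1} analysis of Thm.~\ref{thm:small1} on the net while tracking that the true $f$ is only $\epsilon$-close to it, and (ii) control the bias incurred by playing the certainty-equivalent policy of the net model $f^{i^\star}$ instead of $\mu$. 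Accordingly I would split the regret as $(\mathrm{E}[\sum_{k=1}^N l(x_k,u_k)]-N\bar\gamma^{i^\star})+N(\bar\gamma^{i^\star}-\gamma)$, where $\bar\gamma^{i^\star}$ denotes the steady-state performance of $\mu^{f^{i^\star}}$ on the \emph{true} dynamics $f$.

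\emph{The discretization bias $N(\bar\gamma^{i^\star}-\gamma)$.} Here I would use Ass.~\ref{ass:cont2} and Ass.~\ref{ass:smoothness1}: the certainty-equivalent policy for a dynamics within $\epsilon$ of $f$ differs from the optimal policy $\mu$ by $\mathcal{O}(\epsilon)$, and because $\mu$ is a stationary point of the Bellman operator for $f$ and the cost-to-go is smooth, the induced value degrades only to second order, $\bar\gamma^{i^\star}-\gamma=\mathcal{O}(\epsilon^2)$ — the shift of the stationary state distribution is likewise $\mathcal{O}(\epsilon)$ and enters the smooth stage cost quadratically. This produces the $c_{\mathrm{r0}}N\epsilon^2$ term.

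\emph{The regret on the net.} I would follow the proof of Thm.~\ref{thm:small1} essentially verbatim, with one modification: the one-step prediction error of the anchor $f^{i^\star}$ is no longer pure process noise but carries an $\mathcal{O}(\epsilon)$ bias, so expanding $|x_{j+1}-f^{i^\star}(x_j,u_j)|^2$ adds (a) a quadratic term of order $k\epsilon^2$, again absorbed into $c_{\mathrm{r0}}N\epsilon^2$, and (b) a martingale cross term, which I would handle with the same Bernstein/union-bound argument over the $m(\epsilon)$ net elements, giving the $\ln(m(\epsilon))$ dependence. Since the net is $\epsilon$-separated, the persistence condition Ass.~\ref{ass:persistence3} only certifies a prediction-error gap of order $\epsilon^2$ per unit of injected excitation energy; hence the posterior over the net concentrates on $\mathcal{O}(\epsilon)$-accurate models a factor $1/\epsilon^2$ more slowly, which is exactly why $\sigma_{\mathrm{u}k}^2\sim 1/(\epsilon^2 d_\mathrm{u}k)+\ln(m(\epsilon))/(\epsilon^2 d_\mathrm{u}k^2)$ is the right schedule. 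Summing the excitation's contribution to the regret over $k=1,\dots,N$ yields $\mathcal{O}(\ln(N)/\epsilon^2+\ln(m(\epsilon))/\epsilon^2)$, while the residual process-noise term contributes $c_{\mathrm{r3}}\sigma^2 d_\mathrm{x}$.

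\emph{Main obstacle.} The delicate part is item (i): coupling the $\mathcal{O}(\epsilon)$ model-mismatch bias to the concentration of the posterior. One has to argue that ``small accumulated prediction error on the realized trajectory'' still forces the selected net model to induce an $\mathcal{O}(\epsilon^2)$-suboptimal certainty-equivalent policy for $f$ — this requires the persistence-of-excitation condition to rule out net models that imitate $f$ along the trajectory but differ elsewhere, together with Ass.~\ref{ass:cont2} to convert dynamics-closeness into performance-closeness — all while preventing the cross terms from spoiling the $\ln(N)/\epsilon^2$ rate and keeping the state second moments bounded despite the unbounded process noise. The precise constants are collected in Thm.~\ref{thm:infinite}.
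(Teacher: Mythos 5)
Your high-level picture is right (discretize $F$, pay $\mathcal{O}(N\epsilon^2)$ for the approximation via Ass.~\ref{ass:cont2}, and $\mathcal{O}((\mathrm{ln}(N)+\mathrm{ln}(m(\epsilon)))/\epsilon^2)$ for identification over the net), and your observation that Ass.~\ref{ass:persistence3} only certifies an $\epsilon^2$-gap, forcing the $1/\epsilon^2$ inflation of $\sigma_{\mathrm{u}k}^2$, matches the paper. But there is a genuine gap in your concentration step, and it is precisely the obstacle you flag without resolving. You run the softmax over a \emph{fixed} net that does not contain $f$, and compare each $s_k^i$ to $s_k^{i^\star}$ for an $\epsilon$-close anchor $f^{i^\star}$. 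The resulting bias is not merely a martingale cross term plus a term ``absorbed into $c_{\mathrm{r0}}N\epsilon^2$'': it enters the exponential moment $\mathrm{E}[e^{-\eta(s_k^i-s_k^{i^\star})}]$ that controls $\mathrm{Pr}(i_k=i)$, and its systematic part $\mathrm{E}[s_k^{i^\star}-s_k(f)]=\sum_{j<k}\mathrm{E}\bigl[|f-f^{i^\star}|^2/(1+|(x_j,u_j)|^2/b^2)\bigr]$ grows like $k\epsilon^2$ (and for a general norm $\|\cdot\|$ on $F$ it need not even be bounded by $k\epsilon^2$ pointwise). Meanwhile the identification gap $\eta\,\mathrm{E}[s_k^i-s_k(f)]$ accumulated under the schedule $\sigma_{\mathrm{u}k}^2\sim 1/(\epsilon^2 k)$ is only $\mathcal{O}(\mathrm{ln}(k)+\mathrm{ln}(m(\epsilon)))$. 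Since the linear-in-$k$ bias eventually dominates the logarithmic gap, the bound $\mathrm{Pr}(i_k\notin I_k^*)\leq M^2/(k-M)^2$ fails, and with it the summability of $\mathrm{Pr}(i_k\ \text{bad})$ on which the Lyapunov/telescoping argument (Lemma~\ref{lemma:seq}) and the product bound $c_\alpha=\prod_k(1+c_2\mathrm{Pr}(i_k\ \text{bad}))<\infty$ rest. A Bernstein/union bound over the net controls only the stochastic fluctuation of the cross term, not this drift.

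The paper's resolution, which you miss, is structural: Alg.~\ref{Alg:S2} recomputes at every $M$th step the empirical minimizer $f^*=\argmin_{\bar f\in F}s_k(\bar f)$ over the \emph{entire} class and anchors the (time-varying) packing $F_k^\epsilon$ at $f^*$. Because $s_k(f^*)\leq s_k(f)$ by construction, the softmax normalizer is at least $e^{-\eta s_k(f)}$, so Lemma~\ref{lem:intermediate22} lets one compare every $s_k^i$ directly to the \emph{unbiased} loss $s_k(f)$ of the true dynamics; Lemma~\ref{lemma:meanVar1} and the argument of Prop.~\ref{Prop:prconv1} then carry over verbatim, giving $\mathrm{Pr}(i_k\notin I_k^*)\leq M^2/(k-M)^2$ with $I_k^*=\{f^{i}\in F_k^\epsilon:\|f^{i}-f\|\leq\epsilon\}$, after which the case distinction $i_k\in I_k^*$ versus $i_k\notin I_k^*$ (using Ass.~\ref{ass:cont2} with $\xi=\epsilon$ in the favorable case, which is where $L_\mu N\epsilon^2$ appears) proceeds exactly as in Thm.~\ref{thm:small1}. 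Your separate decomposition through the steady-state performance $\bar\gamma^{i^\star}$ of $\mu^{f^{i^\star}}$ on $f$ is also unnecessary and introduces an object whose existence is not assumed; the paper obtains the $N\epsilon^2$ term directly from the dissipation inequality of Ass.~\ref{ass:cont2} without ever defining $\bar\gamma^{i^\star}$.
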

\begin{theorem} \label{thm:three}
\texttt{(S3)} Let the set of candidate models $F$ be parametrized by $\theta$, i.e., $F=\{f_\theta(x,u)~|~\theta\in \Omega\}$, where $\Omega \subset \mathbb{R}^p$ is contained in a unit ball of dimension $p$. Let the cost-to-go function corresponding to $f$ and the stage cost $l$ be smooth (see Ass.~\ref{ass:smoothness1} and Ass.~\ref{ass:cont}), the feedback policies $\mu_{\theta}$ corresponding to each $f_\theta\in F$ be Lipschitz continuous, and let a persistence of excitation condition be satisfied (see Ass.~\ref{ass:persistence2}). Then, for all $N\geq 2M$, for a constant learning rate $\eta$, and $\sigma_{\mathrm{u}k}^2\sim \sqrt{p/(k d_\mathrm{u})}$, the policy regret of Alg.~\ref{Alg:S3} is bounded by
\begin{equation*}
\sum_{k=1}^N \mathrm{E}[l(x_k,u_k)] - N\gamma \leq \sqrt{(c_\mathrm{r1} \mathrm{ln}(N) + c_\mathrm{r2} p)d_\mathrm{u} N}\\+c_\mathrm{r3} \sigma^2 d_\mathrm{x}.
\end{equation*}
The precise constants are listed in Thm.~\ref{thm:large1}.
\end{theorem}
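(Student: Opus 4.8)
The plan is to follow the finite-model analysis behind Thm.~\ref{thm:one} (precise constants in Thm.~\ref{thm:small1}), but to replace the union bound over the $m$ candidate models by a \emph{volume} argument over the $p$-dimensional set $\Omega$; this is exactly what turns the $\ln m$ of \texttt{(S1)} into the bare $p$ appearing here, and it is sharper than feeding the packing number $m(\epsilon)\sim\epsilon^{-p}$ of $\Omega$ into Thm.~\ref{thm:two}. Write $\theta^\star$ for the true parameter ($f=f_{\theta^\star}$), let $V$ be the cost-to-go of $(f_{\theta^\star},\mu_{\theta^\star})$, so that $V(x)+\gamma=l(x,\mu_{\theta^\star}(x))+\mathrm{E}_n[V(f_{\theta^\star}(x,\mu_{\theta^\star}(x))+n)]$, and set $Q(x,u):=l(x,u)+\mathrm{E}_n[V(f_{\theta^\star}(x,u)+n)]$. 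Telescoping this Bellman identity along the closed loop $u_k=\mu_{\theta_k}(x_k)+n_{\mathrm{u}k}$ and taking expectations yields
\begin{equation*}
\mathrm{E}\Big[\sum_{k=1}^N l(x_k,u_k)\Big]-N\gamma\;\le\;\sum_{k=1}^N\mathrm{E}\big[Q(x_k,u_k)-Q(x_k,\mu_{\theta^\star}(x_k))\big]\;+\;c_\mathrm{r3}\sigma^2 d_\mathrm{x},
\end{equation*}
where the martingale-difference terms average to zero and $c_\mathrm{r3}\sigma^2 d_\mathrm{x}$ bounds the boundary and curvature contributions exactly as in the proof of Thm.~\ref{thm:small1}, using smoothness of $V$ (Ass.~\ref{ass:smoothness1}) and a uniform second-moment bound $\sup_k\mathrm{E}[|x_k|^2]<\infty$ obtained from the Lipschitz/stability structure of the closed loop (which also tames the normalization $1+|(x_j,u_j)|^2/b^2$).

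I would then expand $Q(x_k,u_k)-Q(x_k,\mu_{\theta^\star}(x_k))$ to second order in $u_k-\mu_{\theta^\star}(x_k)=(\mu_{\theta_k}(x_k)-\mu_{\theta^\star}(x_k))+n_{\mathrm{u}k}$, using smoothness of $l$ and $V$ (Ass.~\ref{ass:smoothness1},~\ref{ass:cont}) and the Lipschitz dependence of $\mu_\theta$ on $\theta$. The excitation $n_{\mathrm{u}k}$ is zero-mean and independent of the history available when $\theta_k$ is drawn, so it enters only at second order, contributing $\lesssim\sigma_{\mathrm{u}k}^2 d_\mathrm{u}$ in expectation, and the schedule $\sigma_{\mathrm{u}k}^2\sim 1/(d_\mathrm{u}k)+p/(d_\mathrm{u}k^2)$ gives $\sum_{k\le N}\sigma_{\mathrm{u}k}^2 d_\mathrm{u}\lesssim\ln N+p$. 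The model-mismatch part is \emph{first} order in $\|\theta_k-\theta^\star\|$ — since $\mu_\theta$ is only assumed Lipschitz, not optimal, $\nabla_u Q(x_k,\mu_{\theta^\star}(x_k))$ need not vanish — so, by Cauchy--Schwarz and the second-moment bound (second-order mismatch and cross terms being dominated), $\mathrm{E}[Q(x_k,u_k)-Q(x_k,\mu_{\theta^\star}(x_k))]\lesssim\sigma_{\mathrm{u}k}^2 d_\mathrm{u}+\big(\mathrm{E}\,\|\theta_k-\theta^\star\|^2\big)^{1/2}$. This first-order dependence is exactly why the continuum case scales as $\sqrt N$ rather than logarithmically like \texttt{(S1)}: a posterior \emph{density} is never concentrated exactly at $\theta^\star$, it is spread at scale $\sqrt{p/\alpha_k}$, and that residual is paid linearly at every step.

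The crux is then the posterior-concentration estimate $\mathrm{E}\,\|\theta_k-\theta^\star\|^2\lesssim(p+\ln N)/\alpha_k$, where $\alpha_k\gtrsim ck$ is the accumulated excitation delivered by the persistence condition (Ass.~\ref{ass:persistence2}) together with the schedule. Substituting $x_{j+1}=f_{\theta^\star}(x_j,u_j)+n_j$ gives
\begin{equation*}
s_k(\theta)-s_k(\theta^\star)=G_k(\theta)+2M_k(\theta),\qquad G_k(\theta):=\sum_{j<k}\frac{|f_{\theta^\star}(x_j,u_j)-f_\theta(x_j,u_j)|^2}{1+|(x_j,u_j)|^2/b^2}\ge0,
\end{equation*}
with $M_k(\theta)=\sum_{j<k}(1+|(x_j,u_j)|^2/b^2)^{-1}n_j^\top(f_{\theta^\star}(x_j,u_j)-f_\theta(x_j,u_j))$ a martingale of conditional variance $\lesssim\sigma^2 G_k(\theta)$. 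A uniform Freedman/Bernstein bound over $\theta\in\Omega$ — via chaining over $\Omega$ (whose metric entropy is $\mathcal O(p\ln(1/\cdot))$) and a union bound over $k\le N$ — gives $|M_k(\theta)|\le\tfrac14 G_k(\theta)+c_0(p+\ln N)$ for all $\theta$ with probability $\ge1-1/N$, while persistence supplies $G_k(\theta)\ge\tfrac12\alpha_k\|\theta-\theta^\star\|^2-c_0(p+\ln N)$. Then in $\mathrm{P}(\|\theta_k-\theta^\star\|>\delta)=\big(\!\int_{\|\theta-\theta^\star\|>\delta}e^{-\eta s_k(\theta)}\,\mathrm{d}\theta\big)\big/\big(\!\int_\Omega e^{-\eta s_k(\theta)}\,\mathrm{d}\theta\big)$ the numerator is $\le\mathrm{vol}(\Omega)\,\exp(-\eta s_k(\theta^\star)-\tfrac{\eta}{4}\alpha_k\delta^2+\eta c_1(p+\ln N))$, while lower-bounding the denominator by the integral over a Euclidean ball $B_\rho(\theta^\star)$ with $\rho^2\sim p/(\eta\alpha_k)$ gives $\ge\mathrm{vol}(B_\rho)\,\exp(-\eta s_k(\theta^\star)-\eta c_2(p+\ln N))$; the volume ratio $\mathrm{vol}(\Omega)/\mathrm{vol}(B_\rho)\le(2/\rho)^p$ costs only $\mathcal O(p\ln(\alpha_k/p))$ in the exponent, absorbed into $c(p+\ln N)$. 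Hence $\mathrm{P}(\|\theta_k-\theta^\star\|>\delta)\lesssim\exp(-\tfrac{\eta}{4}\alpha_k\delta^2+\eta c(p+\ln N))$, and integrating the tail (using the trivial bound $\|\theta_k-\theta^\star\|\le2$ on the complementary event) yields the claimed estimate.

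Finally, substituting $\mathrm{E}\,\|\theta_k-\theta^\star\|^2\lesssim(p+\ln N)/(ck)$ and summing,
\begin{equation*}
\sum_{k=1}^N\mathrm{E}\big[Q(x_k,u_k)-Q(x_k,\mu_{\theta^\star}(x_k))\big]\;\lesssim\;(\ln N+p)+\sum_{k=1}^N\sqrt{\tfrac{p+\ln N}{ck}}\;\lesssim\;\sqrt{(c_\mathrm{r1}\ln N+c_\mathrm{r2}p)N},
\end{equation*}
using $\sum_{k\le N}k^{-1/2}\le2\sqrt N$ and $\ln N+p\le\sqrt{N(\ln N+p)}$; adding the $c_\mathrm{r3}\sigma^2 d_\mathrm{x}$ offset gives the theorem. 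That $\theta_k$ is re-sampled only every $M$th step is immaterial: the mismatch cost of a length-$M$ block is $M$ times the per-step cost at its first index, and summing $M\sqrt{(p+\ln N)/(c\ell M)}$ over the $N/M$ blocks $\ell$ is again of order $\sqrt{N(p+\ln N)}$, with $M$ absorbed into the constants. The step I expect to be the genuine obstacle is the uniform-in-$\theta$ control of $M_k(\theta)$ together with the volume lower bound on the normalization: the chaining over the \emph{continuous} $\Omega$ must keep the confidence term at order $p$ (up to logs that are harmless once balanced against $\alpha_k\sim k$ and the horizon-level $\ln N$) and be compatible with the persistence lower bound on $G_k(\theta)$ holding uniformly over $\Omega$; in particular, verifying that Ass.~\ref{ass:persistence2} genuinely delivers $\alpha_k\gtrsim ck$ along \emph{all} $p$ directions of parameter perturbation — not merely those aligned with the nominal trajectory — is the delicate point forced by the continuum.
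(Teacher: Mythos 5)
Your overall architecture (a Lyapunov/Bellman telescope, a posterior-concentration step, and a volume lower bound on the normalizing integral that turns $\ln m$ into $p$) is in the same spirit as the paper's proof of Thm.~\ref{thm:large1}. But there is a genuine gap at the step you yourself flag as the crux: the claim that the accumulated excitation satisfies $\alpha_k\gtrsim ck$. Ass.~\ref{ass:persistence2} lower-bounds the per-block information about the direction $\theta$ by $c_\mathrm{e}\sigma_\mathrm{u}^2|\theta|^2$ (times a dimension factor) --- it is proportional to the \emph{exploration} variance only, with no additive term from the process noise $\sigma^2$. Under the schedule $\sigma_{\mathrm{u}k}^2\sim 1/(d_\mathrm{u}k)+p/(d_\mathrm{u}k^2)$ that you take literally from the informal statement, the accumulated excitation is $\sum_{j\le k}d_\mathrm{u}\sigma_{\mathrm{u}j}^2=\mathcal{O}(\ln k+p)$, not $\Theta(k)$. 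Consequently your posterior-contraction estimate $\mathrm{E}\|\theta_k-\theta^\star\|^2\lesssim(p+\ln N)/(ck)$ is unsupported; what the assumption actually delivers is a bound of order $(p+\ln N)/(\ln k + p)$, which does not decay, and your sum $\sum_k(\mathrm{E}\|\theta_k-\theta^\star\|^2)^{1/2}$ then gives linear, not $\sqrt{N}$, regret. The paper's precise schedule in Thm.~\ref{thm:large1} hides an extra factor $1/\epsilon^2$ in $\sigma_{\mathrm{u}k}^2$ (with $\epsilon^2\sim\sqrt{(\ln N+Mp)/N}$), which is exactly what is needed so that the exponent in $\mathrm{Pr}(|\theta_k|>\epsilon)\le e^{p}\exp(-\tfrac{\eta c_\mathrm{e}d_\mathrm{u}}{4}\epsilon^2\sum_j\sigma_{\mathrm{u}j}^2)$ reaches $2\ln k'+p$ and yields $\mathrm{Pr}(|\theta_k|>\epsilon)\le M^2/(k-M)^2$ at the \emph{fixed} resolution $\epsilon$; the $\sqrt{N}$ then comes from trading the residual bias $L_\mu N\epsilon^2$ against the exploration cost $(\ln N+Mp)/\epsilon^2$, not from a $1/\sqrt{k}$ contraction of the posterior.

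A second, related discrepancy: you model the per-step cost of using $\mu_{\theta_k}$ as \emph{first order} in $\|\theta_k-\theta^\star\|$, arguing that $\nabla_u Q$ need not vanish at $\mu_{\theta^\star}(x)$. But the theorem's hypotheses include Ass.~\ref{ass:cont}, a strengthened Bellman inequality asserting a \emph{quadratic} penalty $L_\mu\xi^2$ for all policies $\mu_\theta$ with $|\theta|<\xi$ (justified via Prop.~\ref{prop:suff} when $\mu$ minimizes the $Q$-function). The paper's proof relies on this quadratic penalty: it is what makes the bias term $L_\mu N\epsilon^2$ rather than $L_\mu N\epsilon$, and hence what makes the optimized rate $\sqrt{N(\ln N+p)}$ rather than $N^{2/3}$. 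Your first-order bookkeeping happens to reproduce $\sqrt{N}$ only because it is paired with the unjustified $1/k$ contraction; once the contraction is corrected, neither mechanism survives. To repair the proof you should (i) restore the $1/\epsilon^2$ amplification in the excitation schedule, (ii) work at a fixed resolution $\epsilon$ with the case split $|\theta_k|\le\epsilon$ versus $|\theta_k|>\epsilon$ and the quadratic penalty from Ass.~\ref{ass:cont}, and (iii) optimize over $\epsilon$ at the end --- which is precisely the route the paper takes.
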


\zhrevision{The above theorems follow from Thms. \ref{thm:small1}, \ref{thm:infinite}, and \ref{thm:large1} by renaming constants. We refer to Sec. \ref{Sec:FiniteModel} and the appendix for formal proofs.} 
The results characterize precisely how the policy regret scales with the dimension $d_\mathrm{x}, d_\mathrm{u}$ and the time horizon $N$. In the setting of Thm.~\ref{thm:one}, we have a finite class of models, and the policy regret scales with $\mathrm{ln}(m)$, which is in line with the literature on online learning \citep{cesaBianchi,TorLattimore}.\footnote{\mmrev{The bound depends on $\Delta$, characterizing the discrepancy between models. If models are arbitrarily close to each other, the bound applies asymptotically and becomes loose for small $m$ and $N$. In this situation, a tighter bound of $\mathcal{O}(\sqrt{d_\mathrm{u} N \text{ln}(m)})$ is achieved for small $m$, $N$, by \hzyrev{distinguishing} models to ensure $\Delta\sim \sqrt{d_\mathrm{u}\text{ln}(m)}/\sqrt{N}$. The situation is captured by Thm.~\ref{thm:two} and Thm.~\ref{thm:three} and is therefore not discussed further.}} Thm.~\ref{thm:two} relies on a packing argument, whereby the set $F$ is successively approximated by a finite number of candidate models. The result is stated in full generality; for a specific function class $F$ and packing number $m(\epsilon)$ the right-hand side can be minimized over $\epsilon$ (the discretization width). For instance if $F$ consists of the space of bounded $L$-Lipschitz functions, the packing number $m(\epsilon)$ scales with $d_\mathrm{x} \exp((L/\epsilon)^{d_\mathrm{x}+d_\mathrm{u}})$, which  means that the policy regret grows roughly with $N^{(d_\mathrm{x}+d_\mathrm{u})/(d_\mathrm{x}+d_\mathrm{u}+2)}=o(N)$, and establishes no-regret learning for a very large class of functions. In the special case where $d_\mathrm{x}=d_\mathrm{u}=1$, the right-hand side grows with $\sqrt{N}$. Thm.~\ref{thm:three} is of direct practical importance, since it provides an algorithm and corresponding regret bound that applies to the typical scenario where the functions $F$ are parametrized, for example by neural networks. In the simplest setting, $F$ consists of linear dynamical systems, which directly recovers well-known results from the literature \citep[e.g.,][]{dean2018regret,mania2019certainty,simchowitz2020naive}. More precisely, if $F$ consists of linear dynamical systems, the number of parameters is given by $d_\mathrm{x}^2+d_\mathrm{x}d_\mathrm{u}$, which means that the resulting regret bound scales with $\sqrt{(d_\mathrm{x}^2+d_\mathrm{x}d_\mathrm{u}) d_\mathrm{u}N}$. 

\mmrev{Our algorithms are optimal up to logarithmic factors, since the formulation incorporates online regression as a special case. This results in an $\Omega(d_\mathrm{u}\text{ln}(m))$ lower bound for setting \texttt{(S1)} and a $\Omega(\sqrt{Nd_\mathrm{u}p})$ lower bound for setting \texttt{(S3)} \citep{rahklinLearning}.}

We conclude the summary by commenting on boundedness of states. In control-theoretic applications (and in the related community) boundedness of solutions and benign transients are a primary concern. We will see that in all our results we can ensure boundedness provided that the stage cost satisfies $l(x,u)\geq \underline{L}_\mathrm{l} |x|^2/2$ for a constant $\underline{L}_\mathrm{l}>0$. More precisely, we can guarantee that
\begin{equation}
    \underline{L}_\mathrm{l} \mathrm{E}[|x_k|^2]\leq 2 \mathrm{E}[V(x_k)]\leq c_\mathrm{b} \label{eq:bound1}
\end{equation}
for all $k=1,\dots$, along the trajectories of our reinforcement learning algorithm, where $V$ refers to the cost-to-go function corresponding to the dynamics $f$ and policy $\mu$, and $c_\mathrm{b}>0$ is an explicit constant. Due to the fact that the dynamics are Lipschitz continuous and $n_k,n_{\mathrm{u}k}$ are Gaussian, $x_k,u_k$ are in fact sub-Gaussian with mean and second moment bounded by $\sqrt{c_\text{b}/\underline{L}_\mathrm{l}}$ and $c_\text{b}/\underline{L}_\mathrm{l}$, respectively, and we can therefore characterize tail probabilities for finite $k$, as well as for arbitrarily large values of $k$ under ergodicity assumptions.
\section{Summary of the analysis}\label{Sec:Analysis}
This section discusses the technical details and insights that lead to the results presented in Thm.~\ref{thm:one}-\ref{thm:three}. The presentation focuses on setting \texttt{S1}, since the results in setting \texttt{S2} and \texttt{S3} follow analogously.

\subsection{Finite model set-up}\label{Sec:FiniteModel}
This section considers $F=\{f^1,\dots,f^m\}$ being finite and $f\in F$. We denote the cost-to-go function related to the dynamics $f$ and the policy $\mu$ by $V:\mathbb{R}^{d_\mathrm{x}} \rightarrow \mathbb{R}_{\geq 0}$, where $V$ is any function that satisfies the following assumption:
\begin{assumption}\label{ass:bellman1}
(Bellman-type inequality) The cost-to-go function $V$ (corresponding to $f$ and $\mu$) satisfies the following inequality
\begin{equation}
V(x)\!\geq\! \mathrm{E}[l(x,u)+V(f(x,u)+n)]
-\gamma-d_\mathrm{u} L_\mathrm{u} \sigma_\mathrm{u}^2,\label{eq:Bell}
\end{equation}
for a constant $L_\mathrm{u}$ and for all $x\in \mathbb{R}^{d_\mathrm{x}}$, where $u=\mu(x)+n_\mathrm{u}$, $n\diid \mathcal{N}(0,\sigma^2I)$, $n_\mathrm{u}\diid\mathcal{N}(0,\sigma_\mathrm{u}^2I)$, and the expectation is taken over $n$ and $n_\mathrm{u}$. \end{assumption}
\mmrev{Ass.~\ref{ass:bellman1} is met for linear dynamical systems \citep{abeille2020efficient,abbasi2011regret,simchowitz2020naive,dean2018regret} and nonlinear dynamical systems under dissipation assumptions \citep[][Ch.~5]{khalil}, e.g., \cite{boffi2021regret,li2023online}.} The rationale behind Ass.~\ref{ass:bellman1} is the following: From a dynamic programming point of view computing an optimal policy $\mu$ requires solving a corresponding infinite-horizon average-cost-per-stage problem. In general, a corresponding Bellman equation and cost-to-go function might not exist, as for example discussed in \cite{bertsekas2017dynamic}, Ch.~5. The formulation via Ass.~\ref{ass:bellman1} circumvents these technical difficulties, due to the fact that $\gamma$ is not required to correspond to the optimal infinite-horizon average cost. Indeed, from a control-theoretic point of view Ass.~\ref{ass:bellman1} characterizes a notion of dissipation \citep{willems2007dissipative}, where $V$ represents a storage function and $-l(x,u)+\gamma$ the supply rate (for $\sigma_\mathrm{u}=0$). Moreover, if a Bellman equation \citep[][Prop.~5.5.1]{bertsekas2017dynamic} and corresponding cost-to-go function exist for the dynamics $f$, then Ass.~\ref{ass:bellman1} is clearly satisfied for the corresponding cost-to-go function (for $\sigma_\mathrm{u}=0$). The additional term $d_\text{u} L_\text{u} \sigma_\text{u}^2$ captures the influence of the excitation $n_\text{u}$ and is without loss of generality, since for any smooth function $\xi:\mathbb{R}^{d_\text{u}} \rightarrow \mathbb{R}$ the following applies
\begin{equation*}
    \mathrm{E}[\xi(u+n_\text{u})]=\mathrm{E}\Big[\xi(u)+\nabla \xi(u)\T n_\text{u} + \frac{1}{2} n_\text{u}\T \nabla^2 ~\xi(\bar{u}) ~n_\text{u}\Big] =\mathrm{E}[\xi(u)]+\mathcal{O}(d_\text{u} \sigma_\text{u}^2),
\end{equation*}
where $n_\text{u}\sim\mathcal{N}(0,\sigma_\text{u}^2 I)$. The constant $L_\text{u}$ in Ass.~\ref{ass:bellman1} makes the previous bound quantitative. \revision{Ass.~\ref{ass:bellman1} excludes dynamics where a slight perturbation of the input away from the policy $\mu$ leads to an unbounded cost.\footnote{\revision{A simple example is $x_k=\exp(\exp(u_k^2))$, $\mu=0$, $l(x,u)=x^2$ where $\mathbb{E}[\exp(\exp(n_\text{u}^2))^2]$ is unbounded.}}}

We will further require the following smoothness conditions:
\begin{assumption}\label{ass:smoothness1}
The policies $\mu^i$ are $L_\mu$ Lipschitz, the stage-cost $l$ is $\bar{L}_\mathrm{l}$ smooth, and the cost-to-go function $V$ is $\bar{L}_\mathrm{V}$ smooth and satisfies $V(x)\geq -\underline{c}_\mathrm{V} + \underline{L}_\mathrm{V} |x|^2/2$ for some $\underline{L}_\mathrm{V}> 0$ and $\underline{c}_\mathrm{V}\geq 0$.
\end{assumption}
\mmrev{Ass.~\ref{ass:smoothness1} is met for linear dynamical systems \citep{abeille2020efficient,abbasi2011regret,simchowitz2020naive,dean2018regret} and many nonlinear dynamical systems, e.g., \citep[][Ch.~5]{khalil} when $l$ is a positive definite quadratic.} These smoothness assumptions will be needed to analyze how the cost-to-go $V$ evolves if the feedback policy $\mu^q\neq \mu$ is applied and prevent the state from diverging in finite time. 
We note that the quadratic lower bound on $V(x)$ is automatically satisfied in view of Ass.~\ref{ass:bellman1} if $l(x,u)\geq \underline{L}_\mathrm{l} |x|^2/2$ for a constant $\underline{L}_\mathrm{l}>0$.

We further require the following assumption:

\begin{assumption}\label{ass:persistence1}
There exists an integer $M>0$ and two constants $\Delta>0$ and $b>0$ such that for any $x_1\in \mathbb{R}^{d_\mathrm{x}}$, $\sigma_\mathrm{u}>0$, and $f^i\in F$, $f^i\neq f$,
\begin{equation*}
\frac{1}{M} \sum_{k=1}^M \mathrm{E}\Big[ \frac{|f^i(x_k,u_k)-f(x_k,u_k)|^2}{1+|(x_k,u_k)|^2/b^2}\Big] \geq \Delta \sigma_\mathrm{u}^2
\end{equation*}
holds, where $x_{k+1}=f(x_k,u_k)+n_k$, $u_k=\mu^q(x_k)+n_{\mathrm{u}k}$ with $n_k\diid \mathcal{N}(0,\sigma^2 I)$, $n_{\mathrm{u}k}\diid \mathcal{N}(0,\sigma_\mathrm{u}^2 I)$, and $q\in \{1,\dots,m\}.$  
\end{assumption}
%
The previous assumption specifies persistence of excitation, which guarantees that the estimate $i_k$ of the best candidate model will quickly converge to $i^*$, where $f^{i^*}=f$, and is standard in the statistics (Fisher information) and system identification literature \citep[see, e.g.,][Ch.~8.2]{LjungIdentification}; \citep{slotine,FisherInfTutorial2017,chatzikiriakos2025high}. Ass.~\ref{ass:persistence1} is generically satisfied for linear systems with $M=2$, whereby the constant $\Delta$ relates to the controllability of the closed-loop dynamics and the accuracy $|A^i-A|_\mathrm{F}^2$ and $|B^i-B|_\mathrm{F}^2$ of the different candidate models with $|\cdot|_\mathrm{F}$ the Frobenius norm, see App.~\ref{App:relax}. The assumption is also met for a broad class of nonlinear dynamical systems, as shown in App.~\ref{App:relax} and Prop.~\ref{Prop:NonlinGeneralization}. 
Ass.~\ref{ass:persistence1} includes a normalization with the constant $b$, whereas the literature usually considers $b\rightarrow \infty$. However, as discussed in App.~\ref{App:relax} our analysis also encompasses the case $b\rightarrow \infty$; the resulting constants are more elaborate and we therefore focus our discussion on the situation where $b$ is finite. Moreover, Ass.~\ref{ass:persistence1} is a more general version of the ``uniformly excited feature'' assumption, which is common in online reinforcement learning \citep{OnlineSparse2021,Liu2023,LazicOffPolicy2020} and arises from $f^i(x,u)=\phi(x,u)\T\theta^i$ and setting $M=1$. 


Our analysis of Alg.~\ref{Alg:S1} starts by showing that the convergence to the best candidate model is fast, which leads to the logarithmic scaling of the policy regret with $N$ and $m$. This is summarized with the following proposition:
\begin{proposition}\label{Prop:prconv1}
Let Ass.~\ref{ass:persistence1} be satisfied and let the step size be $\eta\leq \min\{1/(4M\sigma^2),1/(2ML^2b^2)\}$. Then, the following holds
\begin{align*}
\mathrm{Pr}(i_{k}=i)\leq &\exp\left(-\frac{\Delta \eta}{4} \sum_{j=1}^{k-M} \sigma_{\mathrm{u}j}^2\right),
\end{align*}
for $k=1,2,\dots$ and any $i\neq i^*$, where $f^{i^*}=f$. Moreover, 
it holds that
\begin{equation*}
\mathrm{Pr}(i_{k}\neq i^*)\leq \frac{M^2}{(k-M)^2}, \qquad \forall k\geq M+1
\end{equation*}
for $\sigma_{\mathrm{u}k}^2=\frac{4}{\eta \Delta M} \left( \frac{2}{\lceil k/M \rceil} + \frac{ \mathrm{ln}(m)}{(\lceil k/M \rceil)^2}\right)$,
where $\lceil \cdot \rceil$ denotes rounding to the next higher integer.
\end{proposition}
\begin{proof} The proof can be found in App.~\ref{App:prconv1} and relies on a concentration of measure argument.\qed
\end{proof}
An immediate corollary of the fast convergence rate established with Prop.~\ref{Prop:prconv1} is that the sequence $i_k$ will converge to $i^*$ in finite time (almost surely), where $f^{i^*}=f$. This is discussed in Cor.~\ref{Cor:FiniteTime}. As a result of Prop.~\ref{Prop:prconv1}, we are now ready to state and prove our first main result that characterizes the policy regret in setting \texttt{S1}.
\begin{theorem}\label{thm:small1}
Let Ass.~\ref{ass:bellman1}, \ref{ass:smoothness1} and \ref{ass:persistence1} be satisfied and choose $\eta\leq \min\{1/(4M\sigma^2),1/(2ML^2b^2)\}$ and $\sigma_{\mathrm{u}k}^2$ as in Prop.~\ref{Prop:prconv1}. Then, the policy regret of Alg.~\ref{Alg:S1} is bounded by
\begin{equation*}
\sum_{k=1}^N \mathrm{E}[l(x_k,u_k)] - N\gamma \leq c_{r1} + c_{r2} M d_\mathrm{u}\mathrm{ln}(m)/\Delta+c_{r2} d_\mathrm{u}\mathrm{ln}(N)/\Delta,
\end{equation*}
for all $N\geq 2M$, where the constants $c_\text{o}, c_2$ are specified in Lemma~\ref{Lemma:intermediate1}, and $c_{r1}, c_{r2}$ are given by
\begin{gather*}
c_{r1}=3 c_\alpha M (d_\mathrm{x} \sigma^2 \bar{L}_\mathrm{V}/2+c_\mathrm{o})+c_{r2} d_\mathrm{u}/\Delta,\quad
c_{r2}=8 c_\alpha(\bar{L}_\mathrm{V} L^2 +\bar{L}_\mathrm{l}+L_\mathrm{u})/\eta, \quad c_\alpha=e^{3 c_2 M}.
\end{gather*}
\end{theorem}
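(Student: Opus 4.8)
The plan is to track the expected cost-to-go $\mathrm{E}[V(x_k)]$ along the closed-loop trajectory and decompose the per-step increment into (i) the "correct" decrease that would occur if the true policy $\mu$ were applied (controlled by the Bellman-type inequality in Ass.~\ref{ass:bellman1}), and (ii) an error term that is active only on the event $\{i_k \neq i^*\}$, whose probability is summably small by Prop.~\ref{Prop:prconv1}. Concretely, I would write a telescoping/Lyapunov-type estimate: for each $k$,
\begin{equation*}
\mathrm{E}[V(x_{k+1})] \leq \mathrm{E}[V(x_k)] - \mathrm{E}[l(x_k,u_k)] + \gamma + d_\mathrm{u} L_\mathrm{u} \sigma_{\mathrm{u}k}^2 + R_k,
\end{equation*}
where $R_k$ collects the mismatch caused by applying $\mu^{i_k}$ instead of $\mu$. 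Using the smoothness of $V$ and $l$ and Lipschitzness of the policies (Ass.~\ref{ass:smoothness1}), $R_k$ is bounded by something like $c\,\mathrm{Pr}(i_k \neq i^*)\,(1 + \mathrm{E}[|x_k|^2])$ on the bad event, plus lower-order noise contributions of order $\sigma^2 d_\mathrm{x}$ and $\sigma_{\mathrm{u}k}^2 d_\mathrm{u}$. Summing from $k=1$ to $N$ and using $V \geq 0$ (and the quadratic lower bound $V(x)\geq -\underline c_\mathrm{V} + \underline L_\mathrm{V}|x|^2/2$) to drop the final term gives
\begin{equation*}
\sum_{k=1}^N \mathrm{E}[l(x_k,u_k)] - N\gamma \leq V(x_1) + \sum_{k=1}^N \big( d_\mathrm{u} L_\mathrm{u} \sigma_{\mathrm{u}k}^2 + R_k \big) + c_\mathrm{r3}\sigma^2 d_\mathrm{x},
\end{equation*}
which is the shape of the claimed bound.

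\textbf{Controlling the three pieces.} The $\sigma^2 d_\mathrm{x}$ term is immediate from the Hessian-trace term in the Taylor expansion of $V$ around $f(x_k,u_k)$ against the Gaussian noise $n_k$. For the excitation term, with the prescribed $\sigma_{\mathrm{u}k}^2 = \frac{4}{\eta d_\mathrm{u} c_\mathrm{e} M}\big(2/\lceil k/M\rceil + \ln(m)/\lceil k/M\rceil^2\big)$, I sum over $k$: the harmonic part $\sum_k 1/\lceil k/M\rceil \approx M \ln(N/M) = \mathcal{O}(M\ln N)$ and the $\ln(m)$ part $\sum_k \ln(m)/\lceil k/M\rceil^2 = \mathcal{O}(M\ln m)$, so $\sum_k d_\mathrm{u} L_\mathrm{u}\sigma_{\mathrm{u}k}^2 = \mathcal{O}\big(\frac{L_\mathrm{u}}{\eta c_\mathrm{e}}(\ln N + \ln m)\big)$, which contributes the $c_\mathrm{r2}\ln N$ and $c_\mathrm{r2} M\ln m$ structure (the extra $M$ on the $\ln m$ term tracks the block length). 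For the misidentification term, $\mathrm{Pr}(i_k \neq i^*)\leq M^2/(k-M)^2$ is summable with $\sum_{k>M} M^2/(k-M)^2 = \mathcal{O}(M^2)$, contributing an $\mathcal{O}(M)$-type constant into $c_\mathrm{r1}$ — but this is exactly where the second-moment bound on $x_k$ must be invoked, because $R_k$ carries a factor $\mathrm{E}[|x_k|^2 \mathds{1}_{i_k\neq i^*}]$, not just the probability.

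\textbf{The main obstacle} is precisely closing this loop: bounding $\mathrm{E}[V(x_k)]$ (equivalently $\mathrm{E}[|x_k|^2]$) uniformly in $k$ so that the error term $R_k$ is genuinely summable. The difficulty is that the bad event $\{i_k\neq i^*\}$ lets the state grow under a possibly destabilizing policy $\mu^q$; one must show the state cannot blow up in finite time (using Lipschitz dynamics and Lipschitz policies, so $\mathrm{E}[|x_{k+1}|^2] \leq c\,\mathrm{E}[|x_k|^2] + \text{noise}$, giving at worst geometric growth within a block of length $M$), and then combine this "worst-case-per-block" growth with the decaying probability $M^2/(k-M)^2$ of actually being in such a block. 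This is what produces the constant $c_\alpha = e^{3c_2 M}$ — an exponential-in-$M$ factor absorbing the geometric state growth over one block — and what forces the intermediate Lemma~\ref{Lemma:intermediate1} the theorem defers to. I would structure this as: first prove a one-block second-moment recursion (an intermediate lemma, $c_\mathrm{o}$ and $c_2$), then feed it together with Prop.~\ref{Prop:prconv1} into a Gr\"onwall/supermartingale-style argument to get $\sup_k \mathrm{E}[V(x_k)] = \mathcal{O}(c_\alpha(d_\mathrm{x}\sigma^2\bar L_\mathrm{V} + c_\mathrm{o})M)$, and finally substitute into the telescoped sum above, reading off $c_\mathrm{r1}, c_\mathrm{r2}$ in the stated form.
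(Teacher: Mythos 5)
Your high-level strategy---use $V$ as a Lyapunov function, split each step according to whether $i_k=i^*$, invoke the Bellman inequality on the good event and smoothness on the bad event, and then sum using Prop.~\ref{Prop:prconv1} and the explicit series for $\sigma_{\mathrm{u}k}^2$---is exactly the paper's. The gap is in how you propose to close the bad-event term. You reduce it to $R_k \lesssim \mathrm{Pr}(i_k\neq i^*)\,(1+\mathrm{E}[|x_k|^2])$ and then plan to prove $\sup_k \mathrm{E}[V(x_k)]<\infty$ and substitute. But under the hypotheses of this theorem (Ass.~\ref{ass:bellman1}--\ref{ass:persistence1} only), no uniform second-moment bound is available: the Bellman inequality permits $\mathrm{E}[V(x_k)]$ to drift upward by roughly $\gamma$ per step whenever $\mathrm{E}[l(x_k,u_k)]<\gamma_k$, so in general one only gets $\mathrm{E}[V(x_k)]=\mathcal{O}(k)$. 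The uniform bound you want is the content of Cor.~\ref{Cor:Boundedness}, which needs the \emph{additional} coercivity assumption $l(x,u)\geq \underline{L}_\mathrm{l}|x|^2/2$ that Thm.~\ref{thm:small1} does not make. Even granting a uniform bound on the unconditional moment, $\mathrm{E}[|x_k|^2\mathds{1}_{i_k\neq i^*}]$ is not controlled by $\mathrm{Pr}(i_k\neq i^*)\sup_j\mathrm{E}[|x_j|^2]$ without an argument about the correlation between large states and misidentification---you name this issue yourself but your proposed fix does not address it.

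The paper avoids the problem entirely by a different bookkeeping. Lemma~\ref{Lemma:intermediate1} keeps the stage cost $-\mathrm{E}[l(x,\mu^i(x)+n_\mathrm{u})]$ on the right-hand side \emph{even in the bad case} (absorbing its quadratic upper bound into the $c_2V(x)$ term), so that combining the two branches yields
\begin{equation*}
\mathrm{E}[V(x_{k+1})]\leq \bigl(1+c_2\,\mathrm{Pr}(i_k\neq i^*)\bigr)\mathrm{E}[V(x_k)]+\gamma_k-\mathrm{E}[l(x_k,u_k)]+g_k^+,
\end{equation*}
with $g_k^+$ containing only noise- and probability-weighted \emph{constants}, never $\mathrm{E}[|x_k|^2]$. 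The regret increment $\mathrm{E}[l(x_k,u_k)]-\gamma_k\geq 0$ then telescopes through the multiplicative perturbation via Lemma~\ref{lemma:seq}, and $c_\alpha=e^{3c_2M}$ arises as $\prod_k(1+c_2\mathrm{Pr}(i_k\neq i^*))\leq \exp(c_2\sum_k\mathrm{Pr}(i_k\neq i^*))\leq \exp(3c_2M)$ using $\sum_k\mathrm{Pr}(i_k\neq i^*)\leq 3M$---it is the cumulative product over the whole horizon, not, as you describe it, a per-block geometric growth factor. To repair your argument you would either need to add the coercivity assumption and essentially reprove Cor.~\ref{Cor:Boundedness} first, or restructure the bad-event bound so the state dependence enters multiplicatively as above; the second is what the paper does and is what actually produces the stated constants $c_{r1},c_{r2}$.
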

\begin{proof} (Sketch, details are in App.~\ref{App:ThmSmall1})
The proof relies on using $V$ as a Lyapunov function and performing the following decomposition
\begin{align}
\mathrm{E}[V(x_{k+1})]=&\mathrm{E}[V(x_{k+1})|i_k\neq i^*] \mathrm{Pr}(i_k\neq i^*) + \mathrm{E}[V(x_{k+1})|i_k=i^*] \mathrm{Pr}(i_k = i^*).\label{eq:decomp2}
\end{align}
The first term describes the evolution of $V(x_{k+1})$ when choosing $i_k\neq i$, and in this (unfavorable) situation $V$ may grow at most exponentially. This is captured by the following bound that relies on the continuity assumptions on $V$ (see Lemma~\ref{Lemma:intermediate1}) 
\begin{equation*}
\mathrm{E}[V(x_{k+1})|i_k\neq i^*]\leq c_2 \mathrm{E}[V(x_k)] + \mathcal{O}(\sigma^2+\sigma_{\mathrm{u}k}^2)
-\mathrm{E}[l(x,u_k)|i_k\neq i^*],
\end{equation*}
where the notation $\mathcal{O}$ hides continuity and dimension-related constants. The second term in \eqref{eq:decomp2}, describes the favorable situation of choosing $i_k=i^*$, where $V(x_{k+1})$ is bounded as a result of the Bellman-type inequality \eqref{eq:Bell}. This yields:
\begin{equation*}
\text{E}[V(x_{k+1})|i_k=i^*]\leq \mathrm{E}[V(x_k)]+\gamma+\mathcal{O}(\sigma_{\mathrm{u}k}^2)
-\text{E}[l(x_k,u_k)|i_k=i^*],
\end{equation*}
where continuity and dimension-related constants are again hidden. By combining the two inequalities we arrive at
\begin{equation}
\mathrm{E}[V(x_{k+1})] \leq \mathrm{E}[V(x_k)] (c_2 \mathrm{Pr}(i_k\neq i^*)+1)
+ \gamma -\mathrm{E}[l(x_k,u_k)] + \mathcal{O}(\sigma_{\mathrm{u}k}^2 + \mathrm{Pr}(i_k\neq i^*) \sigma^2). \label{eq:decreaseVV}
\end{equation}
From Prop.~\ref{Prop:prconv1}, we know that $\mathrm{Pr}(i_k\neq i^*)$ decays at rate $1/k^2$. This means that, roughly speaking, the inequality \eqref{eq:decreaseVV} gives rise to a telescoping sum (see Lemma~\ref{lemma:seq} for details), which yields
\begin{equation*}
\sum_{k=1}^N \mathrm{E}[l(x_k,u_k)] - \gamma N \leq \mathcal{O}(\sum_{k=1}^N (\sigma_{\mathrm{u}k}^2 +\mathrm{Pr}(i_k\neq i^*)\sigma^2)).
\end{equation*}
The fact that $\mathrm{Pr}(i_k\neq i^*)$ is summable, due to the decay at rate $1/k^2$, and that the sum over $\sigma_{\mathrm{u}k}^2$ evaluates to $\mathcal{O}(\mathrm{ln}(N)+\mathrm{ln}(m))$ establishes the desired result up to constants (these are computed in App.~\ref{App:ThmSmall1}).
\qed\end{proof}

The proof of Thm.~\ref{thm:small1} relies on using $V$ as a Lyapunov function. Provided that the stage cost $l(x,u)$ is bounded below by a quadratic of the type $|x|^2$, we can modify the analysis in straightforward ways to obtain explicit bounds on $\mathrm{E}[V(x_k)]$ and hence on $\mathrm{E}[|x_k|^2]$, uniform over $k$, which is an important concern in the adaptive control community. Moreover, these bounds require persistence of excitation only over a finite number of steps, since $\mathrm{Pr}(i_k\neq i^*)$ is monotonically decreasing even when Ass.~\ref{ass:persistence1} is not satisfied. The details are presented in App.~\ref{App:Boundedness}. 

\subsection{Infinite cardinality}\label{Sec:infinite}\vspace{-3pt}
The ideas described in the previous section translate to the situation in which the set of candidate models $F$ is a bounded subset of a normed vector space with norm $\|\cdot\|$. For example, $F$ could represent the set of bounded, $L$-Lipschitz continuous functions that map from $\mathbb{R}^{d_\mathrm{x}}\times \mathbb{R}^{d_\mathrm{u}} \rightarrow \mathbb{R}^{d_\mathrm{x}}$, with $\|\cdot\|$ the supremum norm. Alternatively, $F$ could be a bounded subset of the set of square integrable functions. \zhrevision{Our purpose} is to provide an upper bound on the \emph{sample complexity} of online reinforcement learning in this very general setting and not to characterize \emph{computational complexity}; see the next subsection for a computationally tractable variant. \zhrevision{As such, the results herein center on learnability guarantees rather than direct deployment of online reinforcement learning for a continuum of models.}

\revision{The key insight is that \texttt{S2} can be reduced to \texttt{S1} by constructing an $\epsilon$-packing $F_k^\epsilon$ of the set $F$ in an online way, by greedily adding functions $f^i\in F$ as long as $\|f^i-\bar{f}\|>\epsilon$ for all $\bar{f}\in F_k^\epsilon$. As a result $F_k^\epsilon$ covers $F$ by construction, i.e., for every $f\in F$ there exists $f^i\in F_k^\epsilon$ such that $\|f^i-f\|\leq \epsilon$. The cardinality of $F_k^\epsilon$ is bounded by the packing number of $F$, which is denoted by $m(\epsilon)$. The construction needs to be done online in order to include the minimizer $\argmin_{\bar{f}\in F} s_k(\bar{f})$ in $F_k^\epsilon$, where $s_k(\bar{f})$ denotes the prediction error as before, accumulated over $k-1$ steps (see \eqref{eq:defsk}).}
This means that the arguments used in deriving Prop.~\ref{Prop:prconv1} apply in the same way 
and implies that $\mathrm{Pr}(i_k\not\in I_k^*)\leq M^2/(k-M)^2$ as before, where $I_k^*$ denotes the set of models $f^{i*}\in F_k^\epsilon$ that satisfy $\|f^{i^*}-f\|\leq \epsilon$. As a result, the same arguments as in the proof of Thm.~\ref{thm:small1} apply, which yields the statement of Thm.~\ref{thm:two}. \zhrevision{The details of the corresponding algorithm (i.e., Alg.~\ref{Alg:S2}) and analysis are presented in App.~\ref{app:infinite}.}


\subsection{Parametric models}\label{Sec:ParametricModels}
\vspace{-3pt}
The following section discusses the situation where the set of candidate models $F$ is parametrized by a parameter $\theta\in \Omega \subset \mathbb{R}^p$, where $\Omega$ is contained in a $p$-dimensional unit ball, that is,
\begin{equation*}
    F=\{f_\theta: \mathbb{R}^{d_\mathrm{x}}\times \mathbb{R}^{d_\mathrm{u}} \rightarrow \mathbb{R}^{d_\mathrm{x}}~|~\theta\in \Omega\}.
\end{equation*}
The canonical example we have in mind is when $f_\theta$ is parametrized with a large neural network, transformer, or state-space architecture, where $\theta$ represents the parameters. As in the previous section, we assume that $f\in F$, and without loss of generality, we set $f=f_{\theta=0}$, i.e., the parameters are centered around $f$.

Alg.~\ref{Alg:S3} has a particularly straightforward interpretation, which also facilitates its implementation in practice. In each iteration, $f_{\theta_k}$ is sampled from the posterior distribution over models $f_\theta$, scaled by $\eta$. In the special case where $f_\theta(x,u)=\phi(x,u)\T \theta$ we note that the density $\exp(-\eta s_k(\theta))/Z$ corresponding to the random variable $\theta_k$ is Gaussian, with mean and covariance
\begin{equation*}
    \argmin_{\theta\in \mathbb{R}^p} \sum_{j=1}^{k-1} \frac{|x_{j+1}-\phi(x_j,u_j)\T \theta|^2}{1+|(x_j,u_j)|^2/b^2}, \quad  \frac{1}{2\eta} \left(\sum_{j=1}^{k-1} \frac{\phi(x_j,u_j)\phi(x_j,u_j)\T}{1+|(x_j,u_j)|^2/b^2} \right)^{-1}.
\end{equation*}
The Gaussian mean and covariance can be efficiently evaluated by running a recursive least squares algorithm, resulting in a per-iteration computational complexity of only $\mathcal{O}(p^2)$. The corresponding computation of the policy $\mu_\theta$ for the model $f_\theta$ is more challenging, but can, in principle, be done offline with dynamic programming, or in an offline simulation with proximal policy optimization, for example. A notable exception is when $\phi(x,u)$ is linear, in which case the corresponding (steady-state optimal) policy $\mu_\theta$ is linear and can be computed by solving a Riccati equation in $\mathcal{O}(d_\mathrm{x}^3)$ steps. If $f_\theta$ has a more general structure, the sampling can, for example, be implemented with Langevin Monte-Carlo~\citep{wibisono,kijung}. The regret analysis follows the same steps as in Sec.~\ref{Sec:FiniteModel} and is included in App.~\ref{app:parametric}.

\revision{Numerical examples that illustrate Alg.~\ref{Alg:S1} and Alg.~\ref{Alg:S3} on linear and nonlinear systems are included in App.~\ref{Sec:Numerics} and highlight a rapid convergence of the posterior distribution over models, as well as, the implementation of the policy evaluation via model predictive control. All experiments run in a few minutes on a laptop. For nonlinear systems, the bottleneck of Alg.~\ref{Alg:S1} is found to be in the policy evaluation, which does not depend on the number of candidate models, rather than the posterior sampling over models (even when scaling to 10,000 models).}


\section{Conclusion}\label{Sec:Conclusion}\vspace{-3pt}
This article provides policy-regret guarantees for online reinforcement learning with \emph{nonlinear dynamical systems over continuous state and action spaces}. We provide a suite of algorithms and prove that the resulting policy regret over $N$ steps scales as $\mathcal{O}(d_\mathrm{u}\mathrm{ln}(N)/\Delta+d_\mathrm{u}\mathrm{ln}(m)/\Delta)$ in a setting where there is a finite class of $m$ models that are separated via the constant $\Delta$ and as $\mathcal{O}(\sqrt{d_\mathrm{u} Np})$ in a setting where models are parametrized over a compact real-valued space of dimension $p$. The results require persistence of excitation, and rely on continuity assumptions on the dynamics.

The results highlight important and fruitful connections between reinforcement learning and control theory and open numerous exciting future research avenues. Technical extensions include i) developing a computationally tractable algorithm for setting \texttt{(S2)} through hierarchical coverage of model classes, ii) designing exploration strategies beyond additive Gaussian noise that guarantee persistence of excitation and iii) handling partial observability and non-additive noise in dynamics. 

\ifdefined\arxivVersion
\else
\subsection*{Acknowledgments}
We thank the German Research Foundation and the Max Planck ETH Center for Learning Systems for the support. We also acknowledge funding from the Chair ``Markets and Learning,'' supported by Air Liquide, BNP PARIBAS ASSET MANAGEMENT Europe, EDF, Orange and SNCF, sponsors of the Inria Foundation.
\fi


\bibliography{reference}
\bibliographystyle{iclr2026_conference}

\newpage
\appendix
\section{Related work}\label{App:relWork}






This article revolves around online reinforcement learning with multiple nonlinear candidate models. We adopt a viewpoint at the intersection of online decision-making, reinforcement learning, and adaptive control. We review representative works along major distinct algorithmic ideas as follows.


\textbf{Posterior sampling reinforcement learning} \mmrev{has been introduced by \cite{osband2013more,osband2014model,osband2016generalization,osband2017posterior}, where sample complexity of online reinforcement learning is typically quantified via the eluder dimension. These works focus on episodic reinforcement learning and draw analogies to stochastic multi-armed bandits and Thompson sampling \citep{russo2018tutorial}. The relation to Thompson sampling enables algorithms that are effective in practice and, compared to optimistic approaches \hzyrev{discussed below}, avoid the computation of confidence regions, which can be nontrivial. These initial works focus on episodic and finite-state-action settings. The resulting policy-regret guarantees are Bayesian (so-called Bayesian regret), i.e., the regret bounds do not hold for any potential environment in the set of candidates, but only in expectation, when environments are sampled according to the prior. Our work establishes frequentist policy-regret guarantees by introducing additional exploration. One line of follow-up works by \cite{abbasi2015bayesian,abeille2017thompson,kargin2022thompson} deals with continuous states and actions by exploiting linear parameterizations of dynamics and/or controllers. Another line of works by \cite{fan2021model,sasso2023posterior} leverages approximations of dynamics, rewards, or value functions to navigate through continuous state-action spaces. Model-free variants of posterior-sampling reinforcement learning \citep{osband2016generalization,osband2019deep,janz2019successor} maintain a distribution of value functions (instead of dynamics), albeit entailing challenges in explicit representations and accurate updates of such value functions. The online non-episodic scenario, most relevant to our work, is addressed in \cite{abbasi2015bayesian,theocharous2018scalar,xu2022posterior}. Along this line, \cite{xu2022posterior} augment existing posterior sampling reinforcement learning with an environment resampling at random Bernoulli trials. This addition generalizes the theory to high-dimensional weakly communicating Markov decision processes and continuous state-action spaces via function approximation in non-episodic settings.
}

\mmrev{
Our algorithmic paradigm falls into the category of posterior sampling reinforcement learning. Different from existing works, we bridge insights from statistics (Hedge updates) and control (dissipativity and Lyapunov analysis) to handle dependent states, actions, and losses in online non-episodic reinforcement learning with continuous states and actions. This bridge enables us to explicitly establish \emph{frequentist} policy regret guarantees, closed-loop stability, and bounded losses in the presence of complex coupling arising from non-stationary and nonlinear dynamics. Our policy-regret guarantees are subject to a persistence of excitation/identifiability assumption as introduced in the system identification and statistics literature \citep[][Ch.~8.2]{LjungIdentification}; \citep{slotine}, which are weaker than the mixing assumptions from \cite{xu2022posterior} that, e.g., do not apply to dynamics close to the stability boundary.
}

\textbf{Optimism in the face of uncertainty} is a well-known paradigm from the multi-armed bandit literature that inherently trades off exploration with exploitation. The paradigm has been applied to online reinforcement learning in the tabular setting by \cite{auer2006logarithmic,bartlett2009regal,jaksch2010near}, and later extended to more general Markov decision processes by \cite{abbasi2011regret,cohen2019learning,abeille2020efficient,croissant2024near,kakade2020information,zanette2021,yang2020reinforcement,jin2023provably}. Later approaches hinge on iteratively refining parametric models with confidence bounds and applying policies associated with the most optimistic model. The resulting frequentist regret bounds typically enjoy a square-root dependence on the time horizon (except for \cite{zanette2021}) and introduce different complexity measures that capture the function class at hand. The work by \cite{yang2020reinforcement,jin2023provably} focuses on an episodic setting with linear Markov decision processes. The bound in \cite{kakade2020information} depends on the dimension of a feature representation and is vacuous if process noise is absent, \cite{croissant2024near} relies on a combination of eluder dimension and covering numbers, \cite{cohen2019learning,abeille2020efficient} focus on the linear-quadratic regulator. More generally, the challenges of the optimism in the face of uncertainty paradigm lie in i) computing confidence sets and ii) optimizing over optimistic policies on the basis of these confidence sets. The suggested complexity measures, such as the eluder dimension \citep{foster2023foundations} and variants thereof, tend to be difficult to evaluate and to assert boundedness.

Our algorithms do not introduce optimism in the face of uncertainty, thereby avoiding the computation of confidence intervals and optimisitic policies (in Alg.~\ref{Alg:S1} policies can even be computed offline). However, exploration must be incorporated and analyzed explicitly, and we rely on the notion of persistence of excitation to do so. Persistence of excitation has a rich history in system identification, statistics, and adaptive control \citep{LjungIdentification,slotine}, and is tailored to continuous state-action systems, unlike eluder dimension, Bellman-rank, etc., which are rooted in (linear) Markov decision processes and dynamic programming. We further use packing numbers, which is standard and well-established, as a complexity measure for our function class. The overall policy-regret guarantees share a similar square root dependence on $N$ with state-of-the-art algorithms based on the principle of optimism, but feature a complexity measure suitable for continuous state-action systems. 

\textbf{The separation principle} describes the algorithmic idea of separating identification and optimal control. This approach has turned out to be very effective for linear quadratic regulators \citep[see, e.g.,][]{dean2018regret,mania2019certainty,simchowitz2020naive}. The earlier work by \cite{dean2018regret} incorporated model-uncertainty via a robust control synthesis; however it was later found \citep{mania2019certainty,simchowitz2020naive} that the robust control synthesis can be replaced by solving the standard linear quadratic regulator problem, thereby simplifying control synthesis. The works by  \cite{mania2019certainty,simchowitz2020naive} offer improved policy-regret guarantees and lower bounds \citep{simchowitz2020naive}. 

This work shares the idea of certainty-equivalence with \cite{mania2019certainty,simchowitz2020naive}, but with the key difference that model candidates are sampled from a posterior over models rather than the maximum a-posteriori. The posterior sampling enables a Hedge-type analysis that facilitates generalization beyond linear systems, whereas the works mentioned rely on perturbation bounds to discrete Riccati equations, which are tailored to linear time-invariant systems. We further recover a $\mathcal{O}(\sqrt{d_\mathrm{u}Np})$ policy-regret guarantee for linear systems matching \cite{simchowitz2020naive}.

\textbf{Multi-model adaptive control} emphasizes the versatility of a system to handle diverse operating conditions by switching among multiple candidate models and associated controllers \citep{narendra1997adaptive,anderson2000multiple,hespanha2001multiple,muehlebachRegret,iannelliRegret}. There is a supervisory policy that tracks the performance of the running controller and, if necessary, applies another more appropriate one based on a switching logic. Oftentimes the switching criterion follows the model (and the corresponding controller) with the smallest estimation error integral \citep{liberzon2003switching,anderson2008challenges} or implements performance-based falsification \citep{safonov1997unfalsified}. These works mainly focus on asymptotic stabilization, whereas this article explores online reinforcement learning characterized by nonasymptotic policy-regret. 


\textbf{Online control with switching policies} is closely related to adaptive control with multiple models. Nonetheless, instead of tackling asymptotic stabilization, online control addresses optimal control from a modern finite-sample perspective. Specifically, \cite{li2023online,kim2024online} consider regulating a nonlinear dynamical system by iteratively selecting a control input from a finite set of candidate control policies. The key principles are to use the system trajectory driven by the chosen controller as a performance criterion to remove non-stabilizing controllers and identify the best stabilizing controller in hindsight via Exp3 \citep{auerEXP3}, a classical multi-armed bandit algorithm. The regret bounds therein scale sublinearly with the time horizon, but grow exponentially with the number of non-stabilizing controllers. In contrast, in the setting with finite candidate models, our algorithm attains a favorable logarithmic regret in terms of both the time horizon and the number of models. Furthermore, we extend the design and analysis to handle a continuum of nonlinear candidate models contained in a bounded subset of a normed space.
Our multi-model perspective is also connected to the line of works by \cite{doya2002multiple,rajeswaran2017epopt,modi2020sample} that use dynamic convex combinations of an ensemble of models to synthesize policies. In contrast, we tackle a challenging non-episodic scenario without state reset and handle more general model classes including parametric families and families with infinite cardinality.

\textbf{Online optimization for continuous control} is based on online performance optimization over an appropriate policy class. A frequent policy class is given by disturbance-action policies \citep{agarwal2019online,hazan2020nonstochastic,simchowitz2020improper,li2021online,chen2021black,zhao2023non}, which are motivated by the design of robust controllers for linear systems \citep{gartskaWets1974,Loefberg2003,Goulart2006}. These works are restricted to linear systems, although growing attention is currently paid to online nonlinear control, where additional structure, e.g., matched uncertainty \citep{boffi2021regret}), contractive perturbation \citep{lin2024online} or incremental input-to-state stability \citep{karapetyan2023closed} is introduced. Another approach is to parametrize policies, e.g., via linear functions and to apply zeroth-order or first-order optimization techniques \citep[see, e.g.,][]{fazel2018global,haoma2024,hu2023toward}. We refer the readers to \cite{hazan2022introduction,tsiamis2023statistical} for comprehensive reviews. Compared to the algorithms and analysis presented herein, the benchmarks in these works are typically different, and focus, for example, on the set of linear disturbance feedback policies or various policy parametrizations that are optimized subject to a fixed system model. The system dynamics are a-priori known to the decision-maker. Hence, no tradeoff between exploration and exploitation is needed and the methods are more closely related to online or stochastic optimization, rather than reinforcement learning. 

In summary, all the aforementioned works provide a comprehensive ground for online decision-making. Nonetheless, achieving sublinear policy regret in an online \hzyrev{non-episodic} regime encompassing a broad class of nonlinear dynamics remains a critical challenge. In this article, we adopt a multi-model perspective and provide a suite of algorithms that identify the best candidate model and apply a certainty-equivalent policy, all equipped with nonasymptotic frequentist policy-regret guarantees. \mmrev{More precisely, for linear quadratic regulator problems we recover $\mathcal{O}(\sqrt{d_\mathrm{u}Np})$ regret bounds derived in earlier works (e.g., \citep{simchowitz2020naive}), but our analysis generalizes to nonlinear dynamics \hzyrev{and smooth non-quadratic stage costs}. In the nonlinear setting, prior work has established similar $\mathcal{O}(\sqrt{N})$ results, however, under different assumptions, such as parametrization via kernels \citep{kakade2020information}, dynamics linear in the parameters \citep{boffi2021regret}, or contraction \citep{lin2024online}. Other works, \hzyrev{such as \cite{li2023online} achieve policy regret of $\mathcal{O}(m^{1/3}N^{2/3})+\exp(\mathcal{O}(|\mathcal{M}|))$ in a finite model setting ($|\mathcal{M}|$ is the number of potentially destabilizing candidate controllers)}, which is much worse than our result $\mathcal{O}(\text{ln}(N)+\text{ln}(m))$.} \hzyrev{While aligned with posterior sampling reinforcement learning \citep{osband2013more}, we tackle an online non-episodic setting with dependent continuous states, actions, and losses. We further establish frequentist policy-regret guarantees, closed-loop stability, and bounded losses.} Compared to optimistic methods \citep{abeille2020efficient,croissant2024near} that achieve $\mathcal{O}(\sqrt{N})$ regret and a square-root dependence on the eluder dimension, our assumptions are explicit and encompass a large set of nonlinear systems. \hzyrev{Further, we characterize the scaling of policy regret with respect to complexity measures of the model class in the finite, nonparametric, and parametric regimes.} Our approach avoids the computation of optimistic policies or confidence regions and can therefore be directly integrated in nonlinear model predictive control techniques \citep{rawlings2017model,Borrelli_Bemporad_Morari_2017}. We envision that fruitful advances in these directions will further consolidate our multi-model perspective on online decision-making.

\section{Numerical example}\label{Sec:Numerics}
\revision{We present results of two numerical simulations to illustrate our algorithms. The first simulation consists of linear time-invariant dynamics, and implements Alg.~\ref{Alg:S1} and Alg.~\ref{Alg:S2}, whereas the second simulation is based on the swing-up of a nonlinear pendulum-on-a-cart system and implements Alg.~\ref{Alg:S1}. The first simulation focuses on the realizable setting, that is, the true dynamics are within the set of models, whereas in the second simulation the true dynamics are not contained in the set of candidate models. In all experiments the algorithms show rapid convergence to near-optimal steady state, while having a small computational footprint.}

\subsection{Linear time-invariant dynamics}
\revision{We consider first} a linear time-invariant dynamical system of dimension $d_\mathrm{x}=20$ and $d_\mathrm{u}=5$ and apply the two algorithms Alg.~\ref{Alg:S1} and Alg.~\ref{Alg:S3}. The stage cost is $l(x,u) = |x|^2 + |u|^2$. The dynamics $f$ (unknown to the decision-maker) consist of five four-dimensional leaky integrators of the type $x_{k+1}^i=0.8 x_{k}^i+x_k^{i+1}$, $i=1,\ldots,3$. The dynamics are relatively challenging for control, as there is a lag of five steps until a change in the input affects $x_k^1$. The above dynamics are compactly written as $f(x,u) = Ax + Bu$, where $x \in \mathbb{R}^{d_\mathrm{x}}$ is the state, $u \in \mathbb{R}^{d_\mathrm{u}}$ is the input, $A = I_5 \otimes A_0, B = I_5 \otimes B_0$ are system matrices, $I_5$ is an identity matrix of size $5$, $\otimes$ denotes the Kronecker product, and
\begin{equation*}
    A_0 =
    \begin{bmatrix}
        0.8 & 1 & 0 & 0 \\
        0 & 0.8 & 1 & 0 \\
        0 & 0 & 0.8 & 1 \\
        0 & 0 & 0 & 0.8
    \end{bmatrix}, \quad
    B_0 =
    \begin{bmatrix}
        0 \\ 0 \\ 0 \\ 1
    \end{bmatrix}.
\end{equation*}

It is assumed that the elements of the matrices $A$ and $B$ that define the dynamics are unknown with respect to an absolute error of 0.1 and relative error of 20\%, which gives rise to a large set of possible models including some open-loop unstable ones. For instance, the $jk$-th element $a_{jk}$ of $A$, is known to be in the range $[0.8a_{jk}-0.1, 1.2a_{jk}+0.1]$.

\subsubsection{Setting \texttt{S1}}
\paragraph{Set-up:} We generate $m$ candidate models $f^i(x,u) = A^i x + B^i u$ at random, whereby each element of $A^i$, $B^i$ is randomly drawn from the known parameter range, e.g., the $jk$-th element $a_{jk}^i$ of $A^i$ is sampled from the uniform distribution over $[0.8a_{jk}-0.1, 1.2a_{jk}+0.1]$. The feedback policy $\mu^i$ related to candidate model $f^i$ is 
\begin{equation*}
    \mu^i(x) = -K^i x, \quad\text{where}\quad K^i=(I+B^{i\top} P B^i)^{-1}{B^i}^\top P^i A^i,
\end{equation*}
and $P^i \in \mathbb{R}^{d_\mathrm{x} \times d_\mathrm{x}}$ is a positive definite matrix satisfying the discrete-time algebraic Riccati equation involving $A^i$ and $B^i$ \cite{bertsekas2017dynamic}. The policies $\mu^i$ are computed through the built-in \texttt{dlqr} command in MATLAB\@ and the settings of Alg.~\ref{Alg:S1} were chosen as specified in Thm.~\ref{thm:small1}, that is
\begin{equation*}
    \eta=10, \quad \sigma_{\mathrm{u}k}^2=\frac{2}{\eta M} \left(\frac{2}{\lceil k/M\rceil} + \frac{\mathrm{ln}(2m)}{\lceil k/M \rceil)^2}\right), \quad M=2.
\end{equation*}
\paragraph{Assumptions of Thm.~\ref{thm:small1}:}Ass.~\ref{ass:bellman1}-\ref{ass:persistence1} are clearly satisfied:
\begin{itemize}
    \item The cost-to-go function $V$ is given by $V(x)=x\T P x$, where $P$ satisfies the discrete-time algebraic Riccati equation involving $A$ and $B$. Hence, Ass.~\ref{ass:bellman1} is satisfied with $\gamma=\mathrm{tr}(P)\sigma^2, L_\mathrm{u}=\bar{\sigma}(P)$, where $\bar{\sigma}$ denotes the maximum singular value of a matrix.
    \item Ass.~\ref{ass:smoothness1} is satisfied with $\underbar{c}_\text{V}=0$, $\underline{L}_\text{V}=\underline{\sigma}(P)$, where $\underline{\sigma}$ is the minimum singular value of a matrix.
    \item Ass.~\ref{ass:persistence1} is satisfied (for any $M>0$) with 
    \begin{equation*}
        c_\text{e}=\min_{i\in \{1,\dots,m\}} |B^i-B|_\mathrm{F}^2,
    \end{equation*}
    for example, as can be seen from \eqref{eq:lower_bd_diff}. Larger values of $c_\text{e}$ can be achieved when choosing $M$ larger and factoring in the controllability Gramian $W_{k}^\text{c}$. 
\end{itemize}
Please note that the constants $c_\text{e}, \underline{c}_\mathrm{V}, \underline{L}_\mathrm{V}, \gamma$ only appear in the resulting policy-regret bounds and are not needed for running Alg.~\ref{Alg:S1}.  

\paragraph{Computational complexity:} All experiments run on a Laptop (Intel Core i7 processor with 2.30GHz; 32 GB of random access memory) and are executed in a few minutes even when increasing the number of candidate model up to 10,000. The offline computation of the policies $\mu^i$ has cost $\mathcal{O}(d_\mathrm{x}^3+d_\mathrm{u}^2 d_\mathrm{x})$, the online computation in Alg.~\ref{Alg:S1} is $\mathcal{O}(d_\mathrm{x} m+d_\mathrm{u}d_\mathrm{x})$.

\subsubsection{Setting \texttt{S2}} 
\paragraph{Set-up:} The parameter space $\Omega\subset \mathbb{R}^{p}$ with $p=d_\mathrm{x}^2+d_\mathrm{x}d_\mathrm{u}=500$ covers the entire parameter range
\begin{equation*}
\Omega=\{ (\bar{A},\bar{B})\in \mathbb{R}^{p}~|~\bar{a}_{jk}\in[0.8a_{jk}-0.1, 1.2a_{jk}+0.1], \quad \bar{b}_{jk}\in [0.8b_{jk}-0.1, 1.2b_{jk}+0.1]\},
\end{equation*}
where we slightly abuse notation to avoid distinguishing between different ways of stacking vectors and matrices (we will frequently do so in the following as the stacking is clear from context). For a given set of matrices $(\bar{A},\bar{B})\in \Omega$ the corresponding feedback controller $\bar{\mu}(x)=-\bar{K}x$ is given as in setting \texttt{S1} and requires solving the discrete-time algebraic Riccati equation involving $\bar{A},\bar{B}$. As before, the feedback policies are computed through the built-in \texttt{dlqr} command in MATLAB\@ and the settings of Alg.~\ref{Alg:S3} are chosen as specified in Thm.~\ref{thm:large1}, that is,
\begin{equation*}
    \eta=10,\quad \epsilon=\sqrt{p/N},\quad \sigma_{\mathrm{u}k}^2=\frac{2}{\eta M\epsilon} \left(\frac{2}{\lceil k/M\rceil} + \frac{p}{\lceil k/M \rceil^2}\right), \quad M=5. 
\end{equation*}
The posterior distribution over models in Alg.~\ref{Alg:S3} is updated by a recursive least squares algorithm and we set $b\rightarrow\infty$. The recursive implementation has the advantage that reasonable estimates of $A$ and $B$ are already provided in the first $p$ steps, which is important for the initial transient behavior.

\paragraph{Assumptions of Thm.~\ref{thm:large1}:} Ass.~\ref{ass:smoothness1}, Ass.~\ref{ass:persistence2}, and Ass.~\ref{ass:cont} are satisfied:
\begin{itemize}
    \item The cost-to-go function is given by $V(x)=x\T P x$, where $P$ satisfies the discrete-time algebraic Riccati equation involving $A$ and $B$. One can easily show that Ass.~\ref{ass:cont} is satisfied by applying Prop.~\ref{prop:suff}. (As pointed out in \citep[][Prop.~6]{simchowitz2020naive}, for example, the policies $\mu_\theta$ are continuously dependent on the system parameters $\theta=(A,B)$.)
    \item Ass.~\ref{ass:cont} is satisfied with $\underline{c}_\mathrm{V}=0$, $\underline{L}_\mathrm{V}=\underline{\sigma}(P)$.
    \item Ass.~\ref{ass:persistence2} is satisfied in view of \eqref{eq:lower_bd_diff} in Sec.~\ref{Sec:Analysis}, provided that $M=5$, which ensures that the controllability Gramian $W_k^{\mathrm{c}}$ is full rank for any feedback gain $\bar{K}$. (The dynamics $A,B$ give rise to decoupled four-dimensional leaky integrators, hence the Gramian $W_4^\mathrm{c}$ defined in Sec.~\ref{Sec:Analysis} is guaranteed to be full rank.) 
\end{itemize}

\paragraph{Computational complexity:} Sampling the parameter $\theta_k$ in Alg.~\ref{Alg:S3} amounts to sampling from a truncated Gaussian, where the mean and covariance of the Gaussian are computed via the recursive least squares algorithm. The computation of mean and covariance can be done in at most $\mathcal{O}(d_\mathrm{x}^2+d_\mathrm{x} d_\mathrm{u})$ elementary operations at each iteration $k$. We then sample $\theta_k$ by applying rejection sampling (although much more efficient approaches could be applied). The policy $\mu_{\theta_k}$ is then computed by solving the corresponding discrete-time algebraic Riccati equation, which requires at most $\mathcal{O}(d_\mathrm{x}^3+d_\mathrm{u}^2 d_\mathrm{x})$ elementary operations. 

\subsubsection{Results}
Simulation results for the setting \texttt{S1} are shown in Fig.~\ref{fig:S1}, whereas the results for setting \texttt{S2} are shown in Fig.~\ref{fig:S2}. A rapid convergence to near-optimal steady-state behavior can be observed in both cases. We note that the space of parameters in Alg.~\ref{Alg:S3} is uncountable compared to Alg.~\ref{Alg:S1} and therefore Alg.~\ref{Alg:S3} takes about twice as long to converge. Alg.~\ref{Alg:S1} achieves optimal steady-state performance very quickly (in about twenty steps). We therefore believe that Alg.~\ref{Alg:S1} provides an algorithmic paradigm that is applicable to many emerging real-world machine learning and engineering challenges, including the control of intelligent transportation systems or automated supply chains.

To showcase the scalability of our algorithms, we provide comparison results when the number of models $m$ in Alg.~\ref{Alg:S1} is increased from $10$ to $10,000$. We perform $40$ independent realizations of Alg.~\ref{Alg:S1} for each value of $m$ and show the corresponding policy regret in Fig.~\ref{fig:regret} (averaged over the $40$ realizations). Once again we observe a fast initial transient phase after which the policy regret stabilizes and near-optimal steady-state performance is achieved. Fig.~\ref{fig:state_norm} shows the corresponding evolution of the two-norm of the state trajectory on a single realization. The plots highlight that Alg.~\ref{Alg:S1} scales favorably in the number of models. 

\subsection{Swing-up of a nonlinear pendulum-on-a-cart}
\revision{We showcase that our algorithms work seamlessly with nonlinear systems.}

\revision{\textbf{Set-up:} We consider the following nonlinear continuous-time pendulum-on-a-cart system dynamics:
\begin{align*}
\ddot{r}(t)&=\frac{\frac{\bar{u}(t)}{m_\text{p}} - (g \sin(\varphi(t))-d_2 \dot{\varphi}(t))\cos(\varphi(t))+l_\text{p} \dot{\varphi}(t)^2 \sin(\varphi(t))}{\frac{m_\text{c}}{m_\text{p}}+\sin(\varphi(t))^2},\\
\ddot{\varphi}(t)&=\frac{-\cos(\varphi(t))\frac{\bar{u}(t)}{m_\text{p}}+\frac{m_\text{c}+m_\text{p}}{m_\text{p}}(g\sin(\varphi(t))-d_2 \dot{\varphi}(t))-l_\text{p} \dot{\varphi}(t)^2\sin{\varphi(t)} \cos{\varphi(t)}}{l_\text{p} (\frac{m_\text{c}}{m_\text{p}}+\sin(\varphi(t))^2)},\\
\bar{u}(t)&=\max(\min(u(t),F_\text{max}),-F_\text{max})-d_1 \dot{r}(t),
\end{align*}
where $r(t)$ denotes the position of the cart on the rail and $\varphi(t)$ the pendulum angle ($\varphi=0$ corresponds to the upright equilibrium), and parameters $m_\text{c}$, $m_\text{p}$, $l_\text{p}$, $g$, $d_1$, $d_2$, and $F_\text{max}$, corresponding to cart mass, pendulum mass, pendulum length, gravitational constant, cart damping, pendulum damping, and force limit, respectively. The rail has a finite length of 1m, that is $r(t)\in [-0.5,0.5]$, which is incorporated by modifying the above dynamics accordingly for $r(t)\in \{-0.5,0.5\}.$ The continuous-time dynamics are discretized with the classical Runge-Kutta method with step size 0.02. The nominal parameter values, that correspond to an actual pendulum-on-a-cart system are given as $m_\text{c}=1.73$kg, $m_\text{p}=0.175$kg, $l_\text{p}=0.28$m, $F_\text{max}=15$N, $d_1=d_2=0$, and $g=9.81$m/s$^2$. We generate $m=500$ candidate models by randomly sampling the parameters $m_\text{c},m_\text{p},l_\text{p}$ within $\pm20\%$ of their nominal values and $d_1,d_2$ uniformly in the range $[0,1]$ and $[0,0.1]$ respectively. We further emphasize that the true dynamics with nominal parameter values are not contained in the set of candidate models. The stage cost is $1-\cos(\varphi(t))+0.01u(t)^2+0.1r(t)^2$, and process noise is incorporated by perturbing the input $u$ with a sequence of independent Gaussian random variables sampled at $0.02s$.}

\revision{
\textbf{Control design:} The feedback policies arise from a model-predictive controller. More precisely, the policy $\mu^i$ is computed by solving a numerical optimal control problem that minimizes the stage cost over a horizon of 50 steps with a Runge-Kutta discretization of the dynamics corresponding to model $i$ with step size 0.04. The numerical optimal control problem includes state constraints due to the finite rail length and input constraints, and is solved with IPOPT \citep{wachter2006implementation} interfaced through CasADi \citep{Andersson2018}. 
}

\revision{\textbf{Learning parameters:} The settings of Alg.~\ref{Alg:S1} are chosen as specified in Thm.~\ref{thm:small1}, that is,
\begin{equation*}
    \eta=10, \quad \sigma_{\mathrm{u}k}^2=\frac{10}{\eta M} \left(\frac{2}{\lceil k/M\rceil} + \frac{\mathrm{ln}(2m)}{\lceil k/M \rceil)^2}\right), \quad M=5.
\end{equation*}
}

\revision{\textbf{Computational complexity:} All experiments run on a Laptop (Intel Core i7 processor with 2.30GHz; 32GB of random access memory) and are executed in less than two minutes. The algorithms are implemented in MATLAB and have not been optimized. The bottleneck in the execution is the numerical solution of the optimal control problem at every iteration, which is independent of the number of candidate models.} 

\subsubsection{Results}
\revision{Simulation results for the setting \texttt{S1} are shown in Fig.~\ref{fig:pendS1}-Fig.~\ref{fig:pendS12} and highlight that the pendulum is seamlessly brought to upright position while taking input and state constraints into account. Compared to the ground-truth swing-up, based on the true dynamics, the swing-up takes about 40 steps more ($\approx$ 0.8s), due to the model identification. Even though the true dynamics are not included in the set of candidate models, the model-estimation converges in about 100 steps. The fact that there are multiple models that approximate the underlying dynamics well (see Fig.~\ref{fig:pend4S1}) does not pose any issues, and the pendulum can be easily balanced in upright position. Moreover, the online learning converges quickly (100 steps), despite the fact that there is a large variability in the set of candidate models ($m=500$). At every iteration only a single feedback policy needs to be evaluated, which amounts to solving a numerical optimal control problem with input and state constraints. No difficulties were observed in the optimization and a single iteration of Alg.~\ref{Alg:S1} (including policy evaluation) is carried out in about 60ms, even though the implementation is not optimized for speed.}



\newlength\figurewidth
\newlength\figureheight

\begin{figure}
    \centering
    \begin{subfigure}{0.5\textwidth}
        \centering
        \setlength{\figurewidth}{.75\columnwidth}
        \setlength{\figureheight}{.4\columnwidth}
%
%
\begin{tikzpicture}

\begin{axis}[%
width=0.951\figurewidth,
height=\figureheight,
at={(0\figurewidth,0\figureheight)},
scale only axis,
xmin=0,
xmax=100,
xlabel style={font=\color{white!15!black}},
xlabel near ticks,
xlabel={$k$},
ymin=0,
ymax=0.4,
ylabel style={font=\color{white!15!black}},
ylabel near ticks,
ylabel={$|\theta_k-\theta^*|$},
axis background/.style={fill=white}
]
\addplot [color=black, thick]
  table[row sep=crcr]{%
1	0.00142571384324993\\
2	0.00142571384324993\\
3	0.00142571384324993\\
4	0.00142571384324993\\
5	0.00142571384324993\\
6	0.298858937047762\\
7	0.298858937047762\\
8	0.298858937047762\\
9	0.298858937047762\\
10	0.298858937047762\\
11	0.0879073557477929\\
12	0.0879073557477929\\
13	0.0879073557477929\\
14	0.0879073557477929\\
15	0.0879073557477929\\
16	0.131311873370949\\
17	0.131311873370949\\
18	0.131311873370949\\
19	0.131311873370949\\
20	0.131311873370949\\
21	0.00142571384324993\\
22	0.00142571384324993\\
23	0.00142571384324993\\
24	0.00142571384324993\\
25	0.00142571384324993\\
26	0.0372919037986154\\
27	0.0372919037986154\\
28	0.0372919037986154\\
29	0.0372919037986154\\
30	0.0372919037986154\\
31	0.0372919037986154\\
32	0.0372919037986154\\
33	0.0372919037986154\\
34	0.0372919037986154\\
35	0.0372919037986154\\
36	0.00142571384324993\\
37	0.00142571384324993\\
38	0.00142571384324993\\
39	0.00142571384324993\\
40	0.00142571384324993\\
41	0.000347022864773202\\
42	0.000347022864773202\\
43	0.000347022864773202\\
44	0.000347022864773202\\
45	0.000347022864773202\\
46	0.000347022864773202\\
47	0.000347022864773202\\
48	0.000347022864773202\\
49	0.000347022864773202\\
50	0.000347022864773202\\
51	0.000347022864773202\\
52	0.000347022864773202\\
53	0.000347022864773202\\
54	0.000347022864773202\\
55	0.000347022864773202\\
56	0.000347022864773202\\
57	0.000347022864773202\\
58	0.000347022864773202\\
59	0.000347022864773202\\
60	0.000347022864773202\\
61	0.000347022864773202\\
62	0.000347022864773202\\
63	0.000347022864773202\\
64	0.000347022864773202\\
65	0.000347022864773202\\
66	0.000347022864773202\\
67	0.000347022864773202\\
68	0.000347022864773202\\
69	0.000347022864773202\\
70	0.000347022864773202\\
71	0.000347022864773202\\
72	0.000347022864773202\\
73	0.000347022864773202\\
74	0.000347022864773202\\
75	0.000347022864773202\\
76	0.000347022864773202\\
77	0.000347022864773202\\
78	0.000347022864773202\\
79	0.000347022864773202\\
80	0.000347022864773202\\
81	0.000347022864773202\\
82	0.000347022864773202\\
83	0.000347022864773202\\
84	0.000347022864773202\\
85	0.000347022864773202\\
86	0.000347022864773202\\
87	0.000347022864773202\\
88	0.000347022864773202\\
89	0.000347022864773202\\
90	0.000347022864773202\\
91	0.000347022864773202\\
92	0.000347022864773202\\
93	0.000347022864773202\\
94	0.000347022864773202\\
95	0.000347022864773202\\
96	0.000347022864773202\\
97	0.000347022864773202\\
98	0.000347022864773202\\
99	0.000347022864773202\\
100	0.000347022864773202\\
};
\addlegendentry{{\tiny Alg.~\ref{Alg:S1}}}
\end{axis}
\end{tikzpicture}%
        \caption{parameter error}
        \label{fig:parmErrS1}
    \end{subfigure}%
    \begin{subfigure}{0.5\textwidth}
        \centering
        \setlength{\figurewidth}{.75\columnwidth}
        \setlength{\figureheight}{.4\columnwidth}
%
%
\begin{tikzpicture}

\definecolor{mycolor5}{rgb}{0.46600,0.67400,0.18800}%

\begin{axis}[%
width=0.951\figurewidth,
height=\figureheight,
at={(0\figurewidth,0\figureheight)},
scale only axis,
xmin=0,
xmax=100,
xlabel style={font=\color{white!15!black}},
xlabel={$k$},
xlabel near ticks,
ymin=0,
ymax=30,
ylabel style={font=\color{white!15!black}},
ylabel near ticks,
ylabel={$|x_k|$},
axis background/.style={fill=white},
legend style={legend cell align=left, align=left, draw=white!15!black,at={(.75,.75)},anchor=south west}
]
\addplot [color=black,thick]
  table[row sep=crcr]{%
1	0\\
2	5.60058508903684\\
3	8.58397238118912\\
4	9.94209814892262\\
5	8.47327649734753\\
6	10.8122354568945\\
7	10.5833576826196\\
8	12.5480658123776\\
9	13.8740261898974\\
10	16.408238498431\\
11	16.9112697852258\\
12	18.5687849057183\\
13	19.7170334936707\\
14	19.1511121366491\\
15	17.9419060133431\\
16	17.1444098295822\\
17	18.5490837241284\\
18	22.9124476629854\\
19	28.7507237013803\\
20	25.7747392874341\\
21	18.8892103777922\\
22	19.0964166311884\\
23	19.9145848994133\\
24	22.5445331481212\\
25	22.1715750129221\\
26	18.0320550813465\\
27	15.9329926090388\\
28	17.6310319257826\\
29	15.0917607142177\\
30	12.0191930031994\\
31	8.38734599389569\\
32	10.4595946994603\\
33	11.2580885879012\\
34	14.4728268102459\\
35	17.7814210615747\\
36	21.2276061785754\\
37	20.9603200603221\\
38	18.5135987752668\\
39	18.2749502274554\\
40	18.0231302343743\\
41	18.2725251705274\\
42	18.8097229856513\\
43	14.7178767339033\\
44	12.0452286340731\\
45	10.472041695185\\
46	9.0915282620839\\
47	11.3558236194221\\
48	13.2350846547141\\
49	15.7706380188262\\
50	19.1407949641879\\
51	19.9922553138376\\
52	18.4561385797958\\
53	17.5375511138586\\
54	15.998743058557\\
55	14.4980990207658\\
56	14.0594216607517\\
57	12.7638236020277\\
58	15.1075394476401\\
59	18.92399902129\\
60	18.3670309406735\\
61	13.6474600711565\\
62	12.8339302456189\\
63	12.3777960390297\\
64	12.2141720525021\\
65	12.2351543084662\\
66	13.2877747864623\\
67	15.8161507489966\\
68	19.0922445787856\\
69	21.3123938228798\\
70	19.3401571229882\\
71	13.0102677602087\\
72	10.828147180247\\
73	10.7884338755003\\
74	12.9540559930978\\
75	13.9264309303033\\
76	14.573848095547\\
77	11.3592347943245\\
78	11.3879350410728\\
79	11.5425085130547\\
80	13.3523643351397\\
81	13.1769563584877\\
82	13.4872546821259\\
83	12.3348266021791\\
84	9.72415720721036\\
85	9.95303858381257\\
86	12.4253454575527\\
87	14.8470377471822\\
88	16.1069628020581\\
89	16.0036999617515\\
90	11.6634540461147\\
91	10.6245361656296\\
92	11.9811681528669\\
93	12.2532451189158\\
94	11.4728874365156\\
95	10.6585005814478\\
96	7.66219649589851\\
97	8.40612313166574\\
98	10.5775832433398\\
99	12.0210460426012\\
100	13.5974684067229\\
};
\addlegendentry{{\tiny Alg.~\ref{Alg:S1}}}

\addplot [color=mycolor5, thin]
  table[row sep=crcr]{%
1	0\\
2	4.44909990838506\\
3	6.75920673917867\\
4	8.58516867588798\\
5	8.24011817864796\\
6	10.530013254725\\
7	11.418333020152\\
8	10.6597991952256\\
9	13.4094944889517\\
10	14.5311915652063\\
11	15.5212979433378\\
12	15.2524294138108\\
13	16.0683302928083\\
14	14.6922105998535\\
15	13.8249113967303\\
16	13.3454606867507\\
17	16.5609397871621\\
18	16.0575615298056\\
19	14.2229676398067\\
20	10.7361084851395\\
21	8.64032018950743\\
22	9.23132443283884\\
23	11.1578248713845\\
24	10.9212209218009\\
25	11.2909428205699\\
26	11.1790921729562\\
27	11.7452377126874\\
28	14.4392515968909\\
29	18.3445305320303\\
30	19.2961655343899\\
31	16.9514492216809\\
32	8.93323288697152\\
33	10.6169464805049\\
34	12.2089738612244\\
35	12.0420274261502\\
36	12.7336979843434\\
37	13.4345921656009\\
38	12.8747797653512\\
39	16.0805318848814\\
40	15.1322099459095\\
41	13.5336084258763\\
42	15.7208921105505\\
43	17.1763126877401\\
44	15.393567754092\\
45	15.8289482487058\\
46	16.5582552015927\\
47	15.8403150063473\\
48	14.7030617719527\\
49	15.4570880474484\\
50	16.6030083893543\\
51	17.8533725403344\\
52	15.740097218917\\
53	14.3898906916548\\
54	13.1999351965816\\
55	12.5709431993896\\
56	9.36557329217246\\
57	9.57946101135103\\
58	11.9646236042855\\
59	15.3012459426626\\
60	15.6622777644807\\
61	16.1267163457556\\
62	14.0159714085781\\
63	11.6519548353546\\
64	12.4866942661096\\
65	15.5564702691652\\
66	14.8689585008324\\
67	12.9677755776551\\
68	13.705341718954\\
69	8.15705887143367\\
70	8.84977378026089\\
71	12.2333552566958\\
72	11.142770589573\\
73	11.6775886418471\\
74	12.0043836253336\\
75	11.3579011173527\\
76	11.3120299281586\\
77	10.5065382484991\\
78	10.1654396305961\\
79	11.4646441947917\\
80	16.0725432063886\\
81	19.2383673572766\\
82	21.0645858372879\\
83	18.2076129175398\\
84	17.1507961642403\\
85	16.018834757171\\
86	16.1844944537863\\
87	17.2595479245023\\
88	18.89257008681\\
89	20.0926359250714\\
90	19.2996040021651\\
91	18.1262609863772\\
92	13.6092184673161\\
93	13.387240406828\\
94	11.9602824913361\\
95	9.21599596197739\\
96	9.28541597485069\\
97	9.1396619989448\\
98	12.5245039014388\\
99	13.5253143365144\\
100	14.5074633745928\\
};
\addlegendentry{{\tiny opt}}

\end{axis}
\end{tikzpicture}%
        \caption{two-norm of state trajectory}
        \label{fig:state_normS1}
    \end{subfigure}
    \caption{The first panel shows the evolution of the parameter error of Alg.~\ref{Alg:S1}, while the second panel shows the evolution of the two-norm of the states. The green line indicates the performance of the optimal (steady-state) policy on a different realization of $n_k$. We note that near-optimal steady state performance is reached in about 25 steps.}
    \label{fig:S1}
\end{figure}

\begin{figure}
    \centering
    \begin{subfigure}{0.5\textwidth}
        \centering
        \setlength{\figurewidth}{.75\columnwidth}
        \setlength{\figureheight}{.4\columnwidth}
%
%
\begin{tikzpicture}

\begin{axis}[%
width=0.951\figurewidth,
height=\figureheight,
at={(0\figurewidth,0\figureheight)},
scale only axis,
xmin=0,
xmax=100,
xlabel style={font=\color{white!15!black}},
xlabel={$k$},
xlabel near ticks,
ymin=0,
ymax=2.5,
ylabel style={font=\color{white!15!black}},
ylabel near ticks,
ylabel={$|\theta_k-\theta^*|$},
axis background/.style={fill=white}
]
\addplot [color=black, thick]
  table[row sep=crcr]{%
1	1.61235734257552\\
2	1.61235734257552\\
3	1.61235734257552\\
4	1.61235734257552\\
5	1.61235734257552\\
6	2.08489473056007\\
7	2.08489473056007\\
8	2.08489473056007\\
9	2.08489473056007\\
10	2.08489473056007\\
11	1.90830201095385\\
12	1.90830201095385\\
13	1.90830201095385\\
14	1.90830201095385\\
15	1.90830201095385\\
16	1.55041461248155\\
17	1.55041461248155\\
18	1.55041461248155\\
19	1.55041461248155\\
20	1.55041461248155\\
21	1.23518139387654\\
22	1.23518139387654\\
23	1.23518139387654\\
24	1.23518139387654\\
25	1.23518139387654\\
26	0.980962749310704\\
27	0.980962749310704\\
28	0.980962749310704\\
29	0.980962749310704\\
30	0.980962749310704\\
31	0.626185148931544\\
32	0.626185148931544\\
33	0.626185148931544\\
34	0.626185148931544\\
35	0.626185148931544\\
36	0.423773260699526\\
37	0.423773260699526\\
38	0.423773260699526\\
39	0.423773260699526\\
40	0.423773260699526\\
41	0.410997395682109\\
42	0.410997395682109\\
43	0.410997395682109\\
44	0.410997395682109\\
45	0.410997395682109\\
46	0.369454399570316\\
47	0.369454399570316\\
48	0.369454399570316\\
49	0.369454399570316\\
50	0.369454399570316\\
51	0.381831987276648\\
52	0.381831987276648\\
53	0.381831987276648\\
54	0.381831987276648\\
55	0.381831987276648\\
56	0.357780629009129\\
57	0.357780629009129\\
58	0.357780629009129\\
59	0.357780629009129\\
60	0.357780629009129\\
61	0.345478232639516\\
62	0.345478232639516\\
63	0.345478232639516\\
64	0.345478232639516\\
65	0.345478232639516\\
66	0.377680794337832\\
67	0.377680794337832\\
68	0.377680794337832\\
69	0.377680794337832\\
70	0.377680794337832\\
71	0.376040734096914\\
72	0.376040734096914\\
73	0.376040734096914\\
74	0.376040734096914\\
75	0.376040734096914\\
76	0.34299814636292\\
77	0.34299814636292\\
78	0.34299814636292\\
79	0.34299814636292\\
80	0.34299814636292\\
81	0.337108316416476\\
82	0.337108316416476\\
83	0.337108316416476\\
84	0.337108316416476\\
85	0.337108316416476\\
86	0.342415465182847\\
87	0.342415465182847\\
88	0.342415465182847\\
89	0.342415465182847\\
90	0.342415465182847\\
91	0.382568276922773\\
92	0.382568276922773\\
93	0.382568276922773\\
94	0.382568276922773\\
95	0.382568276922773\\
96	0.382894780697421\\
97	0.382894780697421\\
98	0.382894780697421\\
99	0.382894780697421\\
100	0.382894780697421\\
\\
};
\addlegendentry{{\tiny Alg.~\ref{Alg:S3}}}
\end{axis}
\end{tikzpicture}%
        \caption{parameter error}
    \end{subfigure}%
    \begin{subfigure}{0.5\textwidth}
        \centering
        \setlength{\figurewidth}{.75\columnwidth}
        \setlength{\figureheight}{.4\columnwidth}
%
%
\begin{tikzpicture}

\definecolor{mycolor5}{rgb}{0.46600,0.67400,0.18800}%

\begin{axis}[%
width=0.951\figurewidth,
height=\figureheight,
at={(0\figurewidth,0\figureheight)},
scale only axis,
xmin=0,
xmax=100,
xlabel style={font=\color{white!15!black}},
xlabel={$k$},
xlabel near ticks,
ymode=log,
ymin=1,
ymax=1942.60010995339,
yminorticks=true,
ylabel style={font=\color{white!15!black}},
ylabel near ticks,
ylabel={$|x_k|$},
axis background/.style={fill=white},
legend style={legend cell align=left, align=left, draw=white!15!black}
]
\addplot [color=black,thick]
  table[row sep=crcr]{%
1	0\\
2	3.36135896608948\\
3	5.83397209599854\\
4	23.9736184017283\\
5	37.2389908401396\\
6	45.6170362129086\\
7	43.7764595107748\\
8	51.0596388173269\\
9	59.3410290159618\\
10	84.5870412484593\\
11	115.74993094567\\
12	169.47815925751\\
13	197.176018650049\\
14	251.281514963601\\
15	286.23587109144\\
16	315.917525103805\\
17	469.484972685428\\
18	519.466875288529\\
19	616.094650595357\\
20	680.58240102827\\
21	733.15335260495\\
22	1166.18290073287\\
23	1409.3171003349\\
24	1734.43701751508\\
25	1654.59599007794\\
26	1364.37641943169\\
27	1724.3003803691\\
28	1812.429245865\\
29	1942.60010995339\\
30	1810.62057802984\\
31	1427.71179713903\\
32	1338.82112054647\\
33	1346.35340410614\\
34	1126.30660347697\\
35	838.547377126966\\
36	720.688998757803\\
37	823.6335990573\\
38	883.583102541118\\
39	787.160028429141\\
40	538.887727651804\\
41	304.174648601344\\
42	248.235895126411\\
43	245.07657493777\\
44	190.001284017221\\
45	115.838398038592\\
46	80.7136140363416\\
47	82.6546824962032\\
48	77.3280313149093\\
49	61.5553272902505\\
50	40.4227241204473\\
51	24.9902995264956\\
52	23.6805954509734\\
53	24.4821551448579\\
54	24.2480618375817\\
55	22.1910700850007\\
56	19.8148740008837\\
57	18.7147451603581\\
58	19.0123335983343\\
59	18.94095358179\\
60	18.3689689113377\\
61	16.9064184133686\\
62	17.5038080043495\\
63	21.0336771218224\\
64	20.0956556713273\\
65	16.621764558503\\
66	17.654859834427\\
67	20.4047725023861\\
68	18.5728497322124\\
69	16.5219998189363\\
70	15.4235260477656\\
71	16.0858700438033\\
72	17.6793478115772\\
73	18.5507024782439\\
74	15.2334498104029\\
75	13.2605494916243\\
76	16.2977592469623\\
77	15.1045843857278\\
78	15.2961198361256\\
79	17.1883427593329\\
80	15.8048419967754\\
81	12.4177290879723\\
82	12.7016829230011\\
83	11.3314019288818\\
84	10.5086162126754\\
85	13.0257975237483\\
86	13.7835038116397\\
87	16.004458460578\\
88	18.1846913235807\\
89	15.3453322452792\\
90	14.6716560196039\\
91	16.2583070840283\\
92	17.7030383397976\\
93	17.386554951792\\
94	18.8987887434048\\
95	19.6444379156057\\
96	16.3053506329893\\
97	13.5292761107558\\
98	12.9164892402174\\
99	11.6338526645608\\
100	11.9876367976177\\
};
\addlegendentry{{\tiny Alg.~\ref{Alg:S3}}}

\addplot [color=mycolor5, thin]
  table[row sep=crcr]{%
1	0\\
2	4.44909990838506\\
3	6.75920673917867\\
4	8.58516867588798\\
5	8.24011817864796\\
6	10.530013254725\\
7	11.418333020152\\
8	10.6597991952256\\
9	13.4094944889517\\
10	14.5311915652063\\
11	15.5212979433378\\
12	15.2524294138108\\
13	16.0683302928083\\
14	14.6922105998535\\
15	13.8249113967303\\
16	13.3454606867507\\
17	16.5609397871621\\
18	16.0575615298056\\
19	14.2229676398067\\
20	10.7361084851395\\
21	8.64032018950743\\
22	9.23132443283884\\
23	11.1578248713845\\
24	10.9212209218009\\
25	11.2909428205699\\
26	11.1790921729562\\
27	11.7452377126874\\
28	14.4392515968909\\
29	18.3445305320303\\
30	19.2961655343899\\
31	16.9514492216809\\
32	8.93323288697152\\
33	10.6169464805049\\
34	12.2089738612244\\
35	12.0420274261502\\
36	12.7336979843434\\
37	13.4345921656009\\
38	12.8747797653512\\
39	16.0805318848814\\
40	15.1322099459095\\
41	13.5336084258763\\
42	15.7208921105505\\
43	17.1763126877401\\
44	15.393567754092\\
45	15.8289482487058\\
46	16.5582552015927\\
47	15.8403150063473\\
48	14.7030617719527\\
49	15.4570880474484\\
50	16.6030083893543\\
51	17.8533725403344\\
52	15.740097218917\\
53	14.3898906916548\\
54	13.1999351965816\\
55	12.5709431993896\\
56	9.36557329217246\\
57	9.57946101135103\\
58	11.9646236042855\\
59	15.3012459426626\\
60	15.6622777644807\\
61	16.1267163457556\\
62	14.0159714085781\\
63	11.6519548353546\\
64	12.4866942661096\\
65	15.5564702691652\\
66	14.8689585008324\\
67	12.9677755776551\\
68	13.705341718954\\
69	8.15705887143367\\
70	8.84977378026089\\
71	12.2333552566958\\
72	11.142770589573\\
73	11.6775886418471\\
74	12.0043836253336\\
75	11.3579011173527\\
76	11.3120299281586\\
77	10.5065382484991\\
78	10.1654396305961\\
79	11.4646441947917\\
80	16.0725432063886\\
81	19.2383673572766\\
82	21.0645858372879\\
83	18.2076129175398\\
84	17.1507961642403\\
85	16.018834757171\\
86	16.1844944537863\\
87	17.2595479245023\\
88	18.89257008681\\
89	20.0926359250714\\
90	19.2996040021651\\
91	18.1262609863772\\
92	13.6092184673161\\
93	13.387240406828\\
94	11.9602824913361\\
95	9.21599596197739\\
96	9.28541597485069\\
97	9.1396619989448\\
98	12.5245039014388\\
99	13.5253143365144\\
100	14.5074633745928\\
};
\addlegendentry{{\tiny opt}}

\end{axis}
\end{tikzpicture}%
        \caption{two-norm of state trajectory}
    \end{subfigure}  
    \caption{The first panel shows the evolution of the parameter error of Alg.~\ref{Alg:S3}, while the second panel shows the evolution of the two-norm of the states. The green line indicates the performance of the optimal (steady-state) policy on a different realization of $n_k$. Compared to Fig.~\ref{fig:state_normS1} the overshoot is larger and the convergence to near-optimal performance requires about 60 iterations.}
    \label{fig:S2}
\end{figure}

\begin{figure}
    \centering
    \begin{subfigure}{0.5\textwidth}
        \centering
        \setlength{\figurewidth}{.7\columnwidth}
        \setlength{\figureheight}{.5\columnwidth}
        \input{tikz/regret_trajectories_reg_mm.tikz}
        \caption{policy regret}
        \label{fig:regret}
    \end{subfigure}%
    \begin{subfigure}{0.5\textwidth}
        \centering
        \setlength{\figurewidth}{.7\columnwidth}
        \setlength{\figureheight}{.5\columnwidth}
        \input{tikz/regret_trajectories_x_mm.tikz}
        \caption{The two-norm of the state trajectory}
        \label{fig:state_norm}
    \end{subfigure}  
    \caption{The first panel shows the change in policy regret of Alg.~\ref{Alg:S1} when varying $m$. The second panel shows the evolution of the two-norm of the state trajectory. We note that the behavior is consistent over the different values of $m$ (from $10$ to $10,000$, which amounts to three orders of magnitude). The green line indicates the performance of the optimal (steady-state) policy on the \emph{same} realization of $n_k$.}
\end{figure}

\begin{figure}
    \centering
    \begin{subfigure}{0.5\textwidth}
        \centering
        \setlength{\figurewidth}{.75\columnwidth}
        \setlength{\figureheight}{.4\columnwidth}
%
%
\begin{tikzpicture}

\begin{axis}[%
width=0.951\figurewidth,
height=\figureheight,
at={(0\figurewidth,0\figureheight)},
scale only axis,
xmin=0,
xmax=200,
xlabel style={font=\color{white!15!black}},
xlabel near ticks,
xlabel={$k$},
ymin=-7,
ymax=1,
ylabel style={font=\color{white!15!black}},
ylabel={pendulum angle [rad]},
ylabel near ticks,
axis background/.style={fill=white},
legend style={legend cell align=left, align=left, draw=white!15!black,at={(0.52,.42)},anchor=south west}
]
\addplot [color=black,thick]
  table[row sep=crcr]{%
1	-3.14159265358979\\
4	-3.09547614423467\\
7	-2.97398471839509\\
10	-2.81338063362824\\
13	-2.66666084873366\\
16	-2.59043569145693\\
19	-2.62034096642525\\
22	-2.77445196891993\\
25	-3.05500563157156\\
28	-3.44251068319546\\
31	-3.85959711523434\\
34	-4.22440943258589\\
37	-4.48438441158448\\
40	-4.62177059132962\\
43	-4.63314351708097\\
46	-4.51866536304565\\
49	-4.27642484570998\\
52	-3.89933724893295\\
55	-3.38043884727156\\
58	-2.7435738853167\\
61	-2.10203222135333\\
64	-1.56519033338136\\
67	-1.15145244125169\\
70	-0.847950787299527\\
73	-0.627436200750895\\
76	-0.456624257496496\\
79	-0.32656735706862\\
82	-0.228046088648898\\
85	-0.150886895770069\\
88	-0.0934136738932365\\
91	-0.0531247496806603\\
94	-0.0244896305893135\\
97	-0.00946390440259877\\
100	-0.00227138005549889\\
103	-0.00114065852002958\\
106	-0.001954854693754\\
109	-0.00239504790248256\\
112	-0.00197040020147966\\
115	0.00418796422572025\\
118	0.0108214482782471\\
121	0.0134080993329929\\
124	0.00847353500265703\\
127	0.00190955879142747\\
130	-0.0019634765329993\\
133	-0.00368847340911629\\
136	0.00177178887737221\\
139	0.00417286824829162\\
142	0.00570634296619359\\
145	0.00568217990374903\\
148	0.00156887492839096\\
151	-0.00277258320219576\\
154	-0.00417629314577379\\
157	-0.00420630313697193\\
160	-0.00538997266317988\\
163	-0.0129640579705628\\
166	-0.0151731404947987\\
169	-0.0212915700325896\\
172	-0.0218543457489918\\
175	-0.0157916423333187\\
178	-0.00855433085148448\\
181	-0.00133109165883459\\
184	0.00203497366371018\\
187	0.00814234516518804\\
190	0.0134660383523722\\
193	0.0159386481739382\\
196	0.0217589567312017\\
199	0.0217043683622257\\
};
\addlegendentry{{\footnotesize{\texttt{S1}}}}
\addplot [color=red, dashed]
  table[row sep=crcr]{%
1	-3.14159265358979\\
4	-3.08656664933125\\
7	-2.92952409530897\\
10	-2.69942448438706\\
13	-2.46803182624684\\
16	-2.31015305751744\\
19	-2.27067778682076\\
22	-2.37705758833607\\
25	-2.65041967590169\\
28	-3.08579200565251\\
31	-3.63607576645487\\
34	-4.20524052175052\\
37	-4.69889620010189\\
40	-5.07321259731738\\
43	-5.35123803237841\\
46	-5.56425316690657\\
49	-5.73194157200033\\
52	-5.86539560531812\\
55	-5.97138343006605\\
58	-6.05476813471347\\
61	-6.11954352673958\\
64	-6.16917365949152\\
67	-6.20666585695418\\
70	-6.23457895208946\\
73	-6.25503798761651\\
76	-6.26976882630243\\
79	-6.28014692965342\\
82	-6.28725156275514\\
85	-6.29191878200705\\
88	-6.29478936414505\\
91	-6.29635000965756\\
94	-6.29696753459158\\
97	-6.29691652312671\\
100	-6.29640125989492\\
103	-6.29557286190549\\
106	-6.29454249738156\\
109	-6.29339148269278\\
112	-6.29217892967637\\
115	-6.29094749626078\\
118	-6.28972768449153\\
121	-6.2885410362566\\
124	-6.28740249909152\\
127	-6.28632217138549\\
130	-6.28530658627389\\
133	-6.28435965440875\\
136	-6.28348335563516\\
139	-6.28267824657161\\
142	-6.28194383366395\\
145	-6.28127884819606\\
148	-6.28068144998141\\
151	-6.28014937922558\\
154	-6.2796800707187\\
157	-6.27927074060695\\
160	-6.27891845313893\\
163	-6.27862017270988\\
166	-6.27837280502675\\
169	-6.27817323013624\\
172	-6.27801832928256\\
175	-6.27790500700639\\
178	-6.27783020950165\\
181	-6.27779093996567\\
184	-6.27778427147968\\
187	-6.27780735781637\\
190	-6.27785744247227\\
193	-6.27793186615369\\
196	-6.27802807289562\\
199	-6.27814361495918\\
};
\addlegendentry{{\footnotesize opt model}}
\end{axis}
\end{tikzpicture}%
        \caption{angle trajectory}
        \label{fig:pend1S1}
    \end{subfigure}%
    \begin{subfigure}{0.5\textwidth}
        \centering
        \setlength{\figurewidth}{.75\columnwidth}
        \setlength{\figureheight}{.4\columnwidth}
%
%
\begin{tikzpicture}

\begin{axis}[%
width=0.951\figurewidth,
height=\figureheight,
at={(0\figurewidth,0\figureheight)},
scale only axis,
xmin=0,
xmax=200,
xlabel style={font=\color{white!15!black}},
xlabel={$k$},
xlabel near ticks,
ymin=-15,
ymax=15,
ylabel style={font=\color{white!15!black}},
ylabel={$u$},
ylabel near ticks,
axis background/.style={fill=white}
]
\addplot [color=black, thick, forget plot]
  table[row sep=crcr]{%
1	13.5600164775267\\
2	12.7122870934627\\
3	13.0325391187741\\
4	10.9714721301293\\
5	11.3907107469705\\
6	10.1684036528168\\
7	7.59980847143888\\
8	7.05187018233829\\
9	6.40973758100772\\
10	3.63331778069131\\
11	0.782530973700916\\
12	-1.52242342420931\\
13	-2.35755780209367\\
14	-2.99015463624711\\
15	-5.12232155522801\\
16	-6.06461087587392\\
17	-6.8165857478856\\
18	-8.54175298209377\\
19	-10.0676222046863\\
20	-10.5236840101554\\
21	-11.0908073878696\\
22	-11.8520618179393\\
23	-12.5346473867974\\
24	-13.5468226153869\\
25	-13.261922504577\\
26	-13.9199736050464\\
27	-12.7295933037481\\
28	-10.9924551478576\\
29	-10.048460206265\\
30	-8.35913112962053\\
31	-6.31569709675996\\
32	-5.10781365219231\\
33	-4.01691519269551\\
34	-2.74784590421638\\
35	-1.36358147733724\\
36	-1.10009528408446\\
37	0.148983997826921\\
38	-0.197837081092923\\
39	0.313344524105861\\
40	-0.0649146731514748\\
41	1.33340287808042\\
42	0.62090582994442\\
43	0.93005324500689\\
44	1.94942278299122\\
45	2.3695835032935\\
46	1.78266487265468\\
47	2.80740817356576\\
48	3.55969654790084\\
49	5.1846043881463\\
50	6.24576369290198\\
51	8.22965453124279\\
52	9.71888588474586\\
53	10.8053248731501\\
54	12.1025291409011\\
55	12.3159123688053\\
56	11.7201099824637\\
57	11.1471137155145\\
58	8.668977672765\\
59	7.00706168155425\\
60	5.19030370865515\\
61	5.09364208013332\\
62	3.28661246608729\\
63	1.02503386121536\\
64	-0.462651262313492\\
65	-1.73214422405783\\
66	-0.271185724490258\\
67	-0.942255610281052\\
68	-1.21509389685424\\
69	-1.91768525571035\\
70	-2.46269579001409\\
71	-5.11805268779392\\
72	-4.44587798302896\\
73	-3.90646000627094\\
74	-3.442919918575\\
75	-2.56330776352389\\
76	-2.95195360867765\\
77	-2.19157006327603\\
78	-1.74619537283745\\
79	-2.05553510723039\\
80	-1.09947177257571\\
81	-0.572657647728651\\
82	-0.646397836140586\\
83	-0.0661886531333116\\
84	0.526865071521222\\
85	0.692661551633318\\
86	0.441297860994877\\
87	0.644062970505018\\
88	0.393071066340387\\
89	0.479501556559849\\
90	0.577699418536409\\
91	1.03530145739718\\
92	0.857261279288527\\
93	0.866426289246152\\
94	1.20143977468029\\
95	0.803949173244728\\
96	0.631253294460344\\
97	0.254084431108102\\
98	0.631397830616665\\
99	0.223228564809795\\
100	-0.0511091062353651\\
101	-0.379684691151817\\
102	0.0957938966085123\\
103	0.111249811031206\\
104	-0.0445307196236873\\
105	-0.745471560637561\\
106	-0.307132456164479\\
107	-0.243691416704296\\
108	-0.10522734591081\\
109	-0.0200464833755332\\
110	-0.528644196836684\\
111	-0.178526842312669\\
112	0.258658910350294\\
113	0.678427346752407\\
114	0.425155721453634\\
115	0.626534886544885\\
116	0.569295219170185\\
117	1.07654562670013\\
118	0.896556395612612\\
119	0.922793312847613\\
120	0.788937002396321\\
121	0.487076997512854\\
122	0.111038791645247\\
123	-0.243104193042926\\
124	-0.588108632138027\\
125	-0.558502511766504\\
126	-0.137641233999207\\
127	-0.342528512489628\\
128	-0.515615170015759\\
129	-0.101814828467513\\
130	-0.139341710251327\\
131	-0.790598626067408\\
132	-0.340973785747652\\
133	0.369835040768413\\
134	0.176582686844751\\
135	0.313760206254903\\
136	0.628835565465253\\
137	0.366387426178446\\
138	0.170647275922706\\
139	0.806270846495799\\
140	0.590712316085612\\
141	0.139366547760196\\
142	0.185940084295678\\
143	0.0815462004790921\\
144	0.371644841951584\\
145	0.161891166063867\\
146	-0.47817697907576\\
147	-0.258918265443401\\
148	-0.366251527391062\\
149	-0.423609396568958\\
150	-0.383604811204\\
151	-0.380173127970622\\
152	-0.403661149678318\\
153	-0.205558693413673\\
154	-0.230041239391445\\
155	-0.310022808076873\\
156	-0.106016768732894\\
157	0.157359951622625\\
158	0.0138485405083268\\
159	-0.394760557022071\\
160	-0.392688924311159\\
161	-1.08899743156399\\
162	-1.07201509559935\\
163	-1.05920073359946\\
164	-0.546407606320813\\
165	-0.557672898669721\\
166	-0.953782053579157\\
167	-1.16570622355171\\
168	-1.57525754676061\\
169	-1.41125980453797\\
170	-1.09473126514322\\
171	-0.910164503873921\\
172	-0.423053018350777\\
173	-0.360742327090338\\
174	-0.307469858383877\\
175	-0.185959058088031\\
176	0.0208858224703077\\
177	-0.0430309210465309\\
178	0.253986651844918\\
179	0.505334875878174\\
180	0.0806287494996649\\
181	0.477653797825419\\
182	0.180560283220443\\
183	0.0890355503244889\\
184	0.307438169170702\\
185	0.576124846173483\\
186	0.844623199274607\\
187	0.837415478402341\\
188	1.1209745731366\\
189	0.907298237810357\\
190	0.904525563948479\\
191	0.600947214290887\\
192	0.594766706130485\\
193	0.952966988532318\\
194	0.914714711041293\\
195	1.38789429374902\\
196	1.20753630845852\\
197	1.00896610823497\\
198	0.864420657706751\\
199	0.639689400284345\\
200	0.194957190328189\\
};
\end{axis}
\end{tikzpicture}%
        \caption{input trajectory}
        \label{fig:pend2S1}
    \end{subfigure}
    \caption{\revision{The first panel shows the evolution of pendulum angle when running Alg.~\ref{Alg:S1} (solid black) and the pendulum angle arising from a swing-up based on the true dynamics (red dashed). The second panel shows the evolution of input when swinging up the pendulum. Alg.~\ref{Alg:S1} performs the swing-up after around 80 iterations, whereas with the true dynamics the swing-up requires about 60 iterations.}}
    \label{fig:pendS1}
\end{figure}

\begin{figure}
    \centering
    \begin{subfigure}{0.5\textwidth}
        \centering
        \setlength{\figurewidth}{.75\columnwidth}
        \setlength{\figureheight}{.4\columnwidth}
%
%
\begin{tikzpicture}

\begin{axis}[%
width=0.951\figurewidth,
height=\figureheight,
at={(0\figurewidth,0\figureheight)},
scale only axis,
xmin=0,
xmax=200,
xlabel style={font=\color{white!15!black}},
xlabel={$k$},
xlabel near ticks,
ymin=0,
ymax=18,
ylabel style={font=\color{white!15!black}},
ylabel={estimation error [\%]},
ylabel near ticks,
axis background/.style={fill=white}
]
\addplot [color=black, thick, forget plot]
  table[row sep=crcr]{%
1	13.6911230790019\\
2	13.6911230790019\\
3	13.6911230790019\\
4	13.6911230790019\\
5	13.6911230790019\\
6	0.254773554574551\\
7	0.254773554574551\\
8	0.254773554574551\\
9	0.254773554574551\\
10	0.254773554574551\\
11	9.63394067634759\\
12	9.63394067634759\\
13	9.63394067634759\\
14	9.63394067634759\\
15	9.63394067634759\\
16	11.7522329758028\\
17	11.7522329758028\\
18	11.7522329758028\\
19	11.7522329758028\\
20	11.7522329758028\\
21	8.81799971590547\\
22	8.81799971590547\\
23	8.81799971590547\\
24	8.81799971590547\\
25	8.81799971590547\\
26	0.78229889636441\\
27	0.78229889636441\\
28	0.78229889636441\\
29	0.78229889636441\\
30	0.78229889636441\\
31	8.19942356398458\\
32	8.19942356398458\\
33	8.19942356398458\\
34	8.19942356398458\\
35	8.19942356398458\\
36	3.07580158066671\\
37	3.07580158066671\\
38	3.07580158066671\\
39	3.07580158066671\\
40	3.07580158066671\\
41	8.14590268618276\\
42	8.14590268618276\\
43	8.14590268618276\\
44	8.14590268618276\\
45	8.14590268618276\\
46	6.66743228584424\\
47	6.66743228584424\\
48	6.66743228584424\\
49	6.66743228584424\\
50	6.66743228584424\\
51	5.81827431401281\\
52	5.81827431401281\\
53	5.81827431401281\\
54	5.81827431401281\\
55	5.81827431401281\\
56	4.62004626467075\\
57	4.62004626467075\\
58	4.62004626467075\\
59	4.62004626467075\\
60	4.62004626467075\\
61	3.95202852160895\\
62	3.95202852160895\\
63	3.95202852160895\\
64	3.95202852160895\\
65	3.95202852160895\\
66	17.7139117291451\\
67	17.7139117291451\\
68	17.7139117291451\\
69	17.7139117291451\\
70	17.7139117291451\\
71	0.711881954543033\\
72	0.711881954543033\\
73	0.711881954543033\\
74	0.711881954543033\\
75	0.711881954543033\\
76	1.06656089007605\\
77	1.06656089007605\\
78	1.06656089007605\\
79	1.06656089007605\\
80	1.06656089007605\\
81	6.01918980821621\\
82	6.01918980821621\\
83	6.01918980821621\\
84	6.01918980821621\\
85	6.01918980821621\\
86	3.15605948816437\\
87	3.15605948816437\\
88	3.15605948816437\\
89	3.15605948816437\\
90	3.15605948816437\\
91	2.09094271073193\\
92	2.09094271073193\\
93	2.09094271073193\\
94	2.09094271073193\\
95	2.09094271073193\\
96	1.3803838492516\\
97	1.3803838492516\\
98	1.3803838492516\\
99	1.3803838492516\\
100	1.3803838492516\\
101	4.27651612195226\\
102	4.27651612195226\\
103	4.27651612195226\\
104	4.27651612195226\\
105	4.27651612195226\\
106	4.2232213241754\\
107	4.2232213241754\\
108	4.2232213241754\\
109	4.2232213241754\\
110	4.2232213241754\\
111	3.60716501494815\\
112	3.60716501494815\\
113	3.60716501494815\\
114	3.60716501494815\\
115	3.60716501494815\\
116	4.2232213241754\\
117	4.2232213241754\\
118	4.2232213241754\\
119	4.2232213241754\\
120	4.2232213241754\\
121	3.39118884341897\\
122	3.39118884341897\\
123	3.39118884341897\\
124	3.39118884341897\\
125	3.39118884341897\\
126	0.78229889636441\\
127	0.78229889636441\\
128	0.78229889636441\\
129	0.78229889636441\\
130	0.78229889636441\\
131	13.5747720183062\\
132	13.5747720183062\\
133	13.5747720183062\\
134	13.5747720183062\\
135	13.5747720183062\\
136	9.56481993915707\\
137	9.56481993915707\\
138	9.56481993915707\\
139	9.56481993915707\\
140	9.56481993915707\\
141	3.53972501012347\\
142	3.53972501012347\\
143	3.53972501012347\\
144	3.53972501012347\\
145	3.53972501012347\\
146	3.15605948816437\\
147	3.15605948816437\\
148	3.15605948816437\\
149	3.15605948816437\\
150	3.15605948816437\\
151	0.86782337760682\\
152	0.86782337760682\\
153	0.86782337760682\\
154	0.86782337760682\\
155	0.86782337760682\\
156	1.29417580424858\\
157	1.29417580424858\\
158	1.29417580424858\\
159	1.29417580424858\\
160	1.29417580424858\\
161	0.181095905717128\\
162	0.181095905717128\\
163	0.181095905717128\\
164	0.181095905717128\\
165	0.181095905717128\\
166	1.7649558921106\\
167	1.7649558921106\\
168	1.7649558921106\\
169	1.7649558921106\\
170	1.7649558921106\\
171	1.47298544878313\\
172	1.47298544878313\\
173	1.47298544878313\\
174	1.47298544878313\\
175	1.47298544878313\\
176	4.17974072219169\\
177	4.17974072219169\\
178	4.17974072219169\\
179	4.17974072219169\\
180	4.17974072219169\\
181	1.06656089007605\\
182	1.06656089007605\\
183	1.06656089007605\\
184	1.06656089007605\\
185	1.06656089007605\\
186	8.37329745310278\\
187	8.37329745310278\\
188	8.37329745310278\\
189	8.37329745310278\\
190	8.37329745310278\\
191	5.81827431401281\\
192	5.81827431401281\\
193	5.81827431401281\\
194	5.81827431401281\\
195	5.81827431401281\\
196	2.71653230281431\\
197	2.71653230281431\\
198	2.71653230281431\\
199	2.71653230281431\\
200	2.71653230281431\\
};
\end{axis}
\end{tikzpicture}%
        \caption{pendulum length estimate}
        \label{fig:pend3S1}
    \end{subfigure}%
    \begin{subfigure}{0.5\textwidth}
        \centering
        \setlength{\figurewidth}{.75\columnwidth}
        \setlength{\figureheight}{.4\columnwidth}

        \input{tikz/trajectory_pk_pend.tikz}
        \caption{posterior over models}
        \label{fig:pend4S1}
    \end{subfigure}
    \caption{\revision{The first panel shows the evolution of the pendulum length estimate when running Alg.~\ref{Alg:S1}, whereas the second panel shows the evolution of posterior over models. \emph{For the sake of visualization only the ten most likely models are shown, even though $m=500$.} The true dynamics are not included in the set of candidate models. Nonetheless the pendulum length is rapidly estimated to an accuracy below 5\% and the posterior over models converges after about 80 steps.}}
    \label{fig:pendS12}
\end{figure}


\section{Details of Sec.~\ref{Sec:FiniteModel}}
We first state and prove two intermediate lemmas that are used in the proof of Prop.~\ref{Prop:prconv1}. The two lemmas express the fact that the larger the expected model deviation $f^i-f$ (accumulated over the past steps), the smaller the corresponding probability of selecting model $i$.

\begin{lemma}\label{lem:intermediate22}
For any step size $\eta>0$ it holds that
\begin{equation*}
\mathrm{Pr}(i_k=i)\leq  \mathrm{E}[e^{-\eta (s_k^i-s_k^j)}],
\end{equation*}
for all $s_k^j$ (and in particular for $s_k^j=s_k^*$ corresponding to $f$).
\end{lemma}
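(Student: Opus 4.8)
The plan is to exploit the explicit form of the sampling distribution for $i_k$ at an update step. Recall that $i_k = i$ is chosen (at steps with $\mathrm{mod}(k-1,M)=0$) with probability $p_k^i = \exp(-\eta s_k^i)/Z$, where $Z = \sum_{\ell} \exp(-\eta s_k^\ell)$ is the normalization constant, and that for non-update steps $i_k = i_{k-1}$, so $\mathrm{Pr}(i_k=i)$ equals $\mathrm{Pr}(i_{k'}=i)$ for the most recent update step $k' \le k$; hence it suffices to bound the probability at an update step. First I would condition on the past trajectory $\mathcal{F}_{k-1} = \{x_j,u_j\}_{j=1}^{k}$, which renders $s_k^i$ and $s_k^j$ deterministic (they are functions of the observed data). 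Then
\begin{equation*}
\mathrm{Pr}(i_k=i \mid \mathcal{F}_{k-1}) = \frac{e^{-\eta s_k^i}}{\sum_{\ell} e^{-\eta s_k^\ell}} \le \frac{e^{-\eta s_k^i}}{e^{-\eta s_k^j}} = e^{-\eta(s_k^i - s_k^j)},
\end{equation*}
since the single term $e^{-\eta s_k^j}$ is a lower bound on the full sum $Z$ (all summands are nonnegative).

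Next I would take expectations over $\mathcal{F}_{k-1}$ using the tower property: $\mathrm{Pr}(i_k=i) = \mathrm{E}[\mathrm{Pr}(i_k=i \mid \mathcal{F}_{k-1})] \le \mathrm{E}[e^{-\eta(s_k^i - s_k^j)}]$, which is exactly the claimed bound. The choice $j$ is arbitrary, so in particular taking $f^j = f$ (with $s_k^j = s_k^*$ the prediction error of the true model) gives the parenthetical remark, which is the form used subsequently to establish the concentration estimate in Prop.~\ref{Prop:prconv1}.

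There is essentially no analytical obstacle here; the only point requiring a little care is the bookkeeping around the $M$-step staleness of $i_k$, i.e., making sure that bounding $\mathrm{Pr}(i_k=i)$ reduces cleanly to bounding the probability at the last update step, and that the conditioning $\sigma$-algebra is chosen so that $s_k^i$, $s_k^j$ are measurable with respect to it while the softmax draw is still random. Once the conditioning is set up correctly, the rest is the one-line inequality $Z \ge e^{-\eta s_k^j}$ followed by Jensen-free monotonicity of expectation.
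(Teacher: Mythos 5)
Your proposal is correct and follows essentially the same route as the paper: bound the softmax probability $p_k^i = e^{-\eta s_k^i}/Z$ by dropping all but one term of the normalizer ($Z \ge e^{-\eta s_k^j}$), then remove the conditioning via the tower property to get $\mathrm{Pr}(i_k=i)=\mathrm{E}[p_k^i]\le \mathrm{E}[e^{-\eta(s_k^i-s_k^j)}]$. Your additional remark about reducing non-update steps to the most recent update step is a harmless refinement that the paper handles later (in Prop.~\ref{Prop:prconv1}) rather than inside this lemma.
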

\begin{proof}
    We note that $p_k^i$ is given by 
    \begin{equation*}
        p_k^i = \frac{e^{-\eta s_k^i}}{\sum_{j=1}^m e^{-\eta s_k^j}}\leq e^{-\eta (s_k^i-\bar{s}_k)} \leq e^{-\eta (s_k^i-s_k^j)},
    \end{equation*}
    for all $j\in\{1,\dots,m\}$, where $\bar{s}_k=\min_{i\in \{1,\dots,m\}} s_k^i$. In addition, it holds that 
    \begin{equation*}
\mathrm{Pr}(i_k=i)=\mathrm{E}[\mathds{1}_{i_k=i}]
=\mathrm{E}[\mathrm{E}[\mathds{1}_{i_k=i}|x_1,\dots,x_k,u_k,n_k]]=\mathrm{E}[p_k^i],
\end{equation*}
where $\mathds{1}$ denotes the indicator function, which yields the desired result.
\qed\end{proof}

\begin{lemma}\label{lemma:meanVar1}
Let \begin{equation*}
l_k^i:=\frac{|x_{k+1}-f^i(x_k,u_k)|^2}{1+|(x_k,u_k)|^2/b^2},
\end{equation*}
where $b>0$ is constant. Let $\mathcal{F}_k$ denote the collection of random variables $x_j,u_j,i_j,n_{j-1},n_{\mathrm{u}j}$ up to time $k$. Then, the following bound holds for all $0<\eta\leq \min\{1/(4\sigma^2), 1/(2 L^2 b^2)\}$ and for all $1\leq q\leq k$:
\begin{equation*}
\mathrm{E}[e^{-\eta (l_k^i-l_k^*)}|\mathcal{F}_q] \leq \exp{\left(-\frac{\eta}{4} \mathrm{E}\left[\frac{ |f-f^i|^2}{1+|(x_k,u_k)|^2/b^2}\Big|\mathcal{F}_q\right]\right)},
\end{equation*}
where $f$ stands for $f(x_k,u_k)$ and $f^i$ for $f^i(x_k,u_k)$, and where $l_k^*$ corresponds to the loss of the candidate $f$.
\end{lemma}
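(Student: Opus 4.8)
The plan is a two-step conditioning argument: first condition on $\mathcal{F}_k$, where the one-step quantities are deterministic and only the fresh Gaussian $n_k$ is random, then transport the resulting bound back through the coarser filtration $\mathcal{F}_q$, exploiting that the normalization by $1+|(x_k,u_k)|^2/b^2$ keeps the model deviation bounded. \emph{Step 1 (inner conditioning).} Write $\Delta:=f(x_k,u_k)-f^i(x_k,u_k)$ and $D:=1+|(x_k,u_k)|^2/b^2\ge 1$; both are $\mathcal{F}_k$-measurable, while $n_k\sim\mathcal{N}(0,\sigma^2 I)$ is independent of $\mathcal{F}_k$ and $x_{k+1}=f(x_k,u_k)+n_k$. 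Hence $l_k^*=|n_k|^2/D$, $l_k^i=|\Delta+n_k|^2/D$, so $l_k^i-l_k^*=(|\Delta|^2+2\Delta\T n_k)/D$. Applying the Gaussian moment generating function $\mathrm{E}[e^{v\T n_k}\,|\,\mathcal{F}_k]=e^{\sigma^2|v|^2/2}$ with $v=-(2\eta/D)\Delta$ gives
\[
\mathrm{E}[e^{-\eta(l_k^i-l_k^*)}\,|\,\mathcal{F}_k]=\exp\left(-\tfrac{\eta|\Delta|^2}{D}\left(1-\tfrac{2\sigma^2\eta}{D}\right)\right)\le \exp\left(-\tfrac{\eta}{2}\,g\right),\qquad g:=\tfrac{|\Delta|^2}{D},
\]
where the inequality uses $D\ge 1$ and $\eta\le 1/(4\sigma^2)$, so $1-2\sigma^2\eta/D\ge 1/2$. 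Note that $g\ge 0$ is $\mathcal{F}_k$-measurable.

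\emph{Step 2 (outer conditioning).} By the tower property, $\mathrm{E}[e^{-\eta(l_k^i-l_k^*)}\,|\,\mathcal{F}_q]\le\mathrm{E}[e^{-\eta g/2}\,|\,\mathcal{F}_q]$, and it remains to show the right-hand side is at most $e^{-\frac{\eta}{4}\mathrm{E}[g\,|\,\mathcal{F}_q]}$. Jensen's inequality points the wrong way, so the key point is that $g$ is \emph{bounded}: by $L$-Lipschitzness, $|\Delta|\le 2L|(x_k,u_k)|+c$ for a constant $c$ (reflecting the models' values at a reference point), so the $|(x_k,u_k)|^2$ in the numerator of $g$ is dominated by the $|(x_k,u_k)|^2/b^2$ in its denominator and $g\le G$ for an explicit constant $G=\mathcal{O}(L^2 b^2)$. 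Then, from $e^{-t}\le 1-t+t^2/2$ for $t\ge 0$ and $g^2\le G g$,
\[
e^{-\eta g/2}\le 1-\tfrac{\eta}{2}g+\tfrac{\eta^2}{8}g^2\le 1-\tfrac{\eta}{2}g\left(1-\tfrac{\eta G}{4}\right)\le 1-\tfrac{\eta}{4}g,
\]
the last step using $\eta G\le 2$, which is exactly what the hypothesis $\eta\le 1/(2L^2 b^2)$ ensures for the relevant $G$. Taking $\mathrm{E}[\cdot\,|\,\mathcal{F}_q]$ and then $1-t\le e^{-t}$ yields the claim, recalling that $\mathrm{E}[g\,|\,\mathcal{F}_q]$ equals the conditional expectation $\mathrm{E}[|f-f^i|^2/(1+|(x_k,u_k)|^2/b^2)\,|\,\mathcal{F}_q]$ appearing in the statement.

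The main obstacle is Step 2: one needs a \emph{direction-preserving} interchange of $\exp$ and the conditional expectation, and the only available lever is the uniform bound on the normalized deviation $g$ --- which is precisely why the algorithm divides the prediction error by $1+|(x_k,u_k)|^2/b^2$ and why the step size must scale as $1/(L^2 b^2)$. A secondary item to handle carefully is measurability: one must verify that $n_k$ is independent of $\mathcal{F}_k$ whereas $\Delta$, $D$, and hence $g$ are $\mathcal{F}_k$-measurable, so that both the conditional Gaussian computation in Step 1 and the tower-property reduction in Step 2 are legitimate.
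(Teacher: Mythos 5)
Your proof is correct and follows essentially the same route as the paper: condition on $\mathcal{F}_k$, use the Gaussian moment generating function with $\eta\le 1/(4\sigma^2)$ to retain a factor $-\eta/2$ on the normalized deviation, then exploit boundedness of $|f-f^i|^2/(1+|(x_k,u_k)|^2/b^2)$ by $\mathcal{O}(L^2b^2)$ to interchange $\exp$ and the conditional expectation while only losing another factor of two. The only cosmetic difference is the final step, where the paper invokes the ``Poissonian'' chord inequality $\mathrm{E}[e^{-sX}]\le\exp\bigl(\tfrac{e^{-sB}-1}{B}\mathrm{E}[X]\bigr)$ for $X\in[0,B]$ whereas you use the second-order Taylor bound $e^{-t}\le 1-t+t^2/2$ together with $g^2\le Gg$; both yield the constant $\eta/4$ under $\eta\le 1/(2L^2b^2)$.
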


\begin{proof}
We note that $|x_{k+1}-f^i(x_k,u_k)|^2$ can be expressed as
\begin{align*}
|f-f^i+n_k|^2=
|f-f^i|^2 + 2 n_k\T (f-f^i) + n_k\T n_k,
\end{align*}
and, as a result, $l_k^i-l_k^*$ is given by
\begin{equation*}
\frac{|f-f^i|^2+2n_k\T (f-f^i)}{1+|(x_k,u_k)|^2/b^2}.
\end{equation*}
Hence, conditioned on $x_k,u_k$, the randomness in $l_k^i-l_k^*$ is solely due to $n_k\T (f-f^i)$, which describes a sum of $d_\mathrm{x}$ independent Gaussian random variables, weighted by the components of $f-f^i$. As a result, we exploit the closed-form expression for the moment generating function of a Gaussian, which yields
\begin{equation*}
\mathrm{E}[e^{-\eta (l_k^i-l_k^*)}|x_k,u_k]\leq\exp\left(\frac{2 \eta^2 \sigma^2 |f-f^i|^2}{1+|(x_k,u_k)|^2/b^2} -\eta \frac{|f-f^i|^2 }{1+|(x_k,u_k)|^2/b^2} \right).
\end{equation*}
Thus, for $\eta \leq 1/(4\sigma^2)$, the following bound holds
\begin{equation*}
\mathrm{E}[e^{-\eta (l_k^i-l_k^*)}|\mathcal{F}_q] \leq \mathrm{E}\left[\exp{\left(-\frac{\eta |f-f^i|^2/2}{1+|(x_k,u_k)|^2/b^2}\right)}\Big|\mathcal{F}_q\right].
\end{equation*}
As a result of the Lipschitz continuity of $f$ and $f^i$, the term 
\begin{equation}
0\leq \frac{|f-f^i|^2}{1+|(x_k,u_k)|^2/b^2} \leq 4 L^2 b^2 \label{eq:ineqbound}
\end{equation}
is bounded. When deriving the previous inequality we used the fact that $|f^i(x_k,u_k)-f(x_k,u_k)|\leq |f^i(x_k,u_k)-f^i(0,0)| + |f(x_k,u_k)-f(0,0)| \leq 2L |(x_k,u_k)|$ by Lipschitz continuity of $f$ and $f^i$.\footnote{We stated the inequality assuming $f(0,0)=f^i(0,0)=0$. In the more general situation the upper bound $ 2|f^i(0,0)-f(0,0)|^2+8L^2 b^2$ applies in \eqref{eq:ineqbound}.} We can therefore apply a ``Poissonian'' inequality \citep[see, e.g.,][App.~A]{cesaBianchi}, which yields
\begin{equation*}
\mathrm{E}[e^{-\eta (l_k^i-l_k^*)}|\mathcal{F}_q]\leq \exp \left(\frac{(e^{-2\eta L^2b^2}-1)}{4 L^2 b^2}  \mathrm{E}\left[\frac{|f-f^i|^2}{1+|(x_k,u_k)|^2/b^2} \Big|\mathcal{F}_q \right]\right),
\end{equation*}
for all $\eta \leq 1/(4 \sigma^2)$. The desired result follows from the fact that $(e^{-2\eta  L^2 b^2}-1)/(4L^2b^2) \leq -\eta/4$ for all $\eta \leq \min\{1/(4 \sigma^2),1/(2 L^2 b^2)\}$.
\qed\end{proof}

\subsection{Proof of Prop.~\ref{Prop:prconv1}}\label{App:prconv1}
We first consider the iterations $k=k' M+1$, for $k'=0,1,\dots$. These are the iterations $k$ where the random variable $i_k$ is updated according to the distribution $p_k^i$ (conditional on $x_k,u_k$). It will be useful to introduce the variables $\bar{l}_{k'}^i$ as follows:
\begin{equation*}
\bar{l}_{k'}^i=\sum_{j=1}^{M} l_{k' M+j}^i, 
\end{equation*}
which corresponds to a sum of the variables $l_k^i$ over $M$ steps. Let $\mathcal{F}_{k'}$ denote the collection of all random variables ($x_k,u_k,i_k,n_{k-1},n_{\mathrm{u}k}$) up to time $k=k' M+1$. We condition on $\mathcal{F}_{k'-1}$ and conclude from Lemma~\ref{lemma:meanVar1}
\begin{align}
\mathrm{E}[e^{-\eta(\bar{l}_{k'}^i-\bar{l}_{k'}^*)}&|\mathcal{F}_{k'-1}]\!\!=\!\!\mathrm{E}[e^{-\eta \sum_{j=1}^M (l_{k'M+j}^i-l_{k'M+j}^*)}|\mathcal{F}_{k'-1}]\nonumber\\
&\leq \prod_{j=1}^M (\mathrm{E}[e^{-\eta M (l_{k'M+j}^i-l_{k'M+j}^*)}|\mathcal{F}_{k'-1}])^{1/M}\nonumber\\
&\leq \prod_{j=1}^M (e^{-\frac{\eta M}{4} \mathrm{E}[\frac{|f-f^i|^2}{1+|(x_k,u_k)|^2/b^2}|\mathcal{F}_{k'-1}]})^{1/M}\nonumber\\
&\leq \exp\left(-\frac{\eta M \Delta}{4} \sigma_{\mathrm{u} (k-1)}^2\right), \label{eq:varboundtmp1}
\end{align}
where we have used H\"{o}lder's inequality for the first inequality, Lemma~\ref{lemma:meanVar1} for the second inequality, and Ass.~\ref{ass:persistence1} for the third inequality. As a result, by unrolling the recursion for $k'-1$, $k'-2$, \dots, we conclude that 
\begin{equation*}
    \mathrm{E}[e^{-\eta (s_k^i-s_k^*)}]\leq \exp\left(-\frac{\eta M \Delta}{4} \sum_{j=1}^{k'} \sigma_{\mathrm{u}(Mj)}^2\right).
\end{equation*}
By virtue of Lemma~\ref{lem:intermediate22}, this implies 
\begin{equation*}
    \mathrm{Pr}(i_k=i)\leq \exp\left(-\frac{\eta M \Delta}{4} \sum_{j=1}^{k'} \sigma_{\mathrm{u}(Mj)}^2\right).
\end{equation*}
The bound holds in fact also for $k+1,k+2$, until $k+(M-1)$, since, by definition, $i_k=i_{k+1}=\dots=i_{k+(M-1)}$. This proves the first bound of Prop.~\ref{Prop:prconv1}.

It remains to derive the second bound, which is done by approximating the sum over $\sigma_{\mathrm{u}k}^2$ from below. We find
\begin{align*}
\frac{\eta M \Delta}{4} \sum_{j=1}^{k'} \sigma_{\mathrm{u}(Mj)}^2&=\sum_{j=1}^{k'} \left(\frac{2}{j} + \frac{\mathrm{ln}(m)}{j^2}\right)
\geq \int_1^{k'} \frac{2}{j} ~ \mathrm{d}j +\mathrm{ln}(m) \geq 2 \mathrm{ln}(k') + \mathrm{ln}(m),
\end{align*}
for $k' \geq 1$. This concludes that $\mathrm{Pr}(i_k=i)\leq 1/(m k'^2)$, due to the fact that $m\geq 1$. We further note that $k'=(k-1)/M$ by our choice of $k$. However, $i_k$ remains unchanged for the $M$ next iterations, and therefore
\begin{equation*}
\mathrm{Pr}(i_{k+M-1}=i)\leq \frac{M^2}{m (k-1)^2},
\end{equation*}
which holds for all $k\geq 2$ and $i\neq i^*$. This implies $\mathrm{Pr}(i_k=i)\leq M^2/(m(k-M)^2)$ for all $k\geq M+1$ by a change of variables. Applying a union bound yields the second inequality of Prop.~\ref{Prop:prconv1}, i.e.,
\begin{equation*}
\mathrm{Pr}(i_k\neq i^*)\leq \sum_{i\neq i^*} \mathrm{Pr}(i_k=i) \leq \frac{M^2}{(k-M)^2}.
\end{equation*}
\qed

\subsection{Proof of Thm.~\ref{thm:small1}}\label{App:ThmSmall1}
We will use $V$ as a Lyapunov function and have
\begin{align}
\mathrm{E}[V(x_{k+1})]=&\mathrm{E}[V(x_{k+1})|i_k\neq i^*] \mathrm{Pr}(i_k\neq i^*) + \mathrm{E}[V(x_{k+1})|i_k=i^*] \mathrm{Pr}(i_k = i^*). \label{eq:decomp}
\end{align}
The first term can be further simplified in view of Lemma~\ref{Lemma:intermediate1}, which yields
\begin{multline}
\mathrm{E}[V(x_{k+1})|x_k,i_k\neq i^*]\leq c_2 V(x_k) + \bar{L}_\mathrm{V} d_\mathrm{x} \sigma^2/2+c_\mathrm{o}\\
-\mathrm{E}[l(x,u_k)|x_k,i_k\neq i^*]+(\bar{L}_\mathrm{V} L^2+\bar{L}_\mathrm{l}) d_\mathrm{u} \sigma_{\mathrm{u}k}^2. \label{eq:first1}
\end{multline}
The second term in \eqref{eq:decomp} is bounded as a result of the Bellman-type inequality \eqref{eq:Bell} for the policy $\mu$ (the policy that corresponds to $V$). It will be convenient to rewrite the bound \eqref{eq:Bell} in the following way:
\begin{equation*}
\text{E}[V(f(x,u)+n)]\leq V(x)-\text{E}[l(x,u)]
+q(x)+d_\mathrm{u} L_\mathrm{u} \sigma_\mathrm{u}^2,
\end{equation*}
where $u=\mu(x)+n_\mathrm{u}$, $(n_\mathrm{u},n)$ are independent with $n_\mathrm{u}\sim \mathcal{N}(0,\sigma_\mathrm{u}^2 I)$, 
$n\sim \mathcal{N}(0,\sigma^2 I)$, and $q(x)$ is chosen such that $q(x)\leq \gamma$ and $-\text{E}[l(x,u)]+q(x)\leq 0$.\footnote{This can by achieved by setting $q(x)=\mathrm{E}[V(f(x,\mu(x)+n_\mathrm{u})+n)]-V(x)+\mathrm{E}[l(x,\mu(x)+n_\mathrm{u})]-d_\mathrm{u} \sigma_\mathrm{u}^2 L_\mathrm{u}$ for $\gamma\geq \mathrm{E}[l(x,\mu(x)+n_\mathrm{u})]$ and $q(x)=\gamma$ otherwise.} The function $q(x)$ is introduced to account for the fact that the policy $\mu$ might in principle also achieve a running cost $\mathrm{E}[l(x,u)] \leq \gamma$ in the short term, since $\gamma$ captures only the steady-state performance. As a result, we obtain
\begin{align}
\mathrm{E}[V(x_{k+1})|i_k=i^*] &\leq -\mathrm{E}[l(x_k,u_k)|i_k=i^*]+\mathrm{E}[V(x_k)]+\gamma_k+d_\mathrm{u} L_\mathrm{u} \sigma_{\mathrm{u}k}^2, \label{eq:sec1}
\end{align}
where $\gamma_k:=\mathrm{E}[q(x_k)]$. By combining \eqref{eq:first1} and \eqref{eq:sec1} with \eqref{eq:decomp} we arrive at
\begin{multline*}
\mathrm{E}[V(x_{k+1})] \leq \mathrm{E}[V(x_k)] (c_2 \mathrm{Pr}(i_k\neq i^*)+1)
+ \gamma_k -\mathrm{E}[l(x_k,u_k)] \\
+ \bar{L}_\mathrm{u} d_\mathrm{u} \sigma_{\mathrm{u}k}^2 + (\bar{L}_\mathrm{V} d_\mathrm{x} \sigma^2/2+c_\mathrm{o})\mathrm{Pr}(i_k\neq i^*) ,
\end{multline*}
where $\bar{L}_\mathrm{u}:=\bar{L}_\mathrm{V} L^2 + \bar{L}_\mathrm{l}+L_\mathrm{u}$. As a result of Prop.~\ref{Prop:prconv1}, we know that 
$\mathrm{Pr}(i_k\neq i^*)\leq M^2/(k-M)^2$ for $k\geq M+1$. We further note that $\mathrm{E}[l(x_k,u_k)]\geq \gamma_k$ and $\gamma_k\leq \gamma$ (by our choice of $\gamma_k$). We now invoke Lemma~\ref{lemma:seq} and conclude 
\begin{align*}
\sum_{k=1}^N (\mathrm{E}[l(x_k,u_k)] - \gamma_k) \leq c_\alpha \bar{L}_\mathrm{u} d_\mathrm{u} \sum_{k=1}^N \sigma_{\mathrm{u}k}^2+c_\alpha (\bar{L}_\mathrm{V} d_\mathrm{x} \sigma^2/2+c_\mathrm{o}) \sum_{k=1}^N \mathrm{Pr}(i_k\neq i^*),
\end{align*}
where we have used the fact that $V(x_1)=0$ and the following calculation
\begin{align*}
\prod_{k=1}^\infty (c_2 \mathrm{Pr}(i_k\neq i^*)+1) \leq e^{c_2 \sum_{k=1}^\infty \mathrm{Pr}(i_k\neq i^*)}\leq e^{ 3 M c_2}=c_\alpha,
\end{align*}
due to the fact that
\begin{align*}
    \sum_{k=1}^\infty \mathrm{Pr}(i_k\neq i^*) &\leq 2M-1 + \sum_{k=2M}^\infty \frac{M^2}{(k-M)^2}\\
    &\leq 2M+\int_{2M}^\infty\frac{M^2}{(k-M)^2}\mathrm{d}k\leq 3M.
\end{align*}
Moreover, we bound the sum over $\sigma_{\mathrm{u}k}^2$ as follows
\begin{align*}
    \sum_{k=1}^N \sigma_{\mathrm{u}k}^2 &=\frac{4}{ \eta \Delta} \sum_{k=1}^N \left( \frac{2}{M \lceil k/M \rceil} + \frac{ M\mathrm{ln}(m)}{(M \lceil k/M \rceil)^2}\right)\\
    &\leq \frac{4}{ \eta \Delta} \sum_{k=1}^N \left( \frac{2}{k} + \frac{ M\mathrm{ln}(m)}{k^2}\right)\\
    &\leq \frac{8}{ \eta \Delta} (1+\mathrm{ln}(N-1) + M\mathrm{ln}(m)).
\end{align*}
Combining the previous inequalities and taking advantage of the fact that $\gamma_k\leq \gamma$ yields the desired result.\qed

\subsection{Supporting lemmas in the proof of Thm.~\ref{thm:small1}}
This section contains two lemmas that support the proof of Thm.~\ref{thm:small1}.
\begin{lemma}\label{Lemma:intermediate1}
Let Ass.~\ref{ass:smoothness1} be satisfied. Then, there exist two constants $c_2,c_\mathrm{o}\geq 0$ such that 
\begin{align*}
\mathrm{E}[V(f(x,&\mu^i(x)\!+\!n_{\mathrm{u}})\!+\!n)]\leq c_2 V(x)-\mathrm{E}[l(x,\mu^i(x)\!+\!n_{\mathrm{u}})]+c_\mathrm{o}+(\bar{L}_\mathrm{V} L^2 \!+\! \bar{L}_\mathrm{l}) d_\mathrm{u} \sigma_\mathrm{u}^2 +\frac{\bar{L}_\mathrm{V}}{2} d_\mathrm{x} \sigma^2,
\end{align*}
for all $x\in \mathbb{R}^{d_\mathrm{x}}$, $\sigma_\mathrm{u}>0$, and $i\in \{1,\dots,m\}$, where the constant $c_2$ is given by
\begin{equation*}
    c_2=(8L^2 \bar{L}_\mathrm{V} (1+L_\mu)^2 + 2\bar{L}_\mathrm{l}(1+2L_\mu^2))/\underline{L}_\mathrm{V},
\end{equation*}
and $c_\mathrm{o}$ can be expressed as an explicit function of $\max_{i\in [m]} |\mu^i(0)|$, $V(0)$, $|\nabla V(0)|$, $l(0,0)$, $|\nabla l(0,0)|$, $|f(0,\mu(0))|$, and $\underline{c}_\mathrm{V}$.
The random variables $n_\mathrm{u}$ and $n$ are independent and satisfy $n\sim \mathcal{N}(0,\sigma^2 I)$, $n_\mathrm{u}\sim \mathcal{N}(0,\sigma_\mathrm{u}^2 I)$.
\end{lemma}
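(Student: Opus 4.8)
The plan is to expand $V(f(x,\mu^i(x)+n_\mathrm{u})+n)$ using a second-order Taylor bound and control each term via the smoothness constants from Ass.~\ref{ass:smoothness1}. First I would use $\bar{L}_\mathrm{V}$-smoothness of $V$ to write, for any $y$ and Gaussian $n$, $\mathrm{E}[V(y+n)] \le V(y) + \nabla V(y)\T \mathrm{E}[n] + \tfrac{\bar{L}_\mathrm{V}}{2}\mathrm{E}[|n|^2] = V(y) + \tfrac{\bar{L}_\mathrm{V}}{2} d_\mathrm{x}\sigma^2$, which already produces the $\tfrac{\bar{L}_\mathrm{V}}{2}d_\mathrm{x}\sigma^2$ term and reduces the problem to bounding $\mathrm{E}[V(f(x,\mu^i(x)+n_\mathrm{u}))]$. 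Next, I would similarly expand in $n_\mathrm{u}$: writing $g(n_\mathrm{u}) := V(f(x,\mu^i(x)+n_\mathrm{u}))$, a second-order bound gives $\mathrm{E}[g(n_\mathrm{u})] \le g(0) + \tfrac{1}{2}\sup\|\nabla^2 g\| \, d_\mathrm{u}\sigma_\mathrm{u}^2$, and the Hessian of $g$ is controlled by $\bar{L}_\mathrm{V}$ (smoothness of $V$), $L$ (Lipschitz constant of $f$, hence a bound on its "first-order" growth), and $L_\mu$ (Lipschitz constant of $\mu^i$); the composition yields a factor of order $\bar{L}_\mathrm{V} L^2(1+L_\mu)^2$, accounting for the $\bar{L}_\mathrm{V}L^2 d_\mathrm{u}\sigma_\mathrm{u}^2$ contribution. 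The same expansion applied to the stage cost $l$ (which is $\bar{L}_\mathrm{l}$-smooth) gives $\mathrm{E}[l(x,\mu^i(x)+n_\mathrm{u})] \ge l(x,\mu^i(x)) - \tfrac{\bar{L}_\mathrm{l}}{2}(1+L_\mu^2)\cdot\text{(const)}\,d_\mathrm{u}\sigma_\mathrm{u}^2$, contributing the $\bar{L}_\mathrm{l} d_\mathrm{u}\sigma_\mathrm{u}^2$ term and letting us replace the expected stage cost by its value at $n_\mathrm{u}=0$.

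After these reductions, the remaining task is the deterministic inequality $V(f(x,\mu^i(x))) \le c_2 V(x) - l(x,\mu^i(x)) + c_\mathrm{o}$. Here I would proceed by crude bounding: since $f$ and $\mu^i$ are Lipschitz, $|f(x,\mu^i(x))| \le |f(0,\mu^i(0))| + L(1+L_\mu)|x| + L|\mu^i(0)|$, so $|f(x,\mu^i(x))|^2 \le c\,(1+|x|^2)$ for an explicit $c$ depending on $L$, $L_\mu$, $\max_i|\mu^i(0)|$, $|f(0,\mu(0))|$. Then $\bar{L}_\mathrm{V}$-smoothness of $V$ (which implies $V(z) \le V(0) + |\nabla V(0)||z| + \tfrac{\bar{L}_\mathrm{V}}{2}|z|^2$) gives $V(f(x,\mu^i(x))) \le c'\,(1+|x|^2)$. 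Finally, the quadratic lower bound $V(x) \ge -\underline{c}_\mathrm{V} + \underline{L}_\mathrm{V}|x|^2/2$ from Ass.~\ref{ass:smoothness1} lets me convert $|x|^2 \le (2/\underline{L}_\mathrm{V})(V(x)+\underline{c}_\mathrm{V})$, which bounds $V(f(x,\mu^i(x)))$ by a multiple of $V(x)$ plus a constant; this is where the $1/\underline{L}_\mathrm{V}$ factor and the $(1+L_\mu)^2$, $L^2$ factors in the stated formula for $c_2$ come from. The $-l(x,\mu^i(x))$ term can be absorbed for free since $l\ge 0$, or tracked explicitly by using $\bar{L}_\mathrm{l}$-smoothness of $l$ to get $l(x,\mu^i(x)) \le c''(1+|x|^2)$ and folding that growth into $c_2$ as well — which explains the $\bar{L}_\mathrm{l}(1+2L_\mu^2)$ contribution.

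The main obstacle is the careful bookkeeping of constants: one must verify that the Hessian bounds for the compositions $V\circ f$ and $l$ genuinely reduce to the claimed combination $8L^2\bar{L}_\mathrm{V}(1+L_\mu)^2 + 2\bar{L}_\mathrm{l}(1+2L_\mu^2)$ divided by $\underline{L}_\mathrm{V}$, and that $c_\mathrm{o}$ can indeed be expressed purely in terms of the listed quantities $\max_i|\mu^i(0)|$, $V(0)$, $|\nabla V(0)|$, $l(0,0)$, $|\nabla l(0,0)|$, $|f(0,\mu(0))|$, $\underline{c}_\mathrm{V}$. A subtlety is that $f$ is only assumed Lipschitz, not twice differentiable, so the "second-order expansion in $n_\mathrm{u}$" of $V\circ f$ must be done carefully — e.g.\ by noting that $V$ is smooth and $f$ is Lipschitz, so $z \mapsto V(f(x,z))$ is itself (globally) Lipschitz with constant $\le (\text{Lip-growth of }V)\cdot L$ and, after using $V$-smoothness, admits a quadratic upper bound in the perturbation; alternatively one bounds $\mathrm{E}[V(f(x,\mu^i(x)+n_\mathrm{u}))] - V(f(x,\mu^i(x)))$ directly by a first-order term (which vanishes in expectation if one symmetrizes, or is $O(\sigma_\mathrm{u}^2)$ after a further Taylor step on $V$) plus the $\bar{L}_\mathrm{V}$-smoothness quadratic term applied to the displacement $f(x,\mu^i(x)+n_\mathrm{u}) - f(x,\mu^i(x))$, whose squared norm is $\le L^2 L_\mu^2 |n_\mathrm{u}|^2$ in expectation. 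Everything else is routine, and no new idea beyond these elementary convexity/smoothness manipulations should be required.
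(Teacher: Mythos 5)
Your overall route is the same as the paper's: peel off the $n$-perturbation using $\bar{L}_\mathrm{V}$-smoothness of $V$ (the linear term in $n$ genuinely vanishes there), then expand in $n_\mathrm{u}$, then convert the quadratic growth of $V\circ f$ and of $l$ into a multiple of $V(x)$ via the lower bound $V(x)\geq -\underline{c}_\mathrm{V}+\underline{L}_\mathrm{V}|x|^2/2$, adding and subtracting the stage cost. That skeleton is correct and matches the paper.

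The one step that fails as stated is your treatment of the first-order term in the expansion of $n_\mathrm{u}\mapsto V(f(x,\mu^i(x)+n_\mathrm{u}))$. You claim it ``vanishes in expectation if one symmetrizes, or is $O(\sigma_\mathrm{u}^2)$ after a further Taylor step on $V$.'' Neither holds when $f$ is only Lipschitz in $u$: writing $\delta:=f(x,\mu^i(x)+n_\mathrm{u})-f(x,\mu^i(x))$, one has $|\mathrm{E}[\delta]|=O(\sigma_\mathrm{u})$ in general (take $f(x,u)=|u|$ in one dimension to see that symmetrization does not help and that the bias is of order $\sigma_\mathrm{u}$, not $\sigma_\mathrm{u}^2$), and no amount of further expansion of the smooth function $V$ repairs the non-smoothness of $f$. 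Left untreated, this injects an $O(\sigma_\mathrm{u})$ term into the final bound, which is not of the claimed form. The paper's fix is Young's inequality on the cross term: $\nabla V(f(x,\mu^i(x)))\T\delta\leq \frac{1}{2\bar{L}_\mathrm{V}}|\nabla V(f(x,\mu^i(x)))|^2+\frac{\bar{L}_\mathrm{V}}{2}|\delta|^2$. The second piece merges with the smoothness remainder to give $\bar{L}_\mathrm{V}L^2 d_\mathrm{u}\sigma_\mathrm{u}^2$, while the first piece grows like $\bar{L}_\mathrm{V}|f(x,\mu^i(x))|^2$ (via $|\nabla V(y)|\leq|\nabla V(0)|+\bar{L}_\mathrm{V}|y|$) and must be absorbed into $c_2V(x)$ alongside $V(f(x,\mu^i(x)))$ itself; this doubling is exactly why the stated constant is $8L^2\bar{L}_\mathrm{V}(1+L_\mu)^2/\underline{L}_\mathrm{V}$ rather than the $4L^2\bar{L}_\mathrm{V}(1+L_\mu)^2/\underline{L}_\mathrm{V}$ your accounting would produce. (A minor additional slip: the displacement satisfies $|\delta|\leq L|n_\mathrm{u}|$, not $LL_\mu|n_\mathrm{u}|$ --- the policy Lipschitz constant $L_\mu$ enters only through the growth of $x\mapsto f(x,\mu^i(x))$ and of $l(x,\mu^i(x))$, not through the $n_\mathrm{u}$-perturbation.) With the Young's-inequality step inserted, the rest of your argument goes through as in the paper.
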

\begin{proof}
We exploit smoothness of $V$ to bound $\mathrm{E}[V(f(x,\mu^i(x)+n_{\mathrm{u}})+n)]$ by
\begin{equation*}
\mathrm{E}[V(f(x,\mu^i(x)+n_\mathrm{u})]+ \frac{\bar{L}_\mathrm{V}}{2} d_\mathrm{x}\sigma^2,
\end{equation*}
where we used the fact that the term linear in $n$ vanishes in expectation. We further note that the term $V(f(x,\mu^i(x)+n_\mathrm{u}))$ can be bounded in a similar way:
\begin{align*}
V(f(&x,\mu^i(x)+n_\mathrm{u}))\leq V(f(x,\mu^i(x))) + \nabla V(f(x,\mu^i(x)))\T \nabla_u f(\xi) n_\mathrm{u} + \frac{\bar{L}_\mathrm{V}}{2} |\nabla_u f(\xi) n_\mathrm{u}|^2, 
\end{align*}
where we applied the mean value theorem to rewrite $f(x,\mu^i(x)+n_\mathrm{u})-f(x,\mu^i(x))$ as $\nabla_u f(\xi) n_\mathrm{u}$ for some $\xi$ (dependent on $n_\mathrm{u}$). By applying Young's inequality and taking advantage of the fact that $\nabla_u f$ is bounded above we arrive at
\begin{align*}
\mathrm{E}[V(f(&x,\mu^i(x)+n_\mathrm{u}))] \leq V(f(x,\mu^i(x)))+ \frac{1}{2 \bar{L}_\mathrm{V}}|\nabla V(f(x,\mu^i(x)))|^2 +\bar{L}_\mathrm{V} L^2 d_\mathrm{u} \sigma_\mathrm{u}^2.
\end{align*}
Due to smoothness, $V$ is guaranteed to satisfy
\begin{equation*}
|\nabla V(x)|\leq c_{\mathrm{o}1} + \bar{L}_\mathrm{V} |x|, \quad V(x)\leq c_{\mathrm{o}2} + \bar{L}_\mathrm{V} |x|^2,
\end{equation*}
where $c_{\mathrm{o}1}=|\nabla V(0)|$, and the constant $c_{\mathrm{o2}}\geq 0$ is similarly related to $|\nabla V(0)|$ and $V(0)$. As a result, we obtain the following upper bound on $\mathrm{E}[V(f(x,\mu^i(x)+n_\mathrm{u}))]$:
\begin{align*}
\mathrm{E}[V(f(&x,\mu^i(x)+n_\mathrm{u}))] \leq c_{\mathrm{o}2}+\frac{c_{\mathrm{o}1}^2}{\bar{L}_\mathrm{V}}+2\bar{L}_\mathrm{V} |f(x,\mu^i(x))|^2+\bar{L}_\mathrm{V} L^2 d_\mathrm{u} \sigma_\mathrm{u}^2.
\end{align*}
The fact that $f(x,\mu^i(x))$ is $L(1+L_\mu)$ Lipschitz can be used to conclude that $|f(x,\mu^i(x))|^2\leq c_{\mathrm{o}3}+2 L^2(1+L_\mu)^2 |x|^2$, where $c_{\mathrm{o}3}=2|f(0,\mu(0))|^2$, which, in turn, yields the following upper bound
\begin{align*}
\mathrm{E}[V(f(&x,\mu^i(x)+n_\mathrm{u}))] \leq c_{\mathrm{o}2}+\frac{c_{\mathrm{o}1}^2}{\bar{L}_\mathrm{V}}+2\bar{L}_\mathrm{V} c_{\mathrm{o}3}+4\bar{L}_\mathrm{V} L^2(1+L_\mu)^2 |x|^2+\bar{L}_\mathrm{V} L^2 d_\mathrm{u} \sigma_\mathrm{u}^2.
\end{align*}
We further note that the fact that $l$ is $\bar{L}_\mathrm{l}$ smooth and $l(x,u)\geq 0$ implies $l(x,u)\leq c_{\mathrm{o}4}+\bar{L}_\mathrm{l} (|x|^2+|u|^2)$ and therefore
\begin{equation*}
    \mathrm{E}[l(x,\mu^i(x)+n_\mathrm{u})]\leq c_{\mathrm{o}5}+\bar{L}_\text{l} d_\mathrm{u} \sigma_\mathrm{u}^2 +\bar{L}_\text{l} (1+2L_\mu^2) |x|^2,
\end{equation*}
where $c_{\mathrm{o}5}\geq 0$ is related to $\max_{i\in [m]}|\mu^i(0)|$, $l(0,0)$, and $|\nabla l(0,0)|$. Combining the previous two inequalities results in 
\begin{align*}
\mathrm{E}[V(f(&x,\mu^i(x)+n_\mathrm{u}))] \leq c_{\mathrm{o}6}+\frac{c_2 \underline{L}_\mathrm{V}}{2} |x|^2-\mathrm{E}[l(x,\mu^i(x)+n_\text{u})]+(\bar{L}_\mathrm{V} L^2 +\bar{L}_\mathrm{l})d_\mathrm{u} \sigma_\mathrm{u}^2,
\end{align*}
where $c_{\mathrm{o}6}\geq 0$ is constant and can be expressed as a function of $\max_{i\in [m]} |\mu^i(0)|$, $V(0)$, $|\nabla V(0)|$, $l(0,0)$, $|\nabla l(0,0)|$, and $|f(0,\mu(0))|$. The result follows by inserting $\underline{L}_\text{V} |x|^2/2\leq V(x) + \underline{c}_V$ in the previous inequality.
\qed\end{proof}

\begin{lemma}\label{lemma:seq}
Let the sequence
\begin{equation*}
V_{k+1} \leq (1+\alpha_k) V_k + g_k^+ - g_k^-,\quad V_k\geq 0,
\end{equation*}
be given, where $k=1,2,\dots$, $\alpha_k\geq 0$ $g_k^+\geq 0$, and $g_k^-\geq 0$ are arbitrary sequences such that $c_\alpha:=\prod_{k=1}^\infty (1+\alpha_k)< \infty$. Then, the following holds for all $N\geq 1$
\begin{equation*}
\sum_{j=1}^N g_j^- \leq c_\alpha \left( \sum_{j=1}^N g_j^+ + V_1 \right).
\end{equation*}
\end{lemma}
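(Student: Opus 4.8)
The plan is to normalize the recursion by the running product of the growth factors so that it telescopes, and then to exploit nonnegativity of $V_{N+1}$ together with two-sided bounds on that product. Concretely, I would set $P_1 := 1$ and $P_{k+1} := \prod_{j=1}^k (1+\alpha_j)$, so that $P_{k+1} = (1+\alpha_k) P_k$ and, since every $\alpha_j \geq 0$, one has $1 \leq P_{k+1} \leq \prod_{j=1}^\infty (1+\alpha_j) = c_\alpha$ for all $k \geq 1$.

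Dividing the hypothesis $V_{k+1} \leq (1+\alpha_k) V_k + g_k^+ - g_k^-$ by $P_{k+1}$ gives
\begin{equation*}
\frac{V_{k+1}}{P_{k+1}} \leq \frac{V_k}{P_k} + \frac{g_k^+}{P_{k+1}} - \frac{g_k^-}{P_{k+1}}.
\end{equation*}
Summing this over $k = 1, \dots, N$ and using $V_1/P_1 = V_1$ yields
\begin{equation*}
\frac{V_{N+1}}{P_{N+1}} \leq V_1 + \sum_{k=1}^N \frac{g_k^+}{P_{k+1}} - \sum_{k=1}^N \frac{g_k^-}{P_{k+1}}.
\end{equation*}
Since $V_{N+1} \geq 0$, the left-hand side is nonnegative, so $\sum_{k=1}^N g_k^-/P_{k+1} \leq V_1 + \sum_{k=1}^N g_k^+/P_{k+1}$.

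Now I would invoke the bounds on $P_{k+1}$: on the left, $g_k^- \geq 0$ and $P_{k+1} \leq c_\alpha$ give $g_k^-/c_\alpha \leq g_k^-/P_{k+1}$; on the right, $g_k^+ \geq 0$ and $P_{k+1} \geq 1$ give $g_k^+/P_{k+1} \leq g_k^+$. Combining,
\begin{equation*}
\frac{1}{c_\alpha} \sum_{k=1}^N g_k^- \leq \sum_{k=1}^N \frac{g_k^-}{P_{k+1}} \leq V_1 + \sum_{k=1}^N \frac{g_k^+}{P_{k+1}} \leq V_1 + \sum_{k=1}^N g_k^+,
\end{equation*}
and multiplying through by $c_\alpha$ gives the claim. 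There is no real obstacle here — the only thing to be careful about is the bookkeeping of indices in the definition of $P_k$ (so that the sum genuinely telescopes) and the direction of the two inequalities $1 \leq P_{k+1} \leq c_\alpha$, each of which is applied to the term with the matching sign.
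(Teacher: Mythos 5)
Your proof is correct and is essentially the paper's argument in a slightly different presentation: the paper unrolls the recursion directly and bounds the trailing products $\prod_{j=i+1}^N(1+\alpha_j)$ above by $c_\alpha$ on the $g^+$ terms and below by $1$ on the $g^-$ terms, then uses $V_{N+1}\geq 0$, which is exactly what your normalization-by-$P_{k+1}$ and telescoping accomplish. No gaps.
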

\begin{proof}
By unrolling the linear difference equation we obtain
\begin{align*}
V_{N+1}&\leq \prod_{k=1}^{N} (1+\alpha_k) V_1 + \sum_{i=1}^N (g_i^+-g_i^-) \prod_{j=i+1}^{N} (1+\alpha_j)\\
&\leq c_\alpha \left( V_1 + \sum_{i=1}^N g_i^+\right)- \sum_{i=1}^N g_i^-,
\end{align*}
where we exploited the fact that $\prod_{k=1}^{N} (1+\alpha_k) < c_\alpha < \infty$.
\qed\end{proof}

\subsection{Finite second moment}\label{App:Boundedness}
\begin{corollary}\label{Cor:Boundedness}
    Let Ass.~\ref{ass:smoothness1} be satisfied, let $\sigma_{\mathrm{u}k}^2$ be as in Prop.~\ref{Prop:prconv1}, let $l(x,u)\geq \underline{L}_\mathrm{l} |x|^2/2$ for some constant $\underline{L}_\mathrm{l}>0$, and $\eta\leq \min\{1/(4M\sigma^2),1/(2ML^2b^2)\}$. Let Ass.~\ref{ass:persistence1} be satisfied for at least the first 
    \begin{equation*}
        k_0:=\Big\lceil M\left(1+\sqrt{2 \bar{L}_\mathrm{V} c_2/\underline{L}_\mathrm{l}}\right)\Big\rceil
    \end{equation*} steps. Then, it holds that
    \begin{equation*}
        \mathrm{E}[V(x_k)] \leq \max\{c_3,c_4\},\quad \forall k\geq 1,
    \end{equation*}
    with
    \begin{align*}
        c_3&=c_2^{k_0} k_0 (\bar{L}_\mathrm{V} d_\mathrm{x}\sigma^2+c_\mathrm{o} + (\bar{L}_\mathrm{V} L^2 + \bar{L}_\mathrm{l}) d_\mathrm{u} \sigma_{\mathrm{u}1}^2),\\
        c_4&=\frac{2\bar{L}_\mathrm{V}}{\underline{L}_\mathrm{l}}(\gamma\!+\!(\bar{L}_\mathrm{V} L^2\!+\!\bar{L}_\mathrm{l} \!+\!L_\mathrm{u})d_\mathrm{u} \sigma_{\mathrm{u}1}^2\!+\!\bar{L}_\mathrm{V} d_\mathrm{x} \sigma^2+c_\mathrm{o}).
    \end{align*}
\end{corollary}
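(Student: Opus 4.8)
The plan is to reuse the per-step Lyapunov drift inequality established in the proof of Thm.~\ref{thm:small1} (the combination of Lemma~\ref{Lemma:intermediate1}, the Bellman-type inequality \eqref{eq:Bell}, and the decomposition \eqref{eq:decomp}),
\begin{equation*}
\mathrm{E}[V(x_{k+1})] \le \bigl(1 + c_2 \mathrm{Pr}(i_k\neq i^*)\bigr)\mathrm{E}[V(x_k)] + \gamma_k - \mathrm{E}[l(x_k,u_k)] + \bar{L}_\mathrm{u} d_\mathrm{u}\sigma_{\mathrm{u}k}^2 + \bigl(\tfrac{\bar{L}_\mathrm{V}}{2}d_\mathrm{x}\sigma^2 + c_\mathrm{o}\bigr)\mathrm{Pr}(i_k\neq i^*),
\end{equation*}
and to split the horizon into a \emph{transient} phase $k\le k_0$ and a \emph{steady-state} phase $k\ge k_0$. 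In the transient phase we only have $\mathrm{Pr}(i_k\neq i^*)\le 1$ and $V$ may grow geometrically, which we cap by $c_3$. In the steady-state phase the choice of $k_0$, together with persistence of excitation on the first $k_0$ steps, forces $c_2\mathrm{Pr}(i_k\neq i^*)$ to be small enough that the drift becomes a strict contraction with fixed point $c_4$. The bound $\max\{c_3,c_4\}$ is then propagated by induction.

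\textbf{Transient phase.} For $k\le k_0$ I would drop the nonpositive term $-\mathrm{E}[l(x_k,u_k)]$, cancel the $\gamma$-contribution against $-\mathrm{E}[l(x_k,u_k)]$ on the event $i_k=i^*$ (using $q(x)\le\gamma$ and $q(x)-\mathrm{E}[l(x,\mu(x)+n_\mathrm{u})]\le 0$ as in the proof of Thm.~\ref{thm:small1}, so that this branch contributes no $\gamma$-term), and bound $\mathrm{Pr}(i_k\neq i^*)\le 1$, $\sigma_{\mathrm{u}k}^2\le\sigma_{\mathrm{u}1}^2$ and (after enlarging $c_2$ if necessary) $c_2\ge 1$. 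This gives $\mathrm{E}[V(x_{k+1})]\le c_2\,\mathrm{E}[V(x_k)] + D_1$ with $D_1=\bar{L}_\mathrm{V}d_\mathrm{x}\sigma^2+c_\mathrm{o}+(\bar{L}_\mathrm{V}L^2+\bar{L}_\mathrm{l})d_\mathrm{u}\sigma_{\mathrm{u}1}^2$, and unrolling from $V(x_1)=0$ yields $\mathrm{E}[V(x_k)]\le D_1\sum_{j=0}^{k-2}c_2^{\,j}\le D_1\,k_0\,c_2^{\,k_0}=c_3$ for all $1\le k\le k_0$.

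\textbf{Steady-state phase.} By definition $k_0-M\ge M\sqrt{2\bar{L}_\mathrm{V}c_2/\underline{L}_\mathrm{l}}$, hence $M^2/(k_0-M)^2\le\underline{L}_\mathrm{l}/(2\bar{L}_\mathrm{V}c_2)$. Applying Prop.~\ref{Prop:prconv1} to the first $k_0$ steps gives $\mathrm{Pr}(i_{k_0}\neq i^*)\le M^2/(k_0-M)^2$, and since $k\mapsto\mathrm{Pr}(i_k\neq i^*)$ is nonincreasing even without Ass.~\ref{ass:persistence1} holding beyond $k_0$, we get $c_2\mathrm{Pr}(i_k\neq i^*)\le\underline{L}_\mathrm{l}/(2\bar{L}_\mathrm{V})$ for all $k\ge k_0$. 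Smoothness of $V$ together with $V\ge 0$ gives a quadratic upper bound $V(x)\le\tfrac{\bar{L}_\mathrm{V}}{2}|x|^2+\hat c$ with $\hat c$ absorbable into $c_\mathrm{o}$, so $l(x,u)\ge\underline{L}_\mathrm{l}|x|^2/2$ yields $\mathrm{E}[l(x_k,u_k)]\ge(\underline{L}_\mathrm{l}/\bar{L}_\mathrm{V})\mathrm{E}[V(x_k)]-\mathcal{O}(c_\mathrm{o})$. Substituting this together with $\gamma_k\le\gamma$, $\mathrm{Pr}(i_k\neq i^*)\le 1$, $\sigma_{\mathrm{u}k}^2\le\sigma_{\mathrm{u}1}^2$ into the drift collapses the coefficient of $\mathrm{E}[V(x_k)]$ to $1+\underline{L}_\mathrm{l}/(2\bar{L}_\mathrm{V})-\underline{L}_\mathrm{l}/\bar{L}_\mathrm{V}=1-\rho$ with $\rho:=\underline{L}_\mathrm{l}/(2\bar{L}_\mathrm{V})$, leaving $\mathrm{E}[V(x_{k+1})]\le(1-\rho)\mathrm{E}[V(x_k)]+\rho c_4$ for $k\ge k_0$. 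Consequently, if $\mathrm{E}[V(x_k)]\le\max\{c_3,c_4\}$ then $\mathrm{E}[V(x_{k+1})]\le(1-\rho)\max\{c_3,c_4\}+\rho c_4\le\max\{c_3,c_4\}$; combined with $\mathrm{E}[V(x_k)]\le c_3$ for $k\le k_0$, this proves $\mathrm{E}[V(x_k)]\le\max\{c_3,c_4\}$ for all $k$ by induction.

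\textbf{Main obstacle.} The delicate step is the steady-state contraction: one has to make sure the coercivity constant linking $\mathrm{E}[l(x_k,u_k)]$ to $\mathrm{E}[V(x_k)]$ (namely $\underline{L}_\mathrm{l}/\bar{L}_\mathrm{V}$) is \emph{at least twice} the largest value of $c_2\mathrm{Pr}(i_k\neq i^*)$ permitted by the definition of $k_0$, so that a genuine contraction factor $1-\rho<1$ survives and the fixed point of the resulting affine recursion lands exactly at $c_4$. This requires tracking the smoothness and lower-bound constants carefully—in particular obtaining the quadratic upper bound on $V$ with the sharp constant $\bar{L}_\mathrm{V}/2$, which is where normalizing $V$ to vanish at its minimizer and folding the remaining offsets into $c_\mathrm{o}$ comes in. A secondary point, already present in the proof of Thm.~\ref{thm:small1} but essential here, is the correct handling of the dependence between the event $\{i_k\neq i^*\}$ and the magnitude of $V(x_k)$ in the drift, together with the monotonicity of $k\mapsto\mathrm{Pr}(i_k\neq i^*)$, which is precisely what lets us dispense with persistence of excitation after step $k_0$.
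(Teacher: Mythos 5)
Your proposal is correct and follows essentially the same route as the paper: a geometric-growth bound via Lemma~\ref{Lemma:intermediate1} over the first $k_0$ steps (yielding $c_3$), then the observation that the definition of $k_0$ together with the monotonicity of $k\mapsto\mathrm{Pr}(i_k\neq i^*)$ forces $c_2\,\mathrm{Pr}(i_k\neq i^*)\leq \underline{L}_\mathrm{l}/(2\bar{L}_\mathrm{V})$, after which the coercivity $l(x,u)\geq (\underline{L}_\mathrm{l}/\bar{L}_\mathrm{V})V(x)$ turns the drift into a contraction with fixed point $c_4$. Your explicit induction on $\max\{c_3,c_4\}$ is a slightly cleaner way of phrasing the paper's final step, but the argument is the same.
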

\begin{proof}
 We conclude from Prop.~\ref{Prop:prconv1} that $\mathrm{Pr}(i_k\neq i^*)$ is bounded by
    \begin{equation}
        \mathrm{Pr}(i_k\neq i^*) \leq \frac{M^2}{(k-M)^2} \leq \frac{\underline{L}_\mathrm{l}}{2 \bar{L}_\mathrm{V} c_2}, \label{eq:boundp1}
    \end{equation}
    for all $k\geq k_0$. It is important to note that persistence of excitation is only required to hold for $k_0$ steps, as, by our choice of $\eta$, $\mathrm{Pr}(i_k\neq i^*)$ is monotonically decreasing (see proof of Prop.~\ref{Prop:prconv1}).
    By Lemma~\ref{Lemma:intermediate1} we conclude that over the first $k_0$ steps the following holds
    \begin{align*}
\mathrm{E}[V(x_{k+1})]\leq c_2 \mathrm{E}[V(x_k)] &+ \bar{L}_\mathrm{V} d_\mathrm{x} \sigma^2+c_\mathrm{o}
+(\bar{L}_\mathrm{V} L^2+\bar{L}_\mathrm{l}) d_\mathrm{u} \sigma_{\mathrm{u}k}^2,
    \end{align*}
    which implies that
    \begin{equation*}
        \mathrm{E}[V(x_k)] \leq c_2^{k_0} k_0 (\bar{L}_\mathrm{V}  d_\mathrm{x} \sigma^2+c_\mathrm{o}
+(\bar{L}_\mathrm{V} L^2+\bar{L}_\mathrm{l}) d_\mathrm{u} \sigma_{\mathrm{u}1}^2),
    \end{equation*}
    for all $k\leq k_0+1$, where we have exploited that $\sigma_{\mathrm{u}k}$ is monotonically decreasing.

    By following the same reasoning (case distinction between $i_k=i^*$ and $i_k\neq i^*$) as in the proof of Thm.~\ref{thm:small1} we arrive at
    \begin{multline*}
\mathrm{E}[V(x_{k+1})] \leq \mathrm{E}[V(x_k)] (\underline{L}_\mathrm{l}/(2\bar{L}_\mathrm{V})+1)
+ \gamma 
-\mathrm{E}[l(x_k,u_k)] + \bar{L}_\mathrm{u} d_\mathrm{u} \sigma_{\mathrm{u}1}^2 + \bar{L}_\mathrm{V} d_\mathrm{x} \sigma^2+c_\mathrm{o},
\end{multline*}
for all $k\geq k_0$, where we have used inequality \eqref{eq:boundp1} to bound $\mathrm{Pr}(i_k\neq i^*)$, $\gamma_k\leq \gamma$, and the fact that $\sigma_{\mathrm{u}k}$ is decreasing. The constant $\bar{L}_\mathrm{u}$ is given by $\bar{L}_\mathrm{u}=\bar{L}_\mathrm{V} L^2 + \bar{L}_\mathrm{l}+L_\mathrm{u}$. Due to the fact that $l$ is bounded below by a quadratic we conclude that $l(x,u)\geq V(x) \underline{L}_\mathrm{l}/\bar{L}_\mathrm{V}$ for all $x\in \mathbb{R}^{d_\mathrm{x}}$, which can be used to simplify the above inequality:
\begin{equation*}
\mathrm{E}[V(x_{k+1})] \leq \mathrm{E}[V(x_k)] (1-\underline{L}_\mathrm{l}/(2\bar{L}_\mathrm{V}))
+ \gamma 
+ \bar{L}_\mathrm{u} d_\mathrm{u} \sigma_{\mathrm{u}1}^2 + \bar{L}_\mathrm{V} d_\mathrm{x} \sigma^2+c_\mathrm{o}.
\end{equation*}
This readily implies
\begin{align*}
\mathrm{E}[V(x_k)] \leq 2 \frac{\bar{L}_\mathrm{V}}{\underline{L}_\mathrm{l}} (\gamma+\bar{L}_\mathrm{u} d_\mathrm{u} \sigma_{\mathrm{u}1}^2 + \bar{L}_\mathrm{V}  d_\mathrm{x} \sigma^2+c_\mathrm{o}),
\end{align*}
for all $k\geq k_0$, which yields the desired result.
\qed\end{proof}

\subsection{Convergence in finite time}\label{App:FiniteTime}

\begin{corollary}\label{Cor:FiniteTime}
    (Finite time convergence) Let the assumptions of Prop.~\ref{Prop:prconv1} be satisfied. Then, almost surely, $\{i_k\}_{k=1}^\infty$ converges to $i^*$ in finite time, that is,
    \begin{equation*}
        \mathrm{Pr}( \sup_{i_k\neq i^*} k <\infty ) =1.
    \end{equation*}
\end{corollary}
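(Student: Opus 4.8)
The plan is to deduce almost-sure finite-time convergence directly from the quantitative bound $\mathrm{Pr}(i_k \neq i^*) \leq M^2/(k-M)^2$ established in Prop.~\ref{Prop:prconv1}, using the (first) Borel--Cantelli lemma. First I would consider the sequence of events $E_k := \{i_k \neq i^*\}$ for $k \geq M+1$. By Prop.~\ref{Prop:prconv1} we have $\mathrm{Pr}(E_k) \leq M^2/(k-M)^2$, and since $\sum_{k=M+1}^\infty M^2/(k-M)^2 = M^2 \sum_{j=1}^\infty 1/j^2 = M^2 \pi^2/6 < \infty$, the series $\sum_k \mathrm{Pr}(E_k)$ converges. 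The Borel--Cantelli lemma then gives $\mathrm{Pr}(\limsup_k E_k) = 0$, i.e., almost surely only finitely many of the events $E_k$ occur. Equivalently, there almost surely exists a (random) index $K$ such that $i_k = i^*$ for all $k \geq K$, which is exactly the statement $\mathrm{Pr}(\sup_{i_k \neq i^*} k < \infty) = 1$.

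The only subtlety worth spelling out is that the bound in Prop.~\ref{Prop:prconv1} holds for each fixed $k$, so Borel--Cantelli applies without needing any independence among the $E_k$ (the lemma's convergent-series direction requires none); the correlations in the trajectory are irrelevant here. One should also note that $\sup_{i_k \neq i^*} k$ is interpreted as $\sup\{k : i_k \neq i^*\}$, with the convention that this supremum is $0$ (or $-\infty$) if the set is empty; in either reading the event $\{\text{finitely many } E_k \text{ occur}\}$ coincides with $\{\sup_{i_k \neq i^*} k < \infty\}$ up to a null set. I do not anticipate a genuine obstacle: the heavy lifting—the $1/k^2$ decay rate—was already done in Prop.~\ref{Prop:prconv1}, and the remaining argument is a one-line application of Borel--Cantelli. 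If anything, the mild care needed is purely in matching the informal notation of the corollary statement to the clean event-language of the Borel--Cantelli conclusion.
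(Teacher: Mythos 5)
Your proposal is correct and is essentially the paper's own argument: the paper bounds $\mathrm{Pr}(\sup_{i_k\neq i^*} k > j)$ by the tail sum $\sum_{k=j}^\infty \mathrm{Pr}(i_k\neq i^*)$ via a union bound and shows this tends to zero, which is precisely the convergent-series direction of Borel--Cantelli that you invoke by name. The summability of $M^2/(k-M)^2$ is the key fact in both write-ups, and your remark that no independence is needed is accurate.
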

\begin{proof}
We conclude from Prop.~\ref{Prop:prconv1} that $\mathrm{Pr}(i_k\neq i^*)\leq M^2/(k-M)^2$ for all $k\geq M+1$. This implies for any $j\geq M+1$
    \begin{align*}
        \mathrm{Pr}(\sup_{i_k\neq i^*} k > j)&\leq \sum_{k=j}^\infty \mathrm{Pr}(i_k\neq i^*),
    \end{align*}
    where the right-hand side is bounded above by
    \begin{align*}
        \sum_{k=j}^\infty \frac{M^2}{(k-M)^2} &\leq \frac{M^2}{(j-M)^2} + \int_{j}^\infty  \frac{M^2}{(k-M)^2} \mathrm{d}k \leq \frac{M^2}{j-M} \left(1+ \frac{1}{j-M}\right).
    \end{align*}
    Hence, the right-hand side converges to zero for large $j$, which yields the desired result.
\qed\end{proof}

\section{Details of Sec.~\ref{Sec:infinite}}\label{app:infinite}

\revision{The decision-making strategy for \texttt{S2} is listed in Alg.~\ref{Alg:S2} and proceeds as follows. 
Alg.~\ref{Alg:S2} first computes a minimizer $\argmin_{\bar{f}\in F} s_k(\bar{f})$, denoted by $f^*$, where $s_k(\bar{f})$ denotes the prediction error as before,
\begin{equation*}
s_k(\bar{f})=\sum_{j=1}^{k-1} \frac{|x_{j+1}-\bar{f}(x_j,u_j)|^2}{1+|(x_j,u_j)|^2/b^2}, \quad \bar{f}\in F.
\end{equation*}
In order to apply the same proof arguments as in \texttt{S1}, we will construct an $\epsilon$-cover of $F$ that also includes $f^*$, by running Alg.~\ref{Alg:gC}. Alg.~\ref{Alg:gC} greedily adding functions $f^i\in F$ as long as $\|f^i-\bar{f}\|>\epsilon$ for all $\bar{f}\in F_k^\epsilon$, which results in the desired cover. 
The algorithm then randomly samples $i_k$ as before and applies the feedback policy $\mu^{i_k}$ that corresponds to model $f^{i_k}\in F_k^\epsilon$. Clearly, these steps (minimization over $\bar{f}\in F$, constructing the packing, and solving a dynamic programming problem at every iteration) are computationally intractable in general and one would have to resort to approximations in practice. 
}
\begin{figure}
\begin{minipage}{.48\columnwidth}
\setcounter{algorithm}{1}
\begin{algorithm}[H]
\footnotesize
\caption{Reinforcement learning (\texttt{S2})}
\label{Alg:S2}
\begin{algorithmic}
    \REQUIRE $F$, $\eta$, $M$, $\{\sigma_{\mathrm{u}k}^2\}_{k=1}^{\infty}$, $\epsilon$
    \FOR{$k=1,\dots$}
    \STATE {\small{// every $M$th step}}
    \IF{$\mathrm{mod}(k-1,M)=0$}
    \STATE $f^* \gets \argmin_{\bar{f}\in F} s_k(\bar{f})$
    \STATE $F^\epsilon \gets \mathrm{greedyCover}(F,f^*,\epsilon)$
    \STATE $s_k(f^i) \gets \sum_{j=1}^{k-1} \! \frac{|x_{j+1}-f^i(x_j,u_j)|^2}{1+|(x_j,u_j)|^2/b^2}, \hzyrev{f^i \!\in\! F^{\epsilon}}$ 
    \STATE $i_k\sim \exp(-\eta s_k(f^i))/Z, ~f^i\in F^\epsilon$
    \STATE compute $\mu^{i_k}$ corr. to $f^{i_k}\in F^\epsilon$ \hfill {\small{// e.g. by d.p.}}
    \ELSE
    \STATE $i_k=i_{k-1}$ \hfill{\small{//stay with $i_{k-1}$}}
    \ENDIF
    \STATE {\small{//follow policy $i_k$ and add excitation}}
    \STATE $u_k=\mu^{i_k}(x_k)+n_{\mathrm{u}k},\quad n_{\mathrm{u}k}\diid \mathcal{N}(0,\sigma_{\mathrm{u}k}^2 I)$
    \ENDFOR
\end{algorithmic}
\end{algorithm}
\end{minipage}
\hfill
\begin{minipage}{.48\columnwidth}
\setcounter{algorithm}{3}
\begin{algorithm}[H]
\caption{greedyCover($F,f^*,\epsilon$)}
\label{Alg:gC}
\begin{algorithmic}
    \STATE $F^\epsilon \gets \{f^*\}$
    \STATE $S\gets\{\bar{f}\in F~|~\|\bar{f}-f^i\|> \epsilon, \forall f^i\in F^\epsilon\}$\\[4pt]
    \WHILE{$S\neq \{\}$}
        \STATE {\small{// pick an element from $S$}}\\
        \STATE $F^\epsilon \gets F^\epsilon \cup \{\bar{f}\},~\bar{f}\in S$\\[4pt]
        \STATE $S\gets \{\bar{f}\in F~|~\|\bar{f}-f^i\|> \epsilon, \forall f^i\in F^\epsilon\}$
    \ENDWHILE\vspace{4pt}
    \STATE\textbf{return} $F^\epsilon$
\end{algorithmic}
\end{algorithm}
\end{minipage}
\end{figure}

We provide regret guarantees next, and therefore slightly modify Ass.~\ref{ass:persistence1} from setting \texttt{S1} as follows.
\begin{assumption}\label{ass:persistence3}
There exists an integer $M>0$ and a constant $c_\mathrm{e}>0$ such that for all $x_1\in \mathbb{R}^{d_\mathrm{x}}$, $\sigma_\mathrm{u}>0$, and $f^1,f^2\in F$,
\begin{equation*}
\frac{1}{M} \!\sum_{k=1}^M \!\mathrm{E}\Big[ \frac{|f^1(x_k,u_k)\!-\!f^2(x_k,u_k)|^2}{1+|(x_k,u_k)|^2/b^2}\Big] \geq c_\mathrm{e} \sigma_\mathrm{u}^2 \|f^1-f^2\|^2,
\end{equation*}
holds, where $x_{k+1}=f(x_k,u_k)+n_k$, $u_k=\hat{\mu}(x_k)+n_{\mathrm{u}k}$ with $n_k\diid \mathcal{N}(0,\sigma^2 I)$ and $u_k\diid \mathcal{N}(0,\sigma_\mathrm{u}^2 I)$, and where $\hat{\mu}$ is any policy corresponding to a model $f\in F$.
\end{assumption}

\zhrevision{The above assumption differs from Ass.~\ref{ass:persistence1} in that the left-hand side involves the discrepancy between a pair of candidate models $f^1$ and $f^2$ instead of that between $f^i$ and the true model $f$. Further, the constant coefficients on the right-hand side also differ.}
The assumption is stated for a finite $b>0$ even though it can be relaxed to $b\rightarrow \infty$, and the same policy-regret guarantees apply although with more elaborate constants, see App.~\ref{App:relax} for further discussion. For $b\rightarrow\infty$ the assumption describes persistence of excitation as used in system identification and statistics \citep[see, e.g.,][Ch.~8.2]{LjungIdentification}. From a maximum-likelihood point of view, Ass.~\ref{ass:persistencemod} ensures that the dynamics $f$ correspond to a unique non-degenerate minimum of the one-step prediction error, accumulated over $M$ steps. The assumption is generically satisfied if the models $f\in F$ are linear and $\|\cdot \|$ denotes the Lipschitz-norm \citep[see, e.g.,][]{beerHoffman}, as highlighted in Sec.~\ref{Sec:Analysis}. A similar reasoning applies to nonlinear systems, see Sec.~\ref{App:relax}.

We will further strengthen the Bellman-inequality from Ass.~\ref{ass:bellman1} to ensure that the steady-state performance $\gamma$ of the policy $\mu$ is stable under small policy changes that arise from models $f^i\in F$ close to $f$. This notion of stability requires $\mu$ to optimize the corresponding Q-function. This is made precise as follows.
\begin{assumption}\label{ass:cont2} (Bellman-type inequality) For all small enough $\xi>0$ there exists a cost-to-go function $V$ (corresponding to $f$ and $\mu$) satisfying the following inequality:
\begin{equation*}
    V(x)\geq \mathrm{E}[l(x,\mu^i(x)+n_\mathrm{u})+V(f(x,\mu^i(x)+n_\mathrm{u})+n)]-\gamma - L_\mathrm{u} d_\mathrm{u} \sigma_\mathrm{u}^2 
    - L_\mu \xi^2,
\end{equation*}
for all policies $\mu^i$ corresponding to $\|f^i-f\|<\xi$, for all $x\in \mathbb{R}^{d_\mathrm{x}}$, where $L_\mathrm{u}$, $L_\mu>0$ are constant, $n\sim \mathcal{N}(0,\sigma^2 I)$, and $n_\mathrm{u}\sim \mathcal{N}(0,\sigma_\mathrm{u}^2 I)$.
\end{assumption}
The following proposition provides a sufficient condition for Ass.~\ref{ass:cont2} to hold. In particular, the proposition applies to the class of linear dynamical systems with a quadratic, positive definite stage cost, where all assumptions are satisfied \citep[see also Prop.~6 in][]{simchowitz2020naive}.
\begin{proposition}\label{prop:suff}
Let Ass.~\ref{ass:bellman1} and Ass.~\ref{ass:smoothness1} be satisfied and fix $x\in \mathbb{R}^{d_\mathrm{x}}$. If, in addition, 
\begin{equation*}
l(x,u)\geq \underline{L}_\text{l} |x|^2/2,\quad \mu(x)\in \argmin_{u\in \mathbb{R}^{d_\mathrm{u}}} \mathrm{E}[l(x,u)+V(f(x,u)+n)], \quad \text{and}\quad \|\mu^i-\mu\|_\mathrm{op} \leq L_\mu' \xi, 
\end{equation*}
holds for all policies $\mu^i$ corresponding to $\|f^i-f\|_\mathrm{op}<\xi$ and all $\xi>0$ small enough, then Ass.~\ref{ass:cont2} is satisfied for $x$ and all $\sigma_\mathrm{u}$ small enough, where $\underline{L}_\mathrm{l},L_\mu'>0$ are constant and $n\sim \mathcal{N}(0,\sigma^2 I)$. The Lipschitz-norm $\|\cdot\|_\mathrm{op}$ is defined for any Lipschitz-continuous function $q:\mathbb{R}^{d_\mathrm{x}} \rightarrow \mathbb{R}^{d}$ as
\begin{equation*}
    \|q\|_\mathrm{op}:=\max\Big\{|q(0)|,\sup_{x_1,x_2\in \mathbb{R}^{d_\mathrm{x}}} \frac{|q(x_1)-q(x_2)|}{|x_1-x_2|}\Big\}.
\end{equation*}
\end{proposition}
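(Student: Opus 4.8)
The plan is to reduce the inequality claimed in Ass.~\ref{ass:cont2} to the Bellman‑type inequality of Ass.~\ref{ass:bellman1} by exploiting that $\mu(x)$ minimizes the associated state--action value function. Fix $x$ and set
\[
Q(u):=\mathrm{E}_n\bigl[l(x,u)+V(f(x,u)+n)\bigr],\qquad n\sim\mathcal{N}(0,\sigma^2 I),
\]
so that the hypothesis $\mu(x)\in\argmin_u\mathrm{E}[l(x,u)+V(f(x,u)+n)]$ reads $\mu(x)\in\argmin_u Q(u)$, and Ass.~\ref{ass:bellman1} rewrites as $\mathrm{E}_{n_\mathrm{u}}[Q(\mu(x)+n_\mathrm{u})]\le V(x)+\gamma+d_\mathrm{u}L_\mathrm{u}\sigma_\mathrm{u}^2$. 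The quantity on the right‑hand side of Ass.~\ref{ass:cont2} is precisely $\mathrm{E}_{n_\mathrm{u}}[Q(\mu^i(x)+n_\mathrm{u})]$, so it suffices to show that this changes by at most $\mathcal{O}(\xi^2+d_\mathrm{u}\sigma_\mathrm{u}^2)$ when $\mu$ is replaced by a nearby $\mu^i$.

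First I would record the local regularity of $Q$. Under Ass.~\ref{ass:smoothness1} ($l$ is $\bar{L}_\mathrm{l}$ smooth, $V$ is $\bar{L}_\mathrm{V}$ smooth), together with the mild regularity of $f$ already used implicitly elsewhere in the paper (local boundedness of the derivatives of $f$; automatic in the affine/linear case to which the statement applies), $Q$ has a Lipschitz gradient on a fixed neighborhood of $\mu(x)$ with a constant $\bar{L}_Q$ built from $\bar{L}_\mathrm{l}$, $\bar{L}_\mathrm{V}$, $L$, and the local Jacobian/Hessian bounds of $f$ at $(x,\mu(x))$; for affine $f$ this even holds globally. Since $u$ ranges over all of $\mathbb{R}^{d_\mathrm{u}}$, the minimizer $\mu(x)$ is interior, hence $\nabla Q(\mu(x))=0$, the first‑order term in a Taylor expansion around $\mu(x)$ vanishes, and
\[
Q(v)\le Q(\mu(x))+\tfrac{\bar{L}_Q}{2}\,|v-\mu(x)|^2
\]
for all $v$ in that neighborhood.

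Next I would substitute $v=\mu^i(x)+n_\mathrm{u}$ and take the expectation over $n_\mathrm{u}$, using $\mathrm{E}[n_\mathrm{u}]=0$ and $\mathrm{E}[|n_\mathrm{u}|^2]=d_\mathrm{u}\sigma_\mathrm{u}^2$, and absorbing the contribution of the small‑probability event that $\mu^i(x)+n_\mathrm{u}$ escapes the neighborhood into an $o(\sigma_\mathrm{u}^2)$ term via a coarse quadratic growth bound on $Q$ and Gaussian tail estimates --- this is exactly where the ``$\sigma_\mathrm{u}$ small enough'' hypothesis enters. This gives $\mathrm{E}_{n_\mathrm{u}}[Q(\mu^i(x)+n_\mathrm{u})]\le Q(\mu(x))+\tfrac{\bar{L}_Q}{2}\bigl(|\mu^i(x)-\mu(x)|^2+d_\mathrm{u}\sigma_\mathrm{u}^2\bigr)+o(\sigma_\mathrm{u}^2)$. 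I would then bound $|\mu^i(x)-\mu(x)|\le\|\mu^i-\mu\|_\mathrm{op}(1+|x|)\le L_\mu'\xi(1+|x|)$ from the definition of $\|\cdot\|_\mathrm{op}$ and the closeness hypothesis, and bound $Q(\mu(x))\le\mathrm{E}_{n_\mathrm{u}}[Q(\mu(x)+n_\mathrm{u})]\le V(x)+\gamma+d_\mathrm{u}L_\mathrm{u}\sigma_\mathrm{u}^2$, the first inequality because $\mu(x)$ minimizes $Q$ and the second by the restated Ass.~\ref{ass:bellman1}. Collecting terms yields
\[
V(x)\ge\mathrm{E}\bigl[l(x,\mu^i(x)+n_\mathrm{u})+V(f(x,\mu^i(x)+n_\mathrm{u})+n)\bigr]-\gamma-\bigl(L_\mathrm{u}+\tfrac{\bar{L}_Q}{2}\bigr)d_\mathrm{u}\sigma_\mathrm{u}^2-\tfrac{\bar{L}_Q}{2}(L_\mu')^2(1+|x|)^2\xi^2,
\]
which is Ass.~\ref{ass:cont2} at $x$ after relabeling $L_\mathrm{u}\leftarrow L_\mathrm{u}+\bar{L}_Q/2$ and setting $L_\mu:=\tfrac{\bar{L}_Q}{2}(L_\mu')^2(1+|x|)^2$.

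The hard part is the regularity step: one genuinely needs $Q$ to admit a \emph{second}‑order expansion at its minimizer, so that the first‑order term disappears and the perturbation is $\mathcal{O}(\xi^2)$ rather than merely $\mathcal{O}(\xi)$; since, for nonlinear $f$, this smoothness is only local, one has to pay a small price for the Gaussian tails of $n_\mathrm{u}$, which is precisely why the statement asks for $\sigma_\mathrm{u}$ small. A secondary, cosmetic point is that $L_\mu$ picks up a factor $(1+|x|)^2$, consistent with the pointwise ``for $x$'' phrasing of the claim; the extra hypothesis $l(x,u)\ge\underline{L}_\mathrm{l}|x|^2/2$ serves only to guarantee (via Ass.~\ref{ass:smoothness1}) the quadratic lower bound on $V$ and the coercivity making the minimum of $Q$ attained. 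Everything else is routine bookkeeping.
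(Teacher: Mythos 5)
Your reduction to the state--action value function $Q$, the use of interiority of the minimizer so that $\nabla Q(\mu(x))=0$, the second-order expansion giving an $\mathcal{O}(\xi^2)$ rather than $\mathcal{O}(\xi)$ perturbation, the bound $|\mu^i(x)-\mu(x)|\le L_\mu'\xi(1+|x|)$ from the operator norm, and the transfer of the baseline via Ass.~\ref{ass:bellman1} all match the paper's first step (your treatment of the $n_\mathrm{u}$-expectation is, if anything, more explicit than the paper's, which sets $\sigma_\mathrm{u}=0$ in the main chain and appeals to ``$\sigma_\mathrm{u}$ small enough'' only at the end). However, the point you dismiss as cosmetic is the crux, and there is a genuine gap there. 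Ass.~\ref{ass:cont2} requires $L_\mu$ to be a \emph{constant}, uniform in $x$, and it is used as such downstream (the regret in Thm.~\ref{thm:infinite} and Thm.~\ref{thm:large1} picks up the term $L_\mu N\epsilon^2$). Your argument terminates with $L_\mu=\tfrac{\bar{L}_Q}{2}(L_\mu')^2(1+|x|)^2$, which grows quadratically in $|x|$ and therefore does not establish the assumption; nor is the hypothesis $l(x,u)\ge\underline{L}_\mathrm{l}|x|^2/2$ there merely to guarantee coercivity of $Q$ --- it is precisely the tool the paper uses to remove this $|x|^2$-dependence.

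Concretely, after reaching $\mathrm{E}[l(x,\mu^i(x))+V(f(x,\mu^i(x))+n)]\le V(x)+\gamma+c_\mathrm{s}\xi^2(1+|x|^2)$, the paper multiplies the inequality by $1+4c_\mathrm{s}\xi^2/\underline{L}_\mathrm{l}$ and passes to the rescaled storage function $\tilde{V}:=(1+4c_\mathrm{s}\xi^2/\underline{L}_\mathrm{l})V$. The extra copy of $\mathrm{E}[l]\ge\underline{L}_\mathrm{l}|x|^2/2$ thereby produced on the left contributes a slack of $-2c_\mathrm{s}\xi^2|x|^2$, which absorbs the $+c_\mathrm{s}\xi^2|x|^2$ error whenever $\xi^2\le\underline{L}_\mathrm{l}/(4c_\mathrm{s})$, leaving the genuinely constant $L_\mu=2c_\mathrm{s}(2\gamma/\underline{L}_\mathrm{l}+1)$ and $L_\mathrm{u}=c_\mathrm{s}/(2L_\mu'^2)$. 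This is also why Ass.~\ref{ass:cont2} is phrased as ``there exists a cost-to-go function $V$'': the certificate is allowed to depend on $\xi$. Without this rescaling step your conclusion is strictly weaker than the claimed one, and the assumption as stated (and as used in the regret bounds) is not verified.
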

\begin{proof}
Let $c_\mathrm{s}/L_\mu'^2$ denote the smoothness constant of $\mathrm{E}[l(x,u)+V(f(x,u)+n)]$ in $u$ in a neighborhood of $u=\mu(x)$. From smoothness and the fact that $\mu(x)$ is a minimizer we conclude
\begin{align}
    \mathrm{E}[l(x,\mu^i(x))\!+\!V(f(x,\mu^i(x))\!+\!n)] &\leq\! \mathrm{E}[l(x,\mu(x))\!+\!V(f(x,\mu(x))\!+\!n)]\!+\!c_\mathrm{s} |\mu^i(x)\!-\!\mu(x)|^2/(2L_\mu')\nonumber\\
    &\leq V(x)+\gamma+c_\mathrm{s} |\mu^i(x)-\mu(x)|^2/(2L_\mu'^2),\label{eq:proofS1}
\end{align}
where Ass.~\ref{ass:bellman1} has been used for the second step \zhrev{(where we set $\sigma_\mathrm{u}=0$)}. We further note that
\begin{equation*}
    |\mu^i(x)-\mu(x)|=|\mu^i(0)-\mu(0)+\mu^i(x)-\mu(x) - (\mu^i(0)-\mu(0))| \leq L_\mu' (\xi +  \xi |x|),
\end{equation*}
due to the fact that $\|\mu^i -\mu\|_\mathrm{op}\leq L_\mu' \xi$. We multiply \eqref{eq:proofS1} by $1+4c_\mathrm{s}\xi^2/\underline{L}_\mathrm{l} $, define $\tilde{V}:=(1+4 c_\mathrm{s} \xi^2/\underline{L}_\mathrm{l}) ~V$, and arrive at
\begin{align*}
    \mathrm{E}[l(x,\mu^i(x))+\tilde{V}(f(x,\mu^i(x))+n)]\leq \tilde{V}(x)+(1+4 c_\mathrm{s} \xi^2/\underline{L}_\mathrm{l}) \gamma-2\xi^2 c_\mathrm{s} |x|^2\\
    +c_\mathrm{s}(1+4 c_\mathrm{s} \xi^2/\underline{L}_\mathrm{l}) \xi^2 + c_\mathrm{s} (1+4 c_\mathrm{s}\xi^2/\underline{L}_\mathrm{l})\xi^2 |x|^2,
\end{align*}
where we have used the fact that $l(x,u)\geq \underline{L}_\mathrm{l} |x|^2/2$. We choose $\xi^2\leq \underline{L}_\mathrm{l}/(4c_\mathrm{s})$ and rearrange terms. This results in
\begin{align*}
    \mathrm{E}[l(x,\mu^i(x))\!+\!\tilde{V}(f(x,\mu^i(x))\!+\!n)]\leq \tilde{V}(x)\!+\!\gamma \!+\!2 c_\mathrm{s} (2\gamma/\underline{L}_\mathrm{l}+1)\xi^2\!+\!c_\mathrm{s}(-1+4 c_\mathrm{s}\xi^2/\underline{L}_\mathrm{l})\xi^2 |x|^2,
\end{align*}
where the last term is non-positive. We therefore conclude that the inequality in Ass.~\ref{ass:cont2} holds for $\tilde{V}$ with $L_\mu=2 c_\mathrm{s} (2\gamma/\underline{L}_\mathrm{l} +1)$, $L_\mathrm{u}=c_\mathrm{s}/(2L_\mu'^2)$, and a small enough $\sigma_\mathrm{u}$.
\qed\end{proof}

We are now ready to prove the main result of this section:
\begin{theorem}\label{thm:infinite}
Let Ass.~ \ref{ass:smoothness1}, Ass.~\ref{ass:persistence3}, and Ass.~\ref{ass:cont2} be satisfied and choose $\eta$ and $\sigma_{\mathrm{u}k}$ as 
\begin{equation*}
\eta=\min\Big\{\frac{1}{4M\sigma^2},\frac{1}{2ML^2b^2}\Big\}, \quad \sigma_{\mathrm{u}k}^2=\frac{4}{\eta c_\mathrm{e} M \epsilon^2} \left( \frac{2}{\lceil k/M \rceil} + \frac{ \mathrm{ln}( m(\epsilon))}{(\lceil k/M \rceil)^2}\right).
\end{equation*}
Then, the policy regret of Alg.~\ref{Alg:S2} is bounded by
\begin{equation*}
\sum_{k=1}^N \mathrm{E}[l(x_k,u_k)] - N\gamma \leq  c_\mathrm{r1} d_\mathrm{u} \frac{3 \mathrm{ln}(N)+M\mathrm{ln}(m(\epsilon))}{\epsilon^2} 
+ L_\mu N \epsilon^2 + c_\mathrm{r2}  
\end{equation*}
for all $N\geq 2M$, where $m(\epsilon)$ denotes the packing number of $F$ for a packing of size $\epsilon$ and where the constants $c_\mathrm{r1}$ and $c_\mathrm{r2}$ are given by
\begin{equation*}
c_\mathrm{r1}=\frac{8 c_\alpha (\bar{L}_\mathrm{V} L^2+ \bar{L}_\mathrm{l}+L_\mathrm{u})}{\eta c_\mathrm{e}},\quad c_\alpha=e^{3 c_2 M}, \quad
c_\mathrm{r2}=3 M c_\alpha (\bar{L}_\mathrm{V}d_\mathrm{x} \sigma^2/2+c_\mathrm{o}).
\end{equation*}
\end{theorem}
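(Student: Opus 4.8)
The plan is to follow the proof of Thm.~\ref{thm:small1} almost verbatim, substituting two ingredients: a packing-adapted analogue of Prop.~\ref{Prop:prconv1} that controls the probability of selecting a model more than $\epsilon$-far from $f$, and the strengthened Bellman-type inequality Ass.~\ref{ass:cont2} in place of Ass.~\ref{ass:bellman1}, which contributes the additional per-step penalty $L_\mu\epsilon^2$. Throughout, write $I_k^*\subseteq F^\epsilon$ for the set of models $f^i$ in the greedy packing with $\|f^i-f\|\le\epsilon$; since $F^\epsilon$ covers $F$ and $f\in F$, this set is nonempty, and by construction $f^*=\argmin_{\bar f\in F}s_k(\bar f)\in F^\epsilon$.

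\textbf{Step 1 (fast convergence to a good model).} Lemmas~\ref{lem:intermediate22} and~\ref{lemma:meanVar1} concern only the softmax weights and the Gaussian moment generating function, so they carry over unchanged, except that Lemma~\ref{lem:intermediate22} is now applied with the comparison point $s_k^j=s_k(f)$: this is legitimate because the minimal weight in the softmax denominator satisfies $\bar s_k\le s_k(f^*)\le s_k(f)$ even though $f\notin F^\epsilon$. Chaining Lemma~\ref{lemma:meanVar1} over $M$ steps exactly as in App.~\ref{App:prconv1}, but invoking Ass.~\ref{ass:persistence3} (which produces the extra factor $\|f^i-f\|^2$) in place of Ass.~\ref{ass:persistence1}, yields $\mathrm{Pr}(i_k=i)\le\exp\big(-\tfrac{\eta M c_\mathrm{e} d_\mathrm{u}}{4}\|f^i-f\|^2\sum_j\sigma_{\mathrm{u}(Mj)}^2\big)$. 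For any $f^i\notin I_k^*$ one has $\|f^i-f\|^2>\epsilon^2$, and the chosen $\sigma_{\mathrm{u}k}^2$ carries precisely the compensating factor $1/\epsilon^2$, so the exponent telescopes to $2\ln\lceil k/M\rceil+\ln m(\epsilon)$ and $\mathrm{Pr}(i_k=i)\le 1/(m(\epsilon)\lceil k/M\rceil^2)$. A union bound over the at most $m(\epsilon)$ models of $F^\epsilon$, together with the fact that $i_k$ only updates every $M$-th step, gives $\mathrm{Pr}(i_k\notin I_k^*)\le M^2/(k-M)^2$ for $k\ge M+1$, the exact counterpart of Prop.~\ref{Prop:prconv1}.

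\textbf{Step 2 (Lyapunov/telescoping argument).} Fix the cost-to-go function $V$ supplied by Ass.~\ref{ass:cont2} for $\xi=\epsilon$ and decompose $\mathrm{E}[V(x_{k+1})]$ on the events $\{i_k\notin I_k^*\}$ and $\{i_k\in I_k^*\}$ as in \eqref{eq:decomp}. On the first event, Lemma~\ref{Lemma:intermediate1} (which is stated for an arbitrary policy $\mu^i$) gives the at-most-exponential growth bound $\mathrm{E}[V(x_{k+1})\mid x_k,i_k\notin I_k^*]\le c_2 V(x_k)+\bar L_\mathrm{V}d_\mathrm{x}\sigma^2/2+c_\mathrm{o}-\mathrm{E}[l(x,u_k)\mid x_k,i_k\notin I_k^*]+(\bar L_\mathrm{V}L^2+\bar L_\mathrm{l})d_\mathrm{u}\sigma_{\mathrm{u}k}^2$. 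On the second event, Ass.~\ref{ass:cont2} applies because $\|f^{i_k}-f\|\le\epsilon$; rewriting it with a short-horizon correction $q^i(x)\le\gamma$ as in the proof of Thm.~\ref{thm:small1}, it gives $\mathrm{E}[V(x_{k+1})\mid x_k,i_k\in I_k^*]\le V(x_k)+\gamma_k-\mathrm{E}[l(x_k,u_k)\mid x_k,i_k\in I_k^*]+L_\mathrm{u}d_\mathrm{u}\sigma_{\mathrm{u}k}^2+L_\mu\epsilon^2$ with $\gamma_k\le\gamma$. Weighting by the probabilities and using Step 1 yields the linear recursion $\mathrm{E}[V(x_{k+1})]\le(1+c_2\mathrm{Pr}(i_k\notin I_k^*))\mathrm{E}[V(x_k)]+\gamma_k-\mathrm{E}[l(x_k,u_k)]+\bar L_\mathrm{u}d_\mathrm{u}\sigma_{\mathrm{u}k}^2+L_\mu\epsilon^2+(\bar L_\mathrm{V}d_\mathrm{x}\sigma^2/2+c_\mathrm{o})\mathrm{Pr}(i_k\notin I_k^*)$ with $\bar L_\mathrm{u}=\bar L_\mathrm{V}L^2+\bar L_\mathrm{l}+L_\mathrm{u}$. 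Applying Lemma~\ref{lemma:seq} with $V_1=0$ and $\prod_k(1+c_2\mathrm{Pr}(i_k\notin I_k^*))\le e^{c_2\sum_k\mathrm{Pr}(i_k\notin I_k^*)}\le e^{3c_2M}=c_\alpha$, then bounding $\sum_{k=1}^N\sigma_{\mathrm{u}k}^2\le\frac{8}{\eta c_\mathrm{e}d_\mathrm{u}\epsilon^2}(1+\ln(N-1)+M\ln m(\epsilon))$, $\sum_{k=1}^N\mathrm{Pr}(i_k\notin I_k^*)\le 3M$, and $\sum_{k=1}^N L_\mu\epsilon^2=L_\mu N\epsilon^2$, and collecting constants into $c_\mathrm{r1}$ and $c_\mathrm{r2}$ exactly as in App.~\ref{App:ThmSmall1}, produces the stated regret bound.

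\textbf{Main obstacle.} The genuinely new work is Step 1: one must verify that the packing construction interacts correctly with the concentration argument, i.e., that the factor $\|f^i-f\|^2\ge\epsilon^2$ produced by Ass.~\ref{ass:persistence3} is exactly what the $1/\epsilon^2$-scaled excitation variance needs to restore the $1/(k-M)^2$ decay after a union bound over the (per-$M$-step, data-dependent) set $F^\epsilon$, whose cardinality is nonetheless uniformly controlled by the packing number $m(\epsilon)$, and that comparing the selected model against the true $f\notin F^\epsilon$ remains valid through the anchor inequality $s_k(f^*)\le s_k(f)$. Everything else is a routine transcription of the finite-model proof, the only other new contribution being the additive $L_\mu N\epsilon^2$ term, which arises from summing the $\epsilon^2$-slack of Ass.~\ref{ass:cont2} over the horizon.
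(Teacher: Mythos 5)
Your proposal is correct and follows essentially the same route as the paper's proof: the paper likewise defines $I_k^*$ as the set of packing elements within $\epsilon$ of $f$, reuses the reasoning of Prop.~\ref{Prop:prconv1} (via Ass.~\ref{ass:persistence3} and the $1/\epsilon^2$-scaled excitation) to get $\mathrm{Pr}(i_k\notin I_k^*)\leq M^2/(k-M)^2$, and repeats the Lyapunov decomposition of Thm.~\ref{thm:small1} with Ass.~\ref{ass:cont2} supplying the extra $L_\mu N\epsilon^2$ term and the conservative bound $\sum_k\sigma_{\mathrm{u}k}^2\leq \tfrac{8}{d_\mathrm{u}\eta c_\mathrm{e}\epsilon^2}(3\,\mathrm{ln}(N)+M\,\mathrm{ln}(m(\epsilon)))$. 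Your Step 1 (the anchor inequality $\bar s_k\le s_k(f^*)\le s_k(f)$ and the union bound over the data-dependent but cardinality-controlled packing) in fact spells out details the paper only gestures at.
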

\begin{proof}
The proof follows Thm.~\ref{thm:small1}. At every iteration $k$ we denote by $I_k^*$ the set of models $f^{i^*}\in F_k^\epsilon$ that satisfy $\|f^{i^*}-f\|\leq \epsilon$. We then conclude from the same reasoning as in Prop.~\ref{Prop:prconv1} that
\begin{equation*}
\mathrm{Pr}(i_k \not\in I_k^*)\leq \frac{M^2}{(k-M)^2}, 
\end{equation*}
for all $k\geq M+1$. We make therefore the case distinction $i_k\in I_k^*$ and $i_k\not\in I_k^*$, which then yields by the same arguments (see \eqref{eq:decomp})
\begin{equation*}
\sum_{k=1}^N (\mathrm{E}[l(x_k,u_k)] - \gamma_k) \leq N L_\mu \epsilon^2 + c_\alpha \bar{L}_\mathrm{u} d_\mathrm{u}\sum_{k=1}^N \sigma_{\mathrm{u}k}^2+c_\alpha (\bar{L}_\mathrm{V} d_\mathrm{x} \sigma^2/2+c_\mathrm{o}) \sum_{k=1}^N \mathrm{Pr}(i_k\not\in I_k^*),
\end{equation*}
where there is an additional error term, due to the fact that $f^{i^*}$ and $f$ could be different (although $\|f^{i^*}-f\|\leq \epsilon$ for any $i^{*}\in I_k^*$, by construction of $I_k^*$). The desired result follows from the previous inequality. However, compared to Thm.~\ref{thm:small1} we used the slightly more conservative bound
\begin{equation*}
    \sum_{k=1}^N \sigma_{\mathrm{u}k}^2 \leq \frac{8}{\eta c_\mathrm{e} \epsilon^2} (3\mathrm{ln}(N) + M\mathrm{ln}(m(\epsilon))),
\end{equation*}
which applies as long as $N\geq 2$, and simplifies the resulting constants.
\qed\end{proof}

\section{Details of Sec.~\ref{Sec:ParametricModels}}\label{app:parametric}
%
For deriving the regret bound we will slightly adapt the persistence of excitation condition Ass.~\ref{ass:persistence1} from Sec.~\ref{Sec:Analysis}. The motivation is analogous to Ass.~\ref{ass:persistence3} and we refer the reader to App.~\ref{app:infinite} and App.~\ref{App:relax} for further discussion.
\begin{assumption}\label{ass:persistence2}
There exists an integer $M>0$ and a constant $\bar{c}_\mathrm{e}>0$ such that 
\begin{equation*}
\frac{1}{M} \sum_{k=1}^M \mathrm{E}\Big[ \frac{|f_\theta(x_k,u_k)-f(x_k,u_k)|^2}{1+|(x_k,u_k)|^2/b^2}\Big] \geq \bar{c}_\mathrm{e} \sigma_\mathrm{u}^2 |\theta|^2,
\end{equation*}
for all $\theta \in \Omega$ and all $x_1\in \mathbb{R}^{d_\mathrm{x}}$, where $x_{k+1}=f(x_k,u_k)+n_k$, $u_k=\mu_\theta(x_k)+n_{\mathrm{u}k}$, and $n_k,n_{\mathrm{u}k}$ are independent random variables that satisfy $n_{\mathrm{u}k}\sim \mathcal{N}(0,\sigma_\mathrm{u}^2 I)$, $n_k\sim \mathcal{N}(0,\sigma^2 I)$.
\end{assumption}

\zhrevision{Intuitively, the above uniform lower-boundedness property holds if the difference between candidate parametric models is of the same order as the difference between their parameters (by following a similar derivation as Prop.~\ref{Prop:NonlinGeneralization}).}

The second assumption, which will be important, is a strengthened version of the Bellman-inequality from Ass.~\ref{ass:bellman1}. The assumption ensures that the steady-state performance $\gamma$ of the policy $\mu$ is stable under small policy changes that arise from models $f_\theta\in F$ that are close to $f$. The sufficient condition provided by Prop.~\ref{prop:suff} applies here in the same way ($\|f_\theta-f\|_\mathrm{op}$ reduces to $|\theta|$) and we therefore conclude that the assumption below is, for example, satisfied for linear dynamical systems with a quadratic, positive definite stage cost.
\begin{assumption}\label{ass:cont}
(Bellman-type inequality) For all small enough $\xi>0$, there exists a cost to go function $V$ (corresponding to $f$ and $\mu$) satisfying the following inequality:
\begin{equation*}
    V(x) \geq \mathrm{E}[l(x,\mu_\theta(x)+n_\mathrm{u})+V(f(x,\mu_\theta(x)+n_\mathrm{u})+n)]-\gamma - L_\mathrm{u} d_\mathrm{u} \sigma_\mathrm{u}^2 - L_\mu \xi^2,
\end{equation*}
for all policies $\mu_\theta$ with $|\theta|<\xi$, for all $x\in \mathbb{R}^{d_\mathrm{x}}$, where $L_\mathrm{u},L_\mu>0$ are constant, $n\sim \mathcal{N}(0,\sigma^2 I)$, and $n_\mathrm{u}\sim \mathcal{N}(0,\sigma_\mathrm{u}^2 I)$.
\end{assumption}
We now prove the main result characterizing policy regret for the setting \texttt{S3}.
\begin{theorem}\label{thm:large1}
Let Ass.~\ref{ass:smoothness1}, Ass.~\ref{ass:persistence2}, and Ass.~\ref{ass:cont} be satisfied and choose $\eta$ and $\sigma_{\mathrm{u}k}^2$ as
\begin{equation*}
\eta \leq \min\Big\{\frac{1}{4M\sigma^2},\frac{1}{2ML^2b^2}\Big\}, \quad \sigma_{\mathrm{u}k}^2=\frac{4}{\eta \bar{c}_\mathrm{e} M \epsilon^2} \left( \frac{2}{\lceil k/M \rceil} + \frac{ p}{(\lceil k/M \rceil)^2}\right).
\end{equation*}
Then, for all $N\geq 2M$ there exists a large enough $p$, such that the policy regret of Alg.~\ref{Alg:S3} is bounded by
\begin{equation*}
\sum_{k=1}^N \mathrm{E}[l(x_k,u_k)] - N\gamma \leq 2 \sqrt{c_\mathrm{r1} (3\mathrm{ln}(N) + Mp)d_\mathrm{u}N}+c_\mathrm{r2},
\end{equation*}
where the constants $c_\mathrm{r1}$ and $c_\mathrm{r2}$ are given by
\begin{equation*}
c_\mathrm{r1}=\frac{8 c_\alpha L_\mu (\bar{L}_\mathrm{V} L^2 +\bar{L}_\mathrm{l}+L_\mathrm{u}) }{\eta \bar{c}_\mathrm{e}}, \quad 
c_\alpha=e^{3 c_2 M},\quad
c_\mathrm{r2}=3M c_\alpha (\bar{L}_\mathrm{V}d_\mathrm{x} \sigma^2/2+c_\mathrm{o}),
\end{equation*}
with
\begin{equation*}
    \epsilon^2=\sqrt{\frac{c_\mathrm{r1} d_\mathrm{u}(3\mathrm{ln}(N)+Mp)}{L_\mu^2 N}}.
\end{equation*}
\end{theorem}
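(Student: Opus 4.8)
The plan is to run the proof of Thm.~\ref{thm:infinite} essentially verbatim, with the explicit $\epsilon$-packing $F_k^\epsilon$ of the model set replaced by a direct control of how sharply the posterior $\pi_k(\theta)\propto e^{-\eta s_k(\theta)}\mathds{1}_{\theta\in\Omega}$ concentrates around $\theta=0$ (recall $f=f_{\theta=0}$). I would let $\epsilon$ play the role of the radius $\xi$ in Ass.~\ref{ass:cont}, and decompose $\mathrm{E}[V(x_{k+1})]$ as in \eqref{eq:decomp} according to the ``good'' event $\{|\theta_k|\le\epsilon\}$ versus the ``bad'' event $\{|\theta_k|>\epsilon\}$. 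On the good event the applied policy $\mu_{\theta_k}$ is within $\epsilon$ of $\mu=\mu_0$, so Ass.~\ref{ass:cont} bounds the one-step change of the Lyapunov function $V$ by $\mathrm{E}[V(x_k)]+\gamma_k-\mathrm{E}[l(x_k,u_k)]+L_\mu\epsilon^2+\mathcal{O}(\sigma_{\mathrm{u}k}^2)$; on the bad event Lemma~\ref{Lemma:intermediate1} gives at-most-exponential growth of $V$ with rate $c_2$. Everything then hinges on the convergence estimate $\mathrm{Pr}(|\theta_k|>\epsilon)\le M^2/(k-M)^2$ and on the final tuning of $\epsilon$.

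The heart of the argument — and the step I expect to be the main obstacle — is this convergence bound, which I would prove by a peeling argument over dyadic shells $S_j=\{\theta\in\Omega:2^j\epsilon<|\theta|\le 2^{j+1}\epsilon\}$, $j=0,1,\dots,\lceil\log_2(1/\epsilon)\rceil$. For $\theta\in S_j$, Ass.~\ref{ass:persistence2} supplies a per-step signal of size at least $c_\mathrm{e}d_\mathrm{u}\sigma_\mathrm{u}^2|\theta|^2\ge 4^jc_\mathrm{e}d_\mathrm{u}\sigma_\mathrm{u}^2\epsilon^2$, so unrolling the moment-generating-function and H\"older estimates behind Lemmas~\ref{lemma:meanVar1} and~\ref{lem:intermediate22} over blocks of $M$ steps exactly as in the proof of Prop.~\ref{Prop:prconv1} shows that the posterior mass placed on a single $\epsilon$-separated point of $S_j$ decays like $\exp(-4^jR_{k'})$, where $R_{k'}:=\tfrac{\eta Mc_\mathrm{e}d_\mathrm{u}}{4}\epsilon^2\sum_{i=1}^{k'}\sigma_{\mathrm{u}(Mi)}^2$ and $k'=\lceil k/M\rceil$, while a volume comparison in $\mathbb{R}^p$ bounds the number of $\epsilon$-separated points in $S_j$ by $(C2^j)^p$. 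With the prescribed $\sigma_{\mathrm{u}k}^2$ one has $R_{k'}\ge 2\ln k'+C'p$, and because $4^j\ge 2^{j+1}\ge j+1$ for $j\ge1$ the total mass of $S_j$ is at most $\exp\!\big(p\ln C+jp\ln 2-4^jR_{k'}\big)\le 2^{-j}(k')^{-2}$; summing the geometric series over $j$ yields $\mathrm{Pr}(|\theta_k|>\epsilon\mid\mathcal{F}_{k'-1})\lesssim (k')^{-2}$, hence $\le M^2/(k-M)^2$ after the $M$-step hold. It is exactly this shell structure (the signal scaling as $4^j$ against a count growing only as $2^{jp}$) that keeps the price of the continuum at $\mathcal{O}(p)$ rather than $\mathcal{O}(p\ln(1/\epsilon))$ in $\sigma_{\mathrm{u}k}^2$. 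The genuinely new subtlety relative to Thm.~\ref{thm:infinite} is that Alg.~\ref{Alg:S3} samples from the \emph{continuous} posterior, so one must compare $\pi_k(S_j)$ to $\pi_k(\{|\theta|\le\epsilon\})$; I would lower-bound the latter by a Laplace-type estimate, using the smoothness/Lipschitzness of $\theta\mapsto f_\theta$ to control $s_k(\theta)-s_k(0)$ for small $|\theta|$ (the sub-Gaussian cross term $\sum_j n_j^\top(f_0-f_\theta)$ handled by the same moment bound as above).

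Given the convergence bound, the remainder is bookkeeping that copies the proof of Thm.~\ref{thm:infinite}. Combining the two branches as in \eqref{eq:decreaseVV} gives
\begin{equation*}
\mathrm{E}[V(x_{k+1})]\le\big(1+c_2\,\mathrm{Pr}(|\theta_k|>\epsilon)\big)\mathrm{E}[V(x_k)]+\gamma_k-\mathrm{E}[l(x_k,u_k)]+L_\mu\epsilon^2+\bar L_\mathrm{u}d_\mathrm{u}\sigma_{\mathrm{u}k}^2+\big(\tfrac{\bar L_\mathrm{V}}{2}d_\mathrm{x}\sigma^2+c_\mathrm{o}\big)\mathrm{Pr}(|\theta_k|>\epsilon),
\end{equation*}
with $\bar L_\mathrm{u}=\bar L_\mathrm{V}L^2+\bar L_\mathrm{l}+L_\mathrm{u}$ and $\gamma_k\le\gamma$. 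Since $\sum_k\mathrm{Pr}(|\theta_k|>\epsilon)\le 3M$, Lemma~\ref{lemma:seq} applies with $c_\alpha=\prod_k(1+c_2\mathrm{Pr})\le e^{3c_2M}$ and yields $\sum_{k=1}^N(\mathrm{E}[l(x_k,u_k)]-\gamma_k)\le NL_\mu\epsilon^2+c_\alpha\bar L_\mathrm{u}d_\mathrm{u}\sum_k\sigma_{\mathrm{u}k}^2+3Mc_\alpha(\tfrac{\bar L_\mathrm{V}}{2}d_\mathrm{x}\sigma^2+c_\mathrm{o})$; bounding $\sum_k\sigma_{\mathrm{u}k}^2\le\tfrac{8}{\eta c_\mathrm{e}d_\mathrm{u}\epsilon^2}(3\ln N+Mp)$ as there turns the right-hand side into $NL_\mu\epsilon^2+c_\mathrm{r1}'\tfrac{3\ln N+Mp}{\epsilon^2}+c_\mathrm{r2}$ with $c_\mathrm{r1}'=\tfrac{8c_\alpha(\bar L_\mathrm{V}L^2+\bar L_\mathrm{l}+L_\mathrm{u})}{\eta c_\mathrm{e}}$ and $c_\mathrm{r2}=3Mc_\alpha(\bar L_\mathrm{V}d_\mathrm{x}\sigma^2/2+c_\mathrm{o})$, and using $\gamma_k\le\gamma$. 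Finally, since $\epsilon$ is a free analysis parameter, optimizing by AM--GM — i.e. taking $\epsilon^2=\sqrt{c_\mathrm{r1}'(3\ln N+Mp)/(L_\mu N)}$ — gives $2\sqrt{L_\mu c_\mathrm{r1}'(3\ln N+Mp)N}+c_\mathrm{r2}=2\sqrt{c_\mathrm{r1}(3\ln N+Mp)N}+c_\mathrm{r2}$ with $c_\mathrm{r1}:=L_\mu c_\mathrm{r1}'$, and this value of $\epsilon$ rewrites as $\epsilon^2=\sqrt{c_\mathrm{r1}(3\ln N+Mp)/(L_\mu^2N)}$, matching the statement; the size requirement on $p$ (equivalently, on $N$) is precisely what forces this $\epsilon$ into the regime ($\epsilon\le1$, and $\epsilon$ small enough) where Ass.~\ref{ass:cont} and the shell decomposition are valid.
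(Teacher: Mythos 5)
Your proposal is correct and reproduces the paper's overall architecture exactly: the same good/bad decomposition of $\mathrm{E}[V(x_{k+1})]$ on $\{|\theta_k|\le\epsilon\}$ versus $\{|\theta_k|>\epsilon\}$, the same use of Ass.~\ref{ass:cont} and Lemma~\ref{Lemma:intermediate1} on the two branches, the same telescoping via Lemma~\ref{lemma:seq}, the same bound $\sum_k\sigma_{\mathrm{u}k}^2\le\frac{8}{\eta c_\mathrm{e}d_\mathrm{u}\epsilon^2}(3\,\mathrm{ln}(N)+Mp)$, and the same AM--GM choice of $\epsilon$, with matching constants. The only place you genuinely diverge is the central concentration estimate $\mathrm{Pr}(|\theta_k|>\epsilon)\le M^2/(k-M)^2$. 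The paper obtains it without any peeling: it lower-bounds the normalizer $\int_\Omega e^{-\eta(s_k(\theta)-\bar s_k)}\,\mathrm{d}\theta\gtrsim|\Omega|e^{-p}$ by a Laplace estimate on a small ball around the minimizer of $s_k$ (this is where the factor $e^p$, and hence the $p$ in $\sigma_{\mathrm{u}k}^2$, comes from), and then bounds the numerator by Fubini plus the \emph{uniform} worst-case signal $c_\mathrm{e}d_\mathrm{u}\sigma_\mathrm{u}^2\epsilon^2$ over all of $\Omega\setminus B_\epsilon$, i.e.\ $\int_{\Omega\setminus B_\epsilon}\mathrm{E}[e^{-\eta(s_k(\theta)-s_k^*)}]\,\mathrm{d}\theta\le|\Omega|\exp(-\tfrac{\eta c_\mathrm{e}d_\mathrm{u}}{4}\epsilon^2\sum_j\sigma_{\mathrm{u}j}^2)$. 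Your dyadic-shell chaining (signal $4^j$ against count $(C2^j)^p$) reaches the same conclusion but does extra work that turns out to be unnecessary; in particular, the $\mathcal{O}(p)$ rather than $\mathcal{O}(p\,\mathrm{ln}(1/\epsilon))$ price is already secured by the volume-ratio lower bound on the normalizer alone, not by exploiting the stronger signal in the outer shells. Your approach is valid (and would give slightly sharper tail control in $|\theta|$ if one needed it), but note two small points of care: the per-shell ``point masses'' must still be normalized by the same Laplace lower bound, and the paper centers its small ball at the \emph{minimizer} of $s_k$ rather than at $\theta=0$, which avoids having to control the random quantity $s_k(0)-\bar s_k$ that your version of the Laplace estimate would require.
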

\begin{proof}
We first argue that the reasoning in Lemma~\ref{lem:intermediate22} applies in a very similar way to setting \texttt{S3}. To that extent, we first define the random variable $p_k$ as follows 
\begin{equation*}
    p_k=\frac{\int_{\Omega\setminus \{\theta: |\theta|\leq \epsilon\}} e^{-\eta (s_k(\theta)-\bar{s}_k)} \mathrm{d}\theta}{\int_{\Omega}e^{-\eta (s_k(\theta)-\bar{s}_k)} \mathrm{d}\theta},
\end{equation*}
where $\mathrm{d}\theta$ denotes the Lebesgue measure, and $\bar{s}_k=\min_{\theta\in \Omega} s_k(\theta)$. However, compared to the discrete setting, where the denumerator was simply bounded below by unity, the situation is more delicate. More precisely, we bound the denumerator from below by $|B_\delta| e^{-\eta h(\delta)}$, where $h(\delta):=\max_{\theta \in B_\delta} s_k(\theta)-\bar{s}_k$ and $B_\delta$ denotes a ball of radius $\delta$ with volume $|B_\delta|$ centered at a minimizer of $s_k(\theta)$. 
From the smoothness of $s_k(\theta)$ we conclude that $h(\delta)=
\mathcal{O}(\delta^2)$ for small $\delta$. Due to our normalization, $\Omega$ is contained in a ball of unit radius and we have $|B_\delta|/|\Omega|\geq |B_\delta|/|B_1|\geq \delta^p$ where $p$ refers to the dimension of $\Omega$. Hence we arrive at the following lower bound
\begin{equation*}
\int_{\Omega}e^{-\eta (s_k(\theta)-\bar{s}_k)} \mathrm{d}\theta \geq |\Omega| \delta^p e^{-h(\delta)}\gtrsim |\Omega| e^{-p},
\end{equation*}
where the second inequality arises from carefully choosing $\delta$ in order to balance the the term $\delta^p$ and $e^{-h(\delta)}$.
This yields the following bound on $p_k$ (which resembles the discrete setting)
\begin{equation*}
p_k\leq \frac{e^{p}}{|\Omega|} \int_{\Omega\setminus \{\theta: |\theta|\leq \epsilon\}} e^{-\eta (s_k(\theta)-s_k^*)} \mathrm{d}\theta,
\end{equation*}
where we have also replaced $\bar{s}_k$ with $s_k^*$ due to the fact that $\bar{s}_k$ is a minimum. Following the same reasoning as in Lemma~\ref{lem:intermediate22} and Prop.~\ref{Prop:prconv1} yields therefore 
\begin{equation*}
    \mathrm{Pr}(|\theta_k|> \epsilon)\leq
    \frac{e^p}{|\Omega|} \int_{\Omega\setminus\{\theta: |\theta|\leq \epsilon\}} \mathrm{E}[e^{-\eta (s_k(\theta)-s_k^*)}] \mathrm{d}\theta\leq e^p \exp\left(-\frac{\bar{c}_\mathrm{e} \eta}{4} \epsilon^2 \sum_{j=1}^{k-M} \sigma_{\mathrm{u}j}^2\right),
\end{equation*}
where Fubini's theorem has been used in the first step to interchange expectation and integration. In addition, due to the modification of $\sigma_{\mathrm{u}k}^2$ compared to Thm.~\ref{thm:infinite} (where now $m(\epsilon)$ is replaced by $e^p$), we find that
\begin{equation*}
    \mathrm{Pr}(|\theta_k|\geq \epsilon)\leq \frac{M^2}{(k-M)^2},
\end{equation*}
for all $k\geq M+1$. We apply the same reasoning as in the proof of Thm.~\ref{thm:small1}, where we now have the case distinction $\mathrm{Pr}(|\theta_k|>\epsilon)$ and $\mathrm{Pr}(|\theta_k|\leq \epsilon)$ (corresponding to $\mathrm{Pr}(i_k\neq i^*)$ and $\mathrm{Pr}(i_k=i^*)$). This concludes that
\begin{equation*}
\sum_{k=1}^N (\mathrm{E}[l(x_k,u_k)] - \gamma_k) \leq L_\mu N \epsilon^2+c_\alpha \bar{L}_\mathrm{u} d_\mathrm{u} \sum_{k=1}^N \sigma_{\mathrm{u}k}^2+c_\alpha (\bar{L}_\mathrm{V} d_\mathrm{x} \sigma^2/2+c_\mathrm{o}) \sum_{k=1}^N \mathrm{Pr}(|\theta_k|>\epsilon),
\end{equation*}
where, as before, 
\begin{equation*}
    \sum_{k=1}^N \mathrm{Pr}(|\theta_k|>\epsilon) \leq 3M.
\end{equation*}
We further note that the sum over $\sigma_{\mathrm{u}k}^2$ yields
    \begin{equation*}
    \sum_{k=1}^N \sigma_{\mathrm{u}k}^2 \leq \frac{8}{\eta \bar{c}_\mathrm{e} \epsilon^2} (1+\mathrm{ln}(N) + Mp)\leq \frac{8}{\eta \bar{c}_\mathrm{e} \epsilon^2} (3\mathrm{ln}(N) + Mp),
\end{equation*}
where $N\geq 2 M\geq 2$ (and therefore $\mathrm{ln}(N)\geq 1/2$) has been used in the second step. Consequently the regret is bounded by
\begin{equation*}
\sum_{k=1}^N \mathrm{E}[l(x_k,u_k)] - N \gamma \leq L_\mu N \epsilon^2+c_\mathrm{r1} d_\mathrm{u} \frac{3\mathrm{ln}(N)+Mp}{L_\mu \epsilon^2}+c_\mathrm{r2}.
\end{equation*}
The choice of $\epsilon$ achieves an optimal trade-off between the first two terms, which yields the desired result.
\qed\end{proof}

\section{Relaxing persistence of excitation}\label{App:relax}
This section discusses the situation when $b\rightarrow \infty$. We first note that when $f^i\in F$ are linear \citep[see also][]{chatzikiriakos2025high}, that is $f^i(x,u)=A^i x+B^i u$, $\mu^i(x)=K^i x$, Ass.~\ref{ass:persistence1} for $b\rightarrow \infty$ is straightforward to verify and we obtain, for example, the following bound for $k\geq 2$:
\begin{align}\label{eq:lower_bd_diff}
\mathrm{E}[|f^i&(x_k,u_k)\!-\!f(x_k,u_k)|^2]\geq \sigma_\mathrm{u}^2 |B^i\!-\!B|_\mathrm{F}^2 + \left(\sigma_\mathrm{u}^2 \underline{\sigma}( W_{k-1}^{\mathrm{c}})+\sigma^2\right)~ |A^i\!-\!A\!+\!(B^i\!-\!B)K^q|_\mathrm{F}^2,
\end{align}
where $K^q$ represents the linear feedback controller corresponding to model $f^q\in F$, and $W_k^\mathrm{c}$ denotes the controllability Gramian (over $k$ steps):
\begin{equation*}
    W_k^c=\sum_{j=0}^{k-1} {(A+BK^q)^{j}}\T B B\T (A+BK^q)^j,
\end{equation*}
where $\underline{\sigma}$ denotes the minimum singular value. Hence, we conclude that Ass.~\ref{ass:persistence1} is satisfied for $M=2$ with $\Delta=\min_{i,q \in \{1,\dots,m\}}|B^i-B|_\text{F}^2 + \underline{\sigma}(W_{k-1}^{\mathrm{c}}) |A^i-A+(B^i-B)K^q|_\mathrm{F}^2$.

We now derive a variant of Lemma~\ref{lemma:meanVar1} that relies on the fact that over finite time $x_k$ and $u_k$ are sub-Gaussian random variables. To this extent, we slightly modify Ass.~\ref{ass:persistence1} as follows:
\begin{assumption}\label{ass:persistencemod}
There exists an integer $M>0$ and a constant $\Delta>0$ such that for any $x_1\in \mathbb{R}^{d_\mathrm{x}}$, $\sigma_\mathrm{u}>0$, and $f^i\in F$, $f^i\neq f$,
\begin{equation*}
\frac{1}{M} \sum_{k=1}^M \mathrm{E}\Big[ |f^i(x_k,u_k)-f(x_k,u_k)|^2\Big] \!\geq \! \Delta (\sigma_\mathrm{u}^2+d_\mathrm{x}\sigma^2)
\end{equation*}
holds, where $x_{k+1}=f(x_k,u_k)+n_k$, $u_k=\mu^q(x_k)+n_{\mathrm{u}k}$ with $n_k\diid \mathcal{N}(0,\sigma^2 I)$, $n_{\mathrm{u}k}\diid \mathcal{N}(0,\sigma_\mathrm{u}^2 I)$, and $q\in \{1,\dots,m\}.$
\end{assumption}

\begin{lemma}
Let Ass.~\ref{ass:persistencemod} be satisfied and let
\begin{equation*}
l_k^i=|x_{k+1}-f^i(x_k,u_k)|^2,
\end{equation*}
for $k=1,2,\dots$, where $x_k,u_k$ denotes the trajectory resulting from Alg.~\ref{Alg:S1}, and $\sigma_{\mathrm{u}k}^2$ is monotonically decreasing. Let $k'\geq 0$ be an integer and define $k=k'M+1$ (i.e., $k$ is a time instance where $i_k$ switches). Then, the following bound holds for all $0<\eta \leq \min\{1/(4 M \sigma^2),\eta_0\}$ and all $j=k, \dots, k+M-1$
\begin{equation*}
\mathrm{E}[e^{-\eta \sum_{j=k}^{k+M-1}(l_j^i-l_j^*)}|x_k]
\leq \exp\left(-\frac{\eta \Delta}{4} \sum_{j=k}^{k+M-1}( \sigma_{\mathrm{u}j}^2 + d_\mathrm{x} \sigma^2/d_\mathrm{u})\right),
\end{equation*}
where 
\begin{equation*}
    \eta_0=\frac{1}{4M(\sigma^2d_\mathrm{x}+\sigma_{\mathrm{u}1}^2d_\mathrm{u})}\cdot \frac{\Delta}{128 M^2 d_\mathrm{u} (2L^{2M} (1+L_\mu)^{2M})^2},
\end{equation*} 
and $f_j$,$f_j^i$ is shorthand notation for $f(x_j,u_j)$ and $f^i(x_j,u_j)$, respectively.
\end{lemma}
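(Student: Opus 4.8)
The plan is to mimic the proof of Lemma~\ref{lemma:meanVar1}, replacing the deterministic bound \eqref{eq:ineqbound} (which used the normalization by $1+|(x_k,u_k)|^2/b^2$) with a probabilistic argument that exploits sub-Gaussianity of the trajectory over the finite window $j=k,\dots,k+M-1$. First I would condition on $x_k$ and, moving forward one step at a time, write $l_j^i-l_j^* = |f_j-f_j^i|^2 + 2 n_j^\top(f_j-f_j^i)$ divided by nothing now (no normalization), exactly as in the unnormalized expansion inside the proof of Lemma~\ref{lemma:meanVar1}. The Gaussian moment generating function in $n_j$ gives, conditionally on $x_j,u_j$,
\begin{equation*}
\mathrm{E}[e^{-\eta(l_j^i-l_j^*)}\mid x_j,u_j] \leq \exp\left(2\eta^2\sigma^2 |f_j-f_j^i|^2 - \eta |f_j-f_j^i|^2\right) \leq \exp\left(-\tfrac{\eta}{2}|f_j-f_j^i|^2\right),
\end{equation*}
valid for $\eta\leq 1/(4\sigma^2)$, hence certainly for $\eta\leq 1/(4M\sigma^2)$. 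Chaining these over the $M$ steps via H\"older's inequality (as in \eqref{eq:varboundtmp1}) reduces everything to controlling $\mathrm{E}[e^{-\tfrac{\eta M}{2}|f_j-f_j^i|^2}\mid x_k]$ and ultimately a ``Poissonian'' inequality applied to the nonnegative random variable $|f_j-f_j^i|^2$.

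The key difference from the $b<\infty$ case is that $|f_j-f_j^i|^2$ is no longer bounded; instead, by the Lipschitz continuity of $f,f^i$ and $\mu^q$ together with $x_{k+1}=f(x_k,u_k)+n_k$, one has $|f_j-f_j^i| \leq 2L|(x_j,u_j)| + \text{const}$, and $|(x_j,u_j)|$ grows at most like $L^{M}(1+L_\mu)^{M}$ times $|x_k|$ plus accumulated noise over at most $M$ steps. Since $x_k$ is fixed and the noise increments $n_{k-1},\dots$ and $n_{\mathrm{u}k},\dots$ are Gaussian, $|(x_j,u_j)|^2$ is sub-exponential with a controlled Orlicz norm, and therefore $|f_j-f_j^i|^2$ has sub-exponential tails with parameter on the order of $(2L^{M}(1+L_\mu)^{M})^2 (d_\mathrm{x}\sigma^2 + d_\mathrm{u}\sigma_{\mathrm{u}1}^2)$. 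For a nonnegative sub-exponential random variable $X$ with $\mathrm{E}[X]=\mu_X$ and parameter $\nu$, the standard bound $\mathrm{E}[e^{-tX}]\leq e^{-t\mu_X/2}$ holds for $t \leq 1/(C\nu)$ for a universal constant $C$; applying this with $t=\eta M/2$ and $X=|f_j-f_j^i|^2$ gives $\mathrm{E}[e^{-\tfrac{\eta M}{2}|f_j-f_j^i|^2}\mid x_k]\leq \exp(-\tfrac{\eta M}{4}\mathrm{E}[|f_j-f_j^i|^2\mid x_k])$ as long as $\eta$ is below the threshold $\eta_0$ displayed in the statement (the factor $128 M^2$ and the $(2L^{2M}(1+L_\mu)^{2M})^2$ there are exactly the universal constant times the sub-exponential parameter). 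Summing the resulting per-step exponents over $j$ and invoking Ass.~\ref{ass:persistencemod} to replace $\tfrac1M\sum_j \mathrm{E}[|f_j-f_j^i|^2]$ by $c_\mathrm{e}(d_\mathrm{u}\sigma_{\mathrm{u}j}^2 + d_\mathrm{x}\sigma^2)$ (using that $\sigma_{\mathrm{u}j}$ is decreasing, so the per-step persistence bound can be lower-bounded termwise) yields the claimed inequality.

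I expect the main obstacle to be making the sub-Gaussian/sub-exponential bookkeeping clean and quantitative: one has to (i) establish that $|(x_j,u_j)|^2$, conditioned on $x_k$ and over the finite horizon of length $M$, is sub-exponential with the stated parameter $\sim M^2 (2L^{M}(1+L_\mu)^{M})^2(d_\mathrm{x}\sigma^2+d_\mathrm{u}\sigma_{\mathrm{u}1}^2)$ — this requires unrolling the recursion $x_{j+1}=f(x_j,\mu^q(x_j)+n_{\mathrm{u}j})+n_j$, bounding $|x_{j+1}|\leq L(1+L_\mu)|x_j| + L|n_{\mathrm{u}j}| + |n_j| + \text{const}$, and summing a geometric series in $L(1+L_\mu)$ — and (ii) track constants carefully enough that the admissible $\eta_0$ comes out exactly as stated. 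A secondary subtlety is that we need the bound to hold with the conditioning on $x_k$ only (not on all of $x_j,u_j$): this is handled by the tower property, first taking the conditional Gaussian expectation over each $n_j$ given $(x_j,u_j)$, then taking the outer expectation over the trajectory given $x_k$ and applying the Poissonian bound to the random variable $|f_j-f_j^i|^2$ whose randomness comes only from $x_k$-measurable-plus-noise quantities; H\"older's inequality lets us do this step by step exactly as in \eqref{eq:varboundtmp1}. Everything else — the Gaussian MGF computation, the H\"older chaining, the final appeal to Ass.~\ref{ass:persistencemod} — is routine and parallels the $b<\infty$ proof already given.
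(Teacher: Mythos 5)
Your proposal follows essentially the same route as the paper's proof: H\"older's inequality to decouple the $M$ steps, the Gaussian MGF to dispose of the $n_j$-linear term, sub-Gaussianity of $f_j-f_j^i$ as a Lipschitz function of the finitely many Gaussian noise variables over the window (so that $|f_j-f_j^i|^2$ is sub-exponential), a sub-exponential MGF bound to pass from $\mathrm{E}[e^{-tX^2}]$ to $e^{-t\,\mathrm{E}[X^2]/2}$, and finally Ass.~\ref{ass:persistencemod}. The one caveat is that your quoted fact $\mathrm{E}[e^{-tX}]\le e^{-t\mu_X/2}$ for $t\le 1/(C\nu)$ is not true for a generic nonnegative sub-exponential $X$ (the admissible range of $t$ scales like $\mu_X/\nu^2$, not $1/\nu$); the paper closes this exactly as you anticipate in your constant-tracking remark, by keeping the quadratic MGF term explicit and absorbing it into the mean via the persistence-of-excitation lower bound, which is what produces the stated $\eta_0$.
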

\begin{proof}
Without loss of generality we set $k=1$ and $k'=0$ (the proof follows exactly the same steps for $k'>0$). We first note that the random variables $x_j$ and $u_j$ for $j\geq k$ are Lipschitz continuous functions of the noise variables $\{n_q,n_{\mathrm{u}q}\}_{q=1}^{M-1}$. We define the random variable $X_j:=|f(x_j,u_j)-f^i(x_j,u_j)|/\sqrt{2}$ and note that $X_j$ is sub-Gaussian with variance proxy 
\begin{equation*}
    \tilde{\sigma}^2:=2 L^{2M} (1+L_\mu)^{2M} M (\sigma_{\mathrm{u}1}^2 d_\mathrm{u}+\sigma^2 d_\mathrm{x}),
\end{equation*}
due to the fact that there are at most $M$ steps between $x_{1}$ and $x_j$. We will simplify the notation by introducing the following variables
\begin{align*}
\tilde{L}&:=2 L^{2M} (1+L_\mu)^{2M}, 
\sigma_\mathrm{e}^2:=M(\sigma_{\mathrm{u}1}^2 d_\mathrm{u}+\sigma^2 d_\mathrm{x}),
\end{align*}
such that $\tilde{\sigma}^2=\tilde{L} \sigma_\mathrm{e}^2$ and $128 M^2 \tilde{\sigma}^2 \eta_0=\Delta/(4 \tilde{L} d_\mathrm{u})$. The previous result exploits the fact that a $L_\mathrm{v}$-Lipschitz function of a set of $p_\mathrm{v}$ independent standard Gaussian random variables is sub-Gaussian with variance proxy $p_\mathrm{v} L_\mathrm{v}$ \citep[see, e.g.,][Ch.2.3]{wainwright}. By following the same argument as in Prop.~\ref{Prop:prconv1} and Lemma.~\ref{lemma:meanVar1} we arrive at
\begin{align*}
\mathrm{E}[e^{-\eta \sum_{j=1}^{M} (l_j^i-l_j^*)}]&\leq \left(\prod_{j=1}^M \mathrm{E}[e^{-\eta M (l_j-l_j^*)}]\right)^{1/M}\\
&\leq \left(\prod_{j=1}^{M}\mathrm{E}[e^{-\eta M |f_j-f^i_j|^2/2}]\right)^{1/M}\\
&\leq \left(\prod_{j=1}^M \mathrm{E}[e^{-\eta M X_j^2}] \right)^{1/M},
\end{align*}
where we used the shorthand notation $f_j,f_j^i$ as in the statement of the lemma and the fact $\eta \leq 1/(4M\sigma^2)$. The random variables $X_j$ are sub-Gaussian with variance proxy $\tilde{\sigma}^2$ and therefore $X_j^2-\mathrm{E}[X_j^2]$ are sub-Exponential with parameter $16 \tilde{\sigma}^2$. As a result, we can simplify the previous inequality to
\begin{equation*}
    \mathrm{E}[e^{-\eta \sum_{j=1}^{M} (l_j^i-l_j^*)}]\leq \left(\prod_{j=1}^M
    e^{-\eta M \mathrm{E}[X_j^2]+\eta  \tilde{\sigma}^2 \Delta/(4\tilde{L} d_\mathrm{u})} \right)^{1/M},
\end{equation*}
since $128 M\eta \tilde{\sigma}^2 \leq \Delta/(4\tilde{L} d_\mathrm{u})$ by our choice of $\eta_0$.
As a result of Ass.~\ref{ass:persistencemod} we infer
\begin{equation*}
    \sum_{j=1}^M \mathrm{E}[X_j^2]\geq M \Delta (\sigma_{\mathrm{u}1}^2+d_\mathrm{x} \sigma^2)/2 \geq \Delta \sigma_\mathrm{e}^2/(2d_\mathrm{u}),
\end{equation*}
and therefore
\begin{equation*}
\sum_{j=1}^M\mathrm{E}[X_j^2] - \frac{\Delta \tilde{\sigma}^2}{4 \tilde{L} d_\mathrm{u}}\geq \sigma_\mathrm{e}^2 (\frac{\Delta}{2d_\mathrm{u}}-\frac{\Delta}{4 d_\mathrm{u}})=\frac{\sigma_\mathrm{e}^2 \Delta}{4d_\mathrm{u}}.
\end{equation*}
This establishes
\begin{equation*}
    \mathrm{E}[e^{-\eta \sum_{j=1}^{M} (l_j^i-l_j^*)}]\leq e^{-\eta \Delta \sigma_\mathrm{e}^2/(4 d_\mathrm{u})},
\end{equation*}
and yields the desired result.
\qed\end{proof}

The conclusion from the setting with linear dynamics can be generalized to nonlinear systems as follows. In order to simplify the presentation we consider the situation where the process noise is absent ($\sigma=0$); the same rationale applies when $\sigma>0$. The \zhrev{following} result demonstrates that Ass.~\ref{ass:persistence1} is generic and holds for a broad class of nonlinear dynamics. This result also highlights a close connection between controllability and the required notion of persistence of excitation.
\begin{proposition}\label{Prop:NonlinGeneralization}
Let $x\in \mathbb{R}^{d_\mathrm{x}}$ and $q\in \{1,\dots,m\}$ be fixed. Then, there exists a constant $\Delta'>0$ such that
\begin{equation*}
\mathrm{E}[|f^i(x_k,u_k)-f(x_k,u_k)|^2]\geq \Delta' \sigma_\mathrm{u}^2    
\end{equation*}
for all small enough $\sigma_\mathrm{u}>0$ if either of the two inequalities are satisfied
\begin{gather*}
|f^i(\bar{x}_k,\mu^q(\bar{x}_k))-f(\bar{x}_k,\mu^q(\bar{x}_k))|^2 > 0,\qquad 
\underline{\sigma}( W_{k-1}^{\mathrm{c}})~ |A_k^i-A_k|_\mathrm{F}^2 
+|B_k^i-B_k|_\mathrm{F}^2>0,
\end{gather*}
where $x_k$ is defined recursively via $x_1=x$, $x_{j+1}=f(x_j,\mu^q(x_j)+n_{\mathrm{u}j})$ with $n_{\mathrm{u}j}\sim \mathcal{N}(0,\sigma_{\mathrm{u}j}^2 I)$ $j=1,\dots,k-1$ and
\begin{align*}
    A_k&:=\left.\frac{\partial}{\partial x} f(x,\mu^q(x))\right|_{x=\bar{x}_k}, \quad A_k^i:=\left.\frac{\partial}{\partial x} f^i(x,\mu^q(x))\right|_{x=\bar{x}_k}, \\ 
    B_k&:= \left.\frac{\partial}{\partial u} f(x,u)\right|_{x=\bar{x}_k,u=\mu^q(\bar{x}_k)}, \quad B_k^i:= \left.\frac{\partial}{\partial u} f(x,u)\right|_{x=\bar{x}_k,u=\mu^q(\bar{x}_k)},\\
    W^\mathrm{c}_{k}&:=\sum_{j=1}^{k-1} A_{k-1} A_{k-2} \dots A_{j+1} B_j B_j\T A_{j+1}\T\dots A_{k-2}\T A_{k-1}\T.
\end{align*}
Moreover, $\bar{x}_k$ corresponds to the noise-free trajectory and is defined via $\bar{x}_1=x, \bar{x}_{j+1}=f(\bar{x}_j,\mu^q(\bar{x}_j))$.
\end{proposition}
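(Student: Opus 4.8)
The plan is to compute the expectation $\mathrm{E}[|f^i(x_k,u_k)-f(x_k,u_k)|^2]$ by Taylor-expanding around the noise-free trajectory $\bar{x}_k$ and retaining the leading-order term in $\sigma_\mathrm{u}$. First I would write $x_k = \bar{x}_k + \delta_k$, where $\delta_k$ collects the effect of the injected excitation noise $n_{\mathrm{u}1},\dots,n_{\mathrm{u}(k-1)}$. Linearizing the recursion $x_{j+1}=f(x_j,\mu^q(x_j)+n_{\mathrm{u}j})$ around $\bar{x}_j$ yields, to first order, $\delta_{j+1} = A_j \delta_j + B_j n_{\mathrm{u}j} + \mathcal{O}(|\delta_j|^2+|n_{\mathrm{u}j}|^2)$, so that $\delta_k = \sum_{j=1}^{k-1} A_{k-1}\cdots A_{j+1} B_j n_{\mathrm{u}j} + (\text{higher order})$. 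Since $n_{\mathrm{u}j}\sim\mathcal{N}(0,\sigma_{\mathrm{u}j}^2 I)$ are independent, this gives $\mathrm{E}[\delta_k \delta_k\T] = \sigma_\mathrm{u}^2 W_{k-1}^\mathrm{c} + o(\sigma_\mathrm{u}^2)$ after factoring out the (uniform) noise scale, using the definition of the controllability-type Gramian $W_{k-1}^\mathrm{c}$ in the statement.

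Next I would expand the quantity of interest. Write $g(x) := f^i(x,\mu^q(x)) - f(x,\mu^q(x))$ and $u_k = \mu^q(x_k) + n_{\mathrm{u}k}$; then
\begin{equation*}
f^i(x_k,u_k) - f(x_k,u_k) = g(x_k) + \big(\nabla_u f^i - \nabla_u f\big)\big|_{(\bar{x}_k,\mu^q(\bar{x}_k))} n_{\mathrm{u}k} + (\text{h.o.t.}).
\end{equation*}
There are two cases. If $g(\bar{x}_k)\neq 0$, then $\mathrm{E}[|g(x_k)|^2]\to |g(\bar{x}_k)|^2>0$ as $\sigma_\mathrm{u}\to 0$, which already dominates $d_\mathrm{u}\sigma_\mathrm{u}^2$ for small $\sigma_\mathrm{u}$, and we are done with $c_\mathrm{e}'$ absorbing the ratio. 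If $g(\bar{x}_k)=0$, then $g(x_k)=\nabla g(\bar{x}_k)\delta_k + \mathcal{O}(|\delta_k|^2)$ where $\nabla g(\bar{x}_k) = A_k^i - A_k$ (the derivative of $g$ in $x$ at $\bar{x}_k$, using the chain rule through $\mu^q$ — note this already includes the feedback term). Then the leading $\sigma_\mathrm{u}^2$ term of $\mathrm{E}[|f^i(x_k,u_k)-f(x_k,u_k)|^2]$ is
\begin{equation*}
\mathrm{tr}\!\Big( (A_k^i-A_k)\, \mathrm{E}[\delta_k\delta_k\T]\, (A_k^i-A_k)\T \Big) + \sigma_\mathrm{u}^2\, \mathrm{tr}\!\big( (B_k^i-B_k)(B_k^i-B_k)\T \big) + o(\sigma_\mathrm{u}^2),
\end{equation*}
since the cross term between $\delta_k$ (built from $n_{\mathrm{u}1},\dots,n_{\mathrm{u}(k-1)}$) and the fresh noise $n_{\mathrm{u}k}$ vanishes by independence. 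Substituting $\mathrm{E}[\delta_k\delta_k\T]\succeq \sigma_\mathrm{u}^2\,\underline{\sigma}(W_{k-1}^\mathrm{c})\,I \cdot(1+o(1))$ — or more carefully bounding the trace below by $\sigma_\mathrm{u}^2\,\underline{\sigma}(W_{k-1}^\mathrm{c})\,|A_k^i-A_k|_\mathrm{F}^2$ — gives the lower bound $\sigma_\mathrm{u}^2\big(\underline{\sigma}(W_{k-1}^\mathrm{c})|A_k^i-A_k|_\mathrm{F}^2 + |B_k^i-B_k|_\mathrm{F}^2\big)(1+o(1))$, which is $\geq c_\mathrm{e}' d_\mathrm{u}\sigma_\mathrm{u}^2$ for small $\sigma_\mathrm{u}$ whenever the second hypothesis holds.

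The main obstacle is making the expansions rigorous and uniform: one must control the $\mathcal{O}(|\delta_k|^2)$ and higher-order remainder terms and verify that they are genuinely $o(\sigma_\mathrm{u}^2)$ rather than contributing at the same order. This requires that $f$, $f^i$, $\mu^q$ be $C^2$ near $\bar{x}_k$ (the smoothness assumptions already supply Lipschitz and smoothness constants, so this is available), plus a quantitative bound of the form $\mathrm{E}[|\delta_k|^4] = \mathcal{O}(\sigma_\mathrm{u}^4)$, which follows from the Gaussianity of the excitation and the Lipschitz dependence of the trajectory on the noise over the finite window of $k-1\leq M-1$ steps (exactly the sub-Gaussian tail control used in App.~\ref{App:relax}). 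I would also need to be slightly careful that "for all small enough $\sigma_\mathrm{u}$" is allowed to depend on $x$ and $q$, which the statement explicitly permits. Once the remainder is pinned down, the two displayed sufficient conditions each force the leading term to be bounded below by a positive multiple of $d_\mathrm{u}\sigma_\mathrm{u}^2$, completing the proof.
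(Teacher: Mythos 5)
Your proposal follows essentially the same route as the paper's proof: the same case distinction on whether the zeroth-order difference $f^i(\bar{x}_k,\mu^q(\bar{x}_k))-f(\bar{x}_k,\mu^q(\bar{x}_k))$ vanishes, the same first-order Taylor expansion of the perturbed trajectory around the noise-free one yielding $x_k-\bar{x}_k=\sum_{j=1}^{k-1}A_{k-1}\cdots A_{j+1}B_j n_{\mathrm{u}j}+o(\cdot)$ and hence the Gramian $W_{k-1}^\mathrm{c}$, and the same use of independence to kill cross terms before lower-bounding the trace. The only difference is presentational (you package the state-dependence into $g(x)=f^i(x,\mu^q(x))-f(x,\mu^q(x))$, and you sketch the remainder control via fourth-moment bounds where the paper carries out an explicit Gaussian integral estimate for $\mathrm{E}[o(|n_{\mathrm{u}i}|^2)]=o(\sigma_\mathrm{u}^2)$); the argument is correct.
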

\begin{proof}
    We start by considering the situation where $f^i(\bar{x}_k,\mu^q(\bar{x}_k))\neq f(\bar{x}_k,\mu^q(\bar{x}_k))$. We note that 
    \begin{equation*}
        \mathrm{E}[|f^i(x_k,\mu^q(x_k)+n_{\mathrm{u}k})-f(x_k,\mu^q(x_k)+n_{\mathrm{u}k})|^2]
    \end{equation*}
    continuously depends on $\sigma_\mathrm{u}$ and converges to $|f^i(\bar{x}_k,\mu^q(\bar{x}_k))-f(\bar{x}_k,\mu^q(\bar{x}_k))|^2>0$ as $\sigma_{\mathrm{u}} \rightarrow 0$. Hence the desired inequality is clearly satisfied for all small enough $\sigma_\mathrm{u}>0$.

    Next we consider the situation where $f^i(\bar{x}_k,\mu^q(\bar{x}_k))= f(\bar{x}_k,\mu^q(\bar{x}_k))$ and apply Taylor's theorem as follows:
    \begin{equation*}
        f(x_k,\mu^q(x_k)+n_{\mathrm{u}k})=f(\bar{x}_k,\mu^q(\bar{x}_k)) + A_k (\bar{x}_k-x_k)+ B_k n_{\mathrm{u}k}+o(\bar{x}_k-x_k,n_{\mathrm{u}k}),
    \end{equation*}
    where $o$ is a continuous function that satisfies $o(\xi)/|\xi| \rightarrow 0$ for $|\xi|\rightarrow 0$. We therefore conclude
    \begin{align*}
        |f^i(x_k,u_k)-f(x_k,u_k)|=\Big|(A^i_k-A_k)(x_k-\bar{x}_k) +(B_k^i-B_k) n_{\mathrm{u}k}+o(\bar{x}_k-x_k,n_{\mathrm{u}k})\Big|,
    \end{align*}
    where we slightly abused notation to redefine the reminder term (we will frequently do so throughout the remainder of the proof). We further apply Taylor's theorem to express $x_k-\bar{x}_k$ as
    \begin{align*}
        x_k-\bar{x}_k=\sum_{j=1}^{k-1} A_{k-1} \dots A_{j+1} B_j n_{\mathrm{u}j}+ o(n_{\mathrm{u}1},\dots,n_{\mathrm{u}k-1}).
    \end{align*}
    By combining the previous two equations, squaring, and taking expectations, we arrive at
    \begin{align*}
    \mathrm{E}[|f^i(x_k,u_k)-f(x_k,u_k)|^2]\geq (|A_k^i-A_k|_\mathrm{F}^2 \underline{\sigma}(W_{k-1}^\mathrm{c})+|B_k^i-B_k|_\mathrm{F}^2) \sigma_\mathrm{u}^2 - o(\sigma_\mathrm{u}^2),
    \end{align*}    
    where we took advantage of the fact that $n_{\mathrm{u}1}, \dots,n_{\mathrm{u}k}$ are mutually independent. We further used the following reasoning: i) independence between $n_{\mathrm{u}i}$ and $n_{\mathrm{u}j}$, $i\neq j$, concludes 
    \begin{equation*}
        \mathrm{E}[o(n_{\mathrm{u}i})n_{\mathrm{u}j}\T]=\mathrm{E}[o(n_{\mathrm{u}i})\mathrm{E}[n_{\mathrm{u}j}\T~|~n_{\mathrm{u}i}]]=0.
    \end{equation*}
    ii) for $i=j$ we have
    \begin{equation*}
        \mathrm{E}[o(|n_{\mathrm{u}i}|^2)]=\int_{\mathbb{R}^{d_\mathrm{u}}} o(|\xi|^2) \frac{1}{(\sqrt{2\pi}\sigma_{\mathrm{u}})^{d_\mathrm{u}}} e^{-|\xi|^2/(2\sigma_{\mathrm{u}}^2)} \mathrm{d}\xi \leq \underbrace{\int_{\mathbb{R}^{d_\mathrm{u}}} o(|\xi|^2) \frac{2^q}{\sqrt{2\pi}^{d_\mathrm{u}}}  \frac{q!}{|\xi|^{2 q}}\mathrm{d}\xi}_{=\mathrm{const.}}~ \sigma_\mathrm{u}^{2q-d_\mathrm{u}},
    \end{equation*}
    for any integer $q\geq 0$ large enough, where we have bounded the exponential using $e^{-\xi}\leq q!/\xi^q$ for all $\xi\geq 0$. This implies that $\mathrm{E}[o(|n_{\mathrm{u}i}|^2)] = o(\sigma_{\mathrm{u}}^2)$ (in fact $\mathrm{E}[o(|n_{\mathrm{u}i}|^2)]$ decays much faster for small $\sigma_\mathrm{u}$), which leads to the desired result.
\qed\end{proof}

\section{Discussion of realizability assumption}\label{app:real}
\revision{The realizability assumption, i.e., $f\in F$, can be relaxed in multiple ways and is an active area of research in statistical learning theory. We refer the reader to \cite{williamson} for a detailed and thorough discussion. In the following we provide two simple conditions for relaxing the realizability condition. We begin with the Setting \texttt{S1}, followed by Setting \texttt{S2} and \texttt{S3}.}

\subsection{Setting \texttt{S1}}\label{app:real1}
\par\noindent{\bf Assumption} \revision{(Setting \texttt{S1}) There exists a constant $c_\mathrm{r}>0$ and a candidate model $f^*\in \{f^1,\dots,f^m\}$ such that 
\begin{equation*}
    (1+2c_\mathrm{r}) |f^*(x,u)-f(x,u)| \leq |\bar{f}(x,u)-f(x,u)|,
\end{equation*}
for all $(x,u)\in \mathbb{R}^{d_\text{x}´+d_\text{u}}$ and for all $\bar{f}\in \{f^1,\dots,f^m\}\setminus\{f^*\}$.}
\par\noindent
\revision{The assumption implies that there is a single candidate model that is more accurate than the others over the entire state-action space by a fixed margin $2 c_\mathrm{r}$. The graphical intuition of the assumption is shown in Fig.~\ref{fig:visual}. It is easy to see that the assumption implies\footnote{The converse is not true, so the assumption is slightly stronger than strictly necessary. The condition \eqref{eq:condr1} has a clear geometric interpretation, which is, however, slightly more involved.}
\begin{equation}
    2 (f^*(x,u)-f(x,u))\T (f^i(x,u)-f^*(x,u)) \geq - \frac{1}{1+c_\mathrm{r}} |f^*(x,u)-f^i(x,u)|^2. \label{eq:condr1}
\end{equation}
As a consequence, the proof of Lemma~\ref{lemma:meanVar1} remains valid with minor modifications. More precisely, we obtain
\begin{align*}
    l_k^i-l_k^* &= \frac{2(f^*-f)\T (f^i - f^*)+ |f^*-f^i|^2 + 2(f^*-f^i)\T n_k}{1+|(x_k,u_k)|^2/b^2}\\
    &\geq \frac{\frac{c_\mathrm{r}}{1+c_\mathrm{r}}|f^*-f^i|^2 + 2(f^*-f^i)\T n_k}{1+|(x_k,u_k)|^2/b^2},
\end{align*} 
where the dependency of $f$, $f^i$, and $f^*$ on $(x_k,u_k)$ has been omitted. The remaining part of the proof of Lemma~\ref{lemma:meanVar1} applies in the same way with minor changes in the parameter $\eta$. This also implies that the remaining results, i.e., Thm.~\ref{thm:one} and Thm.~\ref{thm:small1} follow in an analogous manner,}
\zhrevision{with a benchmark given by the cumulative cost of any policy based on the above closest model $f^*$.
}


\subsection{Setting \texttt{S2} and \texttt{S3}}\label{app:real2}
\revision{For the Setting~\texttt{S2} and Setting~\texttt{S3} the realizability assumption can be relaxed in the following way: }
\par\noindent{\bf Assumption} \revision{(Setting \texttt{S2} and \texttt{S3}) There exists a candidate model $f^*\in F$ such that
\begin{equation*}
    (f^*(x,u)-f(x,u))\T (\bar{f}(x,u)-f^*(x,u))\geq 0,
\end{equation*}
holds for all $(x,u)\in \mathbb{R}^{d_\text{x}´+d_\text{u}}$ and for all $\bar{f}\in F$.}
\par\noindent
\revision{The geometric interpretation of the assumption is shown in Fig.~\ref{fig:visual}, and the assumption is again slightly stronger than necessary (see \eqref{eq:condr1}). As a result of the assumption the proof of Lemma~\ref{lemma:meanVar1} applies in exactly the same way as
\begin{align*}
    l_k^i-l_k^* &= \frac{2(f^*-f)\T (f^i - f^*)+ |f^*-f^i|^2 + 2(f^*-f^i)\T n_k}{1+|(x_k,u_k)|^2/b^2}\\
    &\geq \frac{|f^*-f^i|^2 + 2(f^*-f^i)\T n_k}{1+|(x_k,u_k)|^2/b^2},
\end{align*} 
where the dependency of $f$, $f^i$, and $f^*$ on $(x_k,u_k)$ has been omitted. Hence, the remaining results, i.e., Thm.~\ref{thm:two} and Thm.~\ref{thm:three} apply in exactly the same way,}
\zhrevision{where the benchmark therein corresponds to the cumulative cost incurred by any policy associated with $f^*$.
}

\begin{figure}
    \centering
    \begin{subfigure}{0.5\textwidth}
        \centering
        \setlength{\figurewidth}{.75\columnwidth}
        \setlength{\figureheight}{.4\columnwidth}
        \begin{tikzpicture}[scale=1.3]

  \coordinate (f) at (0,0);
  \coordinate (fstar) at (1,0.2);

  \coordinate (p1) at (2,0);
  \coordinate (p2) at (-2,0.5);
  \coordinate (p3) at (-1.5,-1.2);
  \coordinate (p4) at (1.8,1.3);
  \coordinate (p5) at (-1.4,1.6);
  \coordinate (p6) at (0,-1.7);
  \coordinate (p7) at (1.6,-1.5);
  \coordinate (p8) at (-2,-1.8);

  \draw[red,dashed] (f) circle (1.3);

  \draw[->,red] (f) -- ++(70:1.3) coordinate (arrowtip);

  \node[red,above=1pt of arrowtip] {$(1+2c_{\mathrm r})|f-f^*|$};

  \fill[red] (f) circle (1.5pt) node[above left] {\footnotesize{$f$}};
  \fill (fstar) circle (1.5pt) node[above left] {\footnotesize{$f^*$}};

  \fill (p1) circle (1.2pt);
  \fill (p2) circle (1.2pt);
  \fill (p3) circle (1.2pt);
  \fill (p4) circle (1.2pt);
  \fill (p5) circle (1.2pt);
  \fill (p6) circle (1.2pt);
  \fill (p7) circle (1.2pt);
  \fill (p8) circle (1.2pt);

\end{tikzpicture}
        \caption{Setting \texttt{S1}}
        \label{fig:vizS1}
    \end{subfigure}%
    \begin{subfigure}{0.5\textwidth}
        \centering
        \setlength{\figurewidth}{.75\columnwidth}
        \setlength{\figureheight}{.4\columnwidth}
        \def\svgwidth{.8\linewidth}
\begingroup%
  \makeatletter%
  \providecommand\color[2][]{%
    \errmessage{(Inkscape) Color is used for the text in Inkscape, but the package 'color.sty' is not loaded}%
    \renewcommand\color[2][]{}%
  }%
  \providecommand\transparent[1]{%
    \errmessage{(Inkscape) Transparency is used (non-zero) for the text in Inkscape, but the package 'transparent.sty' is not loaded}%
    \renewcommand\transparent[1]{}%
  }%
  \providecommand\rotatebox[2]{#2}%
  \newcommand*\fsize{\dimexpr\f@size pt\relax}%
  \newcommand*\lineheight[1]{\fontsize{\fsize}{#1\fsize}\selectfont}%
  \ifx\svgwidth\undefined%
    \setlength{\unitlength}{239.04331588bp}%
    \ifx\svgscale\undefined%
      \relax%
    \else%
      \setlength{\unitlength}{\unitlength * \real{\svgscale}}%
    \fi%
  \else%
    \setlength{\unitlength}{\svgwidth}%
  \fi%
  \global\let\svgwidth\undefined%
  \global\let\svgscale\undefined%
  \makeatother%
  \begin{picture}(1,0.98139972)%
    \lineheight{1}%
    \setlength\tabcolsep{0pt}%
    \put(0,0){\includegraphics[width=\unitlength,page=1]{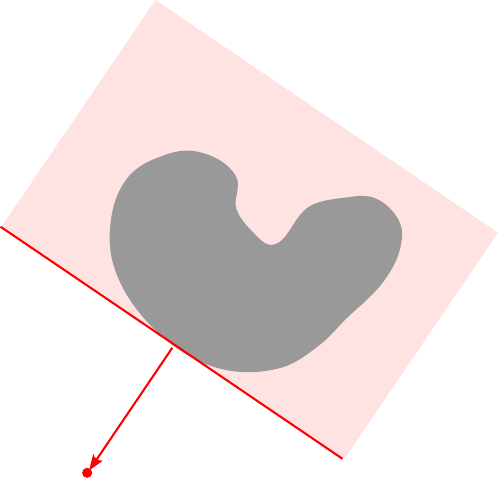}}%
    \put(0.20649611,0.0040036){\color[rgb]{1,0,0}\makebox(0,0)[t]{\lineheight{1.25}\smash{\begin{tabular}[t]{c}\footnotesize{$f$}\end{tabular}}}}%
    \put(0.41567938,0.3020668){\color[rgb]{0,0,0}\makebox(0,0)[t]{\lineheight{1.25}\smash{\begin{tabular}[t]{c}\footnotesize{$f^*$}\end{tabular}}}}%
    \put(0,0){\includegraphics[width=\unitlength,page=2]{tikz/rect2536.pdf}}%
    \put(0.32155422,0.56875485){\color[rgb]{0,0,0}\makebox(0,0)[t]{\lineheight{1.25}\smash{\begin{tabular}[t]{c}\footnotesize{$\bar{f}$}\end{tabular}}}}%
    \put(0.73505976,0.51520485){\color[rgb]{0,0,0}\makebox(0,0)[t]{\lineheight{1.25}\smash{\begin{tabular}[t]{c}\footnotesize{$F$}\end{tabular}}}}%
    \put(0,0){\includegraphics[width=\unitlength,page=3]{tikz/rect2536.pdf}}%
  \end{picture}%
\endgroup%

        \caption{Setting \texttt{S2} and \texttt{S3}}
        \label{fig:vizS2}
    \end{subfigure}
    \caption{\revision{The figure provides a graphical illustration on the two assumptions in App.~\ref{app:real1} and App.~\ref{app:real2}. In the Setting~\texttt{S2} and  \texttt{S3} the set $F$ is assumed to be connected, however, this does not need to be the case. Convexity of $F$ is a sufficient condition for the assumption in App.~\ref{app:real2}.}}
    \label{fig:visual}
\end{figure}

\end{document}